\documentclass[final,12pt]{colt2026} 

\title[Limitations of SGD for Multi-Index Models Beyond Statistical Queries]{Limitations of SGD for Multi-Index Models Beyond Statistical Queries}
\usepackage{times}

\usepackage[T1]{fontenc}
\usepackage{thm-restate}
\usepackage{enumitem}
\usepackage{amssymb}

\newcommand{\poly}{\mathrm{poly}}

\newcommand{\reals}{\mathbb{R}}
\newcommand{\R}{\mathbb{R}}

\newcommand{\N}{\mathbb{N}}

\newcommand{\zero}{\boldsymbol{0}}

\newcommand{\abs}[1]{\left| #1 \right|}

\newcommand{\vertiii}[1]{{\left\vert\kern-0.25ex\left\vert\kern-0.25ex\left\vert #1 
    \right\vert\kern-0.25ex\right\vert\kern-0.25ex\right\vert}}

\renewcommand{\L}{\mathcal{L}}
\newcommand{\Ecal}{\mathcal{E}}

\newcommand{\bmu}{\boldsymbol{\mu}}

\usepackage{xcolor}

\newcommand{\beq}{\begin{eqnarray*}}
\newcommand{\eeq}{\end{eqnarray*}}
\newcommand{\beqn}{\begin{eqnarray}}
\newcommand{\eeqn}{\end{eqnarray}}

\usepackage{ifmtarg}
\usepackage{xifthen}%
\newcommand{\ent}[1][]{%
\ifthenelse{\isempty{#1}}{%
\mathrm{H}
}{
\mathrm{H}^{(#1)}
}}

\newcommand{\loch}[1][]{%
\ifthenelse{\isempty{#1}}{%
\mathrm{h}
}{
\mathrm{h}^{(#1)}
}}

\newcommand{\Dcal}{\mathcal{D}}
\newcommand{\Fcal}{\mathcal{F}}

\newcommand{\Ical}{\mathcal{I}}

\newcommand{\Lcal}{\mathcal{L}}

\newcommand{\Ncal}{\mathcal{N}}

\newcommand{\bigo}{\mathcal{O}}

\newcommand{\E}{\mathbb{E}}

\newcommand{\sign}{\mathrm{sign}}

\newcommand{\var}{\text{Var}}

\newcommand{\Sphere}{\mathbb{S}}

\newcommand{\be}{\mathbf{e}}
\newcommand{\bx}{\mathbf{x}}

\newcommand{\bw}{\mathbf{w}}
\newcommand{\bg}{\mathbf{g}}

\newcommand{\bu}{\mathbf{u}}
\newcommand{\bv}{\mathbf{v}}
\newcommand{\bz}{\mathbf{z}}

\newcommand{\by}{\mathbf{y}}

\newcommand{\Ocal}{\mathcal{O}}

\newcommand{\Xcal}{\mathcal{X}}

\newcommand{\norm}[1]{\left\|#1\right\|}
\newcommand{\inner}[1]{\left\langle#1\right\rangle}

\newcommand{\polylog}{\text{polylog}}
\newcommand{\sym}{\text{Sym}}
\newcommand{\op}{\mathrm{op}}
\newcommand{\Row}{\mathrm{Row}}
\newcommand{\fsq}{{L_2}}
\newcommand{\kappaT}{\kappa_{\scriptscriptstyle T}}
\newcommand{\rank}{\text{rank}}

\newtheorem{assumption}{Assumption}

\newcommand{\secref}[1]{Section~\ref{#1}}

\renewcommand{\eqref}[1]{Eq.~(\ref{#1})}
\newcommand{\lemref}[1]{Lemma~\ref{#1}}
\newcommand{\thmref}[1]{Theorem~\ref{#1}}
\newcommand{\propref}[1]{Proposition~\ref{#1}}
\newcommand{\appref}[1]{Appendix~\ref{#1}}

\coltauthor{%
 \Name{Daniel Barzilai} \Email{daniel.barzilai@weizmann.ac.il} \\
 \addr Weizmann Institute of Science
 \AND
 \Name{Ohad Shamir} \Email{ohad.shamir@weizmann.ac.il} \\
 \addr Weizmann Institute of Science and University of Toronto
}

\begin{document}

\maketitle

\begin{abstract}%
    Understanding the limitations of gradient methods, and stochastic gradient descent (SGD) in particular, is a central challenge in learning theory. To that end, a commonly used tool is the Statistical Queries (SQ) framework, which studies performance limits of algorithms based on noisy interaction with the data. 
    However, it is known that the formal connection between the SQ framework and SGD is tenuous: Existing results typically rely on adversarial or specially-structured gradient noise that does not reflect the noise in standard SGD, and (as we point out here) can sometimes lead to incorrect predictions. 
    Moreover, many analyses of SGD for challenging problems rely on non-trivial algorithmic modifications, such as restricting the SGD trajectory to the sphere or using very small learning rates. To address these shortcomings, we develop a new, non-SQ framework to study the limitations of standard vanilla SGD, for single-index and multi-index models (namely, when the target function depends on a low-dimensional projection of the inputs). Our results apply to a broad class of settings and architectures, including (potentially deep) neural networks. 
\end{abstract}

\begin{keywords}%
  Multi-Index, Lower Bounds, SGD, Algorithm-specific Complexity
\end{keywords}

\section{Introduction}

Stochastic gradient descent (SGD) methods are the workhorse of modern machine learning algorithms, and are very successful in many practical applications. However, SGD is not universally effective, and 
it has long been recognized that there exist learning problems on which it can fail (even on problems that are tractable with other methods; see for example \cite{shalev2017failures}).  Rigorously understanding the complexity of gradient-based methods such as SGD remains an important open question in learning theory. 

For such algorithm-specific complexity results, a standard and increasingly popular tool is the Statistical Query (SQ) framework \citep{kearns1998efficient}, which studies how certain problems are provably intractable with noise-tolerant algorithms. In that framework, the algorithm can only receive noisy estimates of various quantities associated with the data. This framework is frequently invoked as a proxy for SGD methods, based on the intuition that a stochastic gradient step effectively serves as a noisy estimate of the population gradient. As such, SQ bounds have been used in recent years to suggest computational and statistical complexity of various learning tasks \citep{feldman2017statistical, shamir2018distribution, chen2020learning, song2021cryptographic, abbe2023sgd, diakonikolas2023algorithmic, damian2024computational, joshi2024complexity, damian2025generative, diakonikolas2026algorithms}. Consequently, SQ-based lower bounds are often viewed as a heuristic indicator for the limitations of gradient-based methods.

However, as noted in many of these works, the \emph{formal} connection between SGD and the SQ framework is tenuous, due to differences in the gradient noise model. In the SQ framework, it is assumed that the noise can be adversarial (or at best, stochastic with a particular convenient structure, such as an isotropic Gaussian, see \cite{yang2005new,abbe2022non,abbelearning25,shoshani2025hardness}). In contrast, in actual SGD, this noise is the difference between the stochastic gradient on a particular data sample (or mini-batch) and the population gradient. This zero-mean noise vector is not adversarial, but also has a distribution that is data-dependent, generally anisotropic, and evolves throughout training. It is therefore notoriously difficult to analyze SGD directly. Thus, existing complexity results often apply to stylized variants of SGD, where it is assumed that some additional stochastic or adversarial noise is added to the gradient updates. 

Moreover, although SQ-based lower bounds for SGD are believed to be indicative in many settings, there do exist settings where these bounds do not fully explain the performance of actual SGD. One interesting example is learning parities, namely functions of the form $\bx\mapsto f_{\Ical}(\bx):=\sum_{i\in \Ical}x_i \mod 2$ over the Boolean hypercube. 
Such functions are well-known to be efficiently learnable (via Gaussian elimination), but provably not by any SQ-based algorithm \citep{kearns1998efficient}. This is consistent with strong empirical evidence that parities are hard to learn using standard predictors trained using SGD  (e.g., \citet{shalev2017failures,barak2022hidden,nachum2020symmetry}).
However, more recently, \citet{abbe2020universality,abbe2021power} showed that parities \emph{can} be learned with SGD, at least when used to train a certain (highly synthetic) neural network. 
Thus, the SQ-based explanation for the hardness of learning parities with SGD appears to be incomplete.
Moreover, as we discuss in more detail in Appendix \ref{app: sq}, there exist several other natural learning problems, where the behavior of standard learning procedures is not correctly captured by the SQ framework, due to its adversarial noise assumption being too pessimistic. These include situations such as matrix mean estimation \citep{li2019mean}, tensor PCA problems \citep{dudeja2021statistical}, sparse prediction, and more. This raises the concern that SQ-based predictions of the behavior of SGD may sometimes be inaccurate. 

Beyond these issues, existing analyses for SGD often make additional non-trivial assumptions about the algorithm, such as restricting the parameter vector to a sphere, using a very small learning rate so that SGD is approximated by gradient flow, layer-wise training, etc. (see \citep{bruna2025survey} for a survey).

We also mention the low-degree polynomial methods and sum-of-squares lower frameworks, which have been highly influential in predicting statistical-computational gaps for a range of problems, including planted clique and tensor PCA \citep{hopkins2018statistical, barak2019nearly, wein2025computational}. However, these frameworks generally do not apply to SGD-trained neural networks, and cannot yield rigorous lower bounds in the settings that we consider.

Motivated by all this, we consider the following natural question:
\begin{quote}
    \emph{Can we rigorously understand why standard SGD (without SQ-based or other algorithmic modifications) on natural predictors struggles to solve certain learning problems?}
\end{quote}

In this paper, we consider this question for a well-studied family of high-dimensional settings, where the goal is to learn a single-index or a multi-index target function which depends on a low-dimensional linear subspace of the inputs. This topic has received much attention in recent years, including both upper and lower bounds for gradient-based methods, often invoking SQ machinery \citep{arous2021online, barak2022hidden, mousavineural, abbe2023sgd, bietti2023learning, kou2024matching, dandibenefits, simsek2025learning, ren2025emergence, diakonikolas2025robust, bruna2025survey}. 
In such problems, achieving a small loss requires the predictor parameters to align with a small number of task-relevant directions. 

Our main contribution is a new framework for deriving formal lower bounds for SGD in such settings. Sidestepping SQ-based arguments, we directly show that vanilla SGD over individual samples (without any modifications) can fail to build meaningful correlations with task-relevant directions over a long time horizon, despite the absence of statistical or representational barriers. This failure will, in turn, be used to provide lower bounds on the number of samples required by SGD to achieve a non-trivial loss. In more detail, our contributions are as follows: 

\begin{itemize}
    \item In Section \ref{sec: main}, we present a general lower bound for learning multi-index models with standard SGD. The bound is based on formalizing the intuition that for difficult problems (where the gradients carry a weak signal about the target function), SGD fails to align with any specific task-relevant directions. To show this, we identify a concrete ``gradient condition number'' quantity, which ensures that the martingale processes governing the alignment evolve similarly to those of an isotropic random walk. 
    The bound we obtain applies to a wide range of architectures and problem settings, including neural networks with a linear first layer. We precede this with an informal discussion of our approach, presented in \secref{sec: capturing_hardness}.

    \item In Section \ref{sec:applications}, we provide several applications of our general result, yielding concrete lower bounds for SGD in various settings. We begin in Section \ref{sec: periodic} with periodic target functions  (which can be seen as an extension of parity functions to continuous input distributions), and show that standard SGD on $2$-layer networks cannot learn these with samples/iterations polynomial in the dimension $d$. Previous results of this kind relied on SQ arguments and did not apply to standard SGD. We then consider single-index and multi-index target functions in Section~\ref{sec: multi-index}. We show that for a broad set of predictors, SGD requires $\tilde\Omega(d^{\max(k_\star - 1, 1)})$ samples/iterations, where $k_\star$ is the \emph{information-exponent} of the target function. While such lower bounds were previously shown in SQ-based frameworks or for stylized variants of SGD, here we prove that such bounds apply to standard SGD.
\end{itemize}

Moreover, in Appendix \ref{app: sq}, we further discuss limitations of the SQ framework, presenting several examples where SQ-based arguments provably lead to overly-pessimistic predictions about the behavior of standard SGD. 

Overall, we hope our results help to clarify when and why standard SGD struggles, 
and provide insights that may guide the analysis and design of gradient-based algorithms.

\section{Setting and Preliminaries}\label{sec:prelim}
\paragraph{Predictor class.} 
We study the use of SGD for training predictors that depend on inputs $\bx \in \R^d$ through a linear map. Formally, such predictors take the form
\begin{equation}\label{eq:model}
f_{\theta}(\bx) := h\left(W\bx \,;\, \bar\theta \right),
\end{equation}
where $h$ is a differentiable function\footnote{One may also consider neural networks with activations like ReLU, which have a well-defined sub-gradient.},  $W$ is a learned linear map (a matrix of size $m\times d$ for a width parameter $m$), $\bar \theta$ denotes optional additional parameters, and $\theta=(\bar{\theta},W)$. Crucially, all dependence on the inputs is through $W\bx$. In particular, $f_\theta$ may be a neural network of any architecture with a linear first layer (such as MLPs and transformers). Our general result does not depend explicitly on the specific structure of $h$, only on regularity assumptions about its gradients (formalized later).

\paragraph{Low-dimensional target function.}\label{subsec:target}
In many learning problems, the target function generating the output depends on the input only through a small number of task-relevant directions. 
To capture this, fix an integer $p\ll d$ and an unknown matrix $U\in\R^{p\times d}$ that defines the task-relevant directions.
We assume the target function depends on inputs $\bx$ only through the $p$-dimensional representation $U\bx$:
\begin{equation}\label{eq:target}
f^\star(\bx) = \phi(U\bx),
\end{equation}
for some function $\phi:\R^p\to \R$.
We think of $p$ as a small intrinsic dimension, while $d$ is large. Target functions of this form are often referred to as \emph{multi-index} models, or \emph{single-index} if $p=1$. Some examples include generalized linear models and single neurons (for $p=1$);  Intersection of half-spaces (or signed sparse parities) $f^\star(\bx)=\prod_{i=1}^p \sign(\bu_i^\top \bx)$ for $\{\bu_i\}_{i=1}^p \in \R^d$; and neural networks with a width-$p$ first layer, e.g. $U_k \sigma\left(U_{k-1}\left(\ldots, U_1 \bx\right)\right)$ for weight matrices $U_i$ and depth $k$.

\paragraph{Sample and population losses.}\label{subsec:loss}
Let $\ell:\R\times \R\to \R$ be a differentiable loss function, which measures the discrepancy between a prediction and the target function's output. Given an input $\bx\in\R^d$ and parameters $\theta$, this induces the \emph{sample loss}
\[
\ell(\theta ; \bx) := \ell\left(f_{\theta}(\bx), f^\star(\bx)\right),
\]
where in the above, we slightly abuse the $\ell$ notation to take two different types of arguments (the meaning should be clear from context).
Let $\Dcal$ be a distribution over inputs $\bx\in\R^d$.
The population loss is defined as
\[
\Lcal(\theta) :=
\E_{\bx\sim\Dcal}\big[\ell(\theta;\bx)\big].
\]

\paragraph{Standard SGD.} 
Given a learning rate  parameter $\eta>0$ and initial parameters $\theta_0$, we consider parameters updated by vanilla stochastic gradient descent (SGD) on individual samples:
\begin{equation}\label{eq:sgd}
\theta_t = \theta_{t-1} - \eta \nabla_\theta \ell(\theta_{t-1};\bx_t),
\qquad \text{where }
\bx_t\overset{\scriptstyle{\text{i.i.d.}}}{\sim}\Dcal.
\end{equation}
While SGD updates all parameters of the predictor, our analysis focuses on tracking only the evolution of the first-layer weights $W_t$. Our lower bounds are based on the first layer weights $W_t$ failing to capture the relevant directions in $U$. All other parameters are treated as part of the function $h$, and our theorems are applicable even if they are not updated using SGD, but rather via some other method (as long as the required assumptions hold). 

In this paper, we focus on a batch size of $1$ for simplicity (namely, the update uses the stochastic gradient w.r.t. a single sample $\bx_t$, rather than the average of $B>1$ such stochastic gradients). However, we expect our results to extend naturally to the $B>1$ regime at the cost of notational overhead. In particular, i.i.d. minibatches naturally fit within the martingale concentration framework that we employ in our proofs.
In such a case, one should distinguish between the number of iterations and the sample complexity. While using a batch size $B>1$ may reduce the lower bounds on the number of iterations needed, the bounds on the sample complexity should remain the same.

\paragraph{Additional notation.}
For $d\in\N$, let $[d]:=\{1,\dots,d\}$.
For a matrix $A \in \R^{m \times d}$, we denote by $s_{1}(A), \ldots, s_{\min(m,d)}(A)$ its singular values in decreasing order, so $s_i(A) \geq s_{i+1}(A)$. We use $A^\dagger \in \R^{d \times m}$ to denote the (Moore-Penrose) pseudoinverse of $A$. We let $\Row(A)\subseteq \R^d$ be the row span of the matrix $A$. We denote by $P_A\in\R^{d\times d}$ the orthogonal projector onto $\Row(A)$, or equivalently, $P_A = A^\dagger A$. We use $\norm{\cdot}_\op$ for the operator norm, $\norm{\cdot}_F$ for the Frobenius norm, and unless specified otherwise, $\langle \cdot, \cdot \rangle$ and $\norm{\cdot}$ denote the standard (Frobenius) inner product and norm. 
Given filtrations $\left(\Fcal_t\right)_{t=1}^\infty$, we write $\Pr_t(\cdot):=\Pr(\cdot\mid \Fcal_{t-1})$ and for any random variable $Z$, $\E_t[Z]:=\E[Z\mid \Fcal_{t-1}]$. We use standard big-O notation, with $\bigo(\cdot)$, $\Theta(\cdot)$ and $\Omega(\cdot)$ hiding constants, and $\tilde{\bigo}(\cdot)$,$\tilde{\Theta}(\cdot)$ $\tilde{\Omega}(\cdot)$ additionally hiding logarithmic factors. 

\section{Capturing Hardness for SGD}\label{sec: capturing_hardness}

In this section, we informally describe the main ideas used to derive our results. We begin by noting that there are 
several ways in which the population loss landscape may cause SGD to struggle. One example is a 
landscape with many poor local minima that are difficult to avoid. However, empirical and theoretical evidence suggests that either such landscapes do not arise naturally in many high-dimensional cases, or that the local minima are not too bad \citep{choromanska2015loss, li2018visualizing,safran2018spurious}.
Here, we focus on a different scenario, in which SGD struggles because the population-loss landscape is \emph{too flat}. In this scenario, optimization is hindered not by spurious minima of the population objective, but by stochastic gradients whose signal is too weak relative to their noise.
More specifically, consider a case where the zero-mean \emph{SGD noise}, defined as $\nabla\ell(\theta_{t-1} ; \bx_t) - \nabla\Lcal\left(\theta_{t-1}\right)$, is large relative to the population gradients $\nabla\Lcal\left(\theta_{t-1}\right)$.
In this case, the parameter updates are dominated by the SGD noise, and one may imagine that SGD behaves approximately as a zero-mean random walk in parameter space.
The main difficulty in formalizing such intuition lies in the structure of the SGD noise, which is data-dependent, anisotropic, and whose distribution evolves as the parameters change.  

\paragraph{Alignment bounds the population gradient.}
A key feature of our setting, and the central source of optimization complexity, is that the target depends
on $\bx$ only through a low-dimensional \emph{task subspace} given by the row-span of $U$, which we denote by $\Row(U)\subseteq \R^d$. In contrast, the predictor we train accesses the input only through the
\emph{learned subspace} $\Row(W)\subseteq \R^d$.
Intuitively, if $\Row(W)$ has little overlap with $\Row(U)$, then for high-dimensional isotropic input distributions, $W\bx$ is nearly uncorrelated with $U\bx$.
As we will later show, in this regime, both predictions and gradients provide only a weak signal about the target, implying that SGD noise dominates the dynamics for many iterations. We capture this phenomenon through the \emph{alignment}
\[
    \rho(W,U) ~:=~ \|P_W P_U\|_{\op}\in[0,1]~,
\]
where $P_W$ and $P_U$ denote the orthogonal projection matrices onto $\Row(W)$ and $\Row(U)$, respectively. In particular, $\rho(W,U)$ is the cosine of the smallest principal angle between the two subspaces, and is negligible unless $W$ is well aligned with $U$. When $p=1$, the region where the alignment is larger than some threshold reduces to a (double) cone centered at $\pm\bu$. Thus, SGD fails as long as the SGD trajectory does not enter this cone. We note that while alignment-based analyses have appeared in many prior works (e.g. \citet{arous2021online, dandibenefits, ren2024learning}), they are typically tailored to spherical SGD, where the scale of the weights plays no role. In such a case, the SGD dynamics depend entirely on the alignment, and the analysis reduces to analyzing the one-dimensional dynamics of the alignment. For standard SGD, the situation is more complicated. 

\paragraph{Connection between SGD and isotropic random walks.} To explain the intuition of our approach, consider the case where the width parameter $m$ equals $1$. In that case, the predictor matrix $W$ reduces to a vector $\bw$, and at iteration $t$, the gradient w.r.t. the current iterate $\bw_{t-1}$ is given by
\begin{align}\label{eq:grad_decomp}
\nabla_{\bw_{t-1}} \ell\left(\theta_{t-1} ; \bx_t\right) = a_t \bx_t \qquad \text{where} \qquad a_t := \frac{\partial}{\partial (\langle \bw_{t-1} \bx_t\rangle)} \ell\left(\theta_{t-1} ; \bx_t\right) \in \R~.    
\end{align}
Since the weights are updated by SGD (\eqref{eq:sgd}), at time $T$ they can be written as
\[
\bw_{T} = \bw_{0} + \sum_{t=1}^T (\bw_{t} - \bw_{t-1}) = \bw_0 - \eta \sum_{t=1}^T a_t \bx_t~.
\]

Now, suppose for simplicity that the inputs $\bx_t$ are standard Gaussians in $\reals^d$, and that the scalar coefficients $a_t$ are fixed constants. In this idealized scenario, the iterates $\bw_T$ would form a zero-mean Gaussian random walk, the marginal distribution of $\bw_T$ would be isotropic, and as a result, the alignment $\rho(\bw_T^\top \,,\, U)$ would concentrate around $\sqrt{p/d}$. Since we assume $p\ll d$, it follows that with high probability, there would be no significant alignment and the algorithm would fail. 

However, with actual SGD, the situation is much more complicated (even for Gaussian inputs). This is because the random variables $a_t$ are generally non-constant, and very much depend on $\bx_t$ as well as on the previous inputs. 
Our main technical contribution is to prove that, despite these dependencies, the idealized scenario described above still roughly holds in many cases.
In particular, if $\E_t [a_t\bx_t] = \E_t[\nabla_{\bw_t} \Lcal(\theta_t)] \approx 0$ (where $\E_t$ denotes the expectation conditioned on past events) and if $a_t, \bx_t$ are ``sufficiently well-behaved'' then we show that for any $\bu \in\R^d$, $\frac{\abs{\bw_T ^\top \bu}}{\norm{\bw_T}} \lesssim \frac{1}{\sqrt{d}}$. In this manner, if the alignment is small at initialization, then it remains small for many iterations, and the loss remains nearly trivial.

\paragraph{Gradient condition number} 
To quantify what ``well-behaved'' means, it turns out that it suffices for the coefficients $a_t$ in \eqref{eq:grad_decomp} to not be dominated by rare heavy-tailed events. 
Under this condition, the martingale processes governing $\bw_T^\top \bu$ and $\norm{\bw_T}$ resemble those of an isotropic random walk as described earlier, so that the alignment remains small. In our  width-$1$ setting, the norm of the weights grows roughly as $\norm{\bw_t} \gtrsim \sqrt{\E[a_t^2] \cdot d}$, whereas high-probability upper bounds on $\abs{\bw_t^\top \bu}$ are on the order of $\sup |a_t|$ (or more generally, the sub-Gaussian norm). Thus, when the ratio $\sup a_t^2 / \E[a_t^2]$ is of constant order, $\bw_t$  cannot align strongly with $\bu$. 

We identify a quantity $\kappaT$ (see Definition \ref{def: subgauss_mds}), which we call the gradient condition number, that bounds the width-$m$ analogue of this ratio.
We then show that the intuition above holds as long as $\kappaT$ does not scale strongly with the input dimension $d$, which is generally expected, since $\kappaT$ depends on an $m$-dimensional projection of the inputs rather than directly on the input dimension. 
In practice, many tricks such as batch normalization, skip-connections, and adaptive learning rates are used to keep the stochastic gradients well-behaved. 
This allows us to capture a wide range of possible architectures and predictors, where our dependence on the loss function $\ell$ and the architecture $h$ is only through $\kappaT$.

\section{General Lower Bound}\label{sec: main}
We start by stating the assumptions under which we establish a general lower bound on the rate at which SGD can discover task-relevant directions.
Our goal here is to identify a small number of structural conditions that isolate the source of optimization complexity described in Section~\ref{sec: capturing_hardness}.
In particular, the assumptions below are designed to
(i) prevent rare pathological events from dominating the dynamics, and
(ii) control the scale of the stochastic gradients, when the input dimension $d$ is large, and in a manner that reflects what is common in practice. We begin with our assumptions on the input distribution:\\

\begin{assumption}[Sub-Gaussian inputs with norm concentration]\label{ass: inputs_all}
    \begin{enumerate}[label={(\Alph*)}, ref={1.\Alph*}]       
        \item \label{ass: subgauss_inputs} The inputs $\bx$ are mean-zero, and there exists some $K_1 > 0$ such that for any unit vector $\bv$, 
        \[
            \forall \gamma>0,~\Pr\left(|\bv^\top\bx|\geq \gamma \right)~\leq~ 2\exp\left(-\frac{\gamma^2}{K_1^2}\right)~.
        \]
        
        \item \label{ass: norm_conc}
        There exists some $\alpha_2 > 0, K_2 > 0$ such that $\E\left[\norm{\bx}^2\right] = \alpha_2 d$, and 
        \begin{align*}
            \forall \gamma>0,~\Pr\left(\abs{\norm{\bx}^2 - \alpha_2d} \geq \gamma\right) \leq 2\exp\left(-\min\left(\frac{\gamma^2}{K_2^4 d}, \frac{\gamma}{K_2^2}\right)\right).
        \end{align*}
    \end{enumerate}
\end{assumption}

Both conditions are satisfied, for example, for well-conditioned Gaussians and uniform distributions on the cube $\{-1,1\}^d$ or the (scaled) sphere $\sqrt{d} \cdot \Sphere^{d-1}$. More generally, assumption~\ref{ass: subgauss_inputs} is standard in high-dimensional settings and ensures that all one-dimensional projections of the input are of constant order with high probability. Assumption~\ref{ass: norm_conc} ensures that the overall scale of the input remains stable across samples, so that SGD updates are not dominated by rare large-norm inputs. In many common settings, this condition follows directly from Assumption~\ref{ass: subgauss_inputs} via the Hanson-Wright inequality \citep{vershynin2025high}.
We state it separately to emphasize its role in controlling the magnitude of stochastic gradients uniformly over time.
Finally, we note that the precise scaling of the inputs is not fundamental. Indeed, if we consider instead rescaled inputs of the form $c \cdot \bx$ for some $c > 0$, then rescaling the learning rate to $\eta / c$ for the weights $W_t$ leads to the same SGD dynamics, and our guarantees can be translated accordingly.

Our next assumption concerns the initialization of the first layer weights. In practice, the weights are typically sampled i.i.d. from a Gaussian or a uniform distribution with variance roughly $1/d$. This ensures that $f_{\theta_0}$ and its gradients are well-behaved at initialization, even when there are many parameters. The following assumption indeed holds for such standard initializations.
\begin{assumption}[Standard initialization]\label{ass: init}
    There exists some $K_3 > 0$ such that the entries of $W_0$ are i.i.d. random variables with zero mean, variance $1/d$, and
    \begin{align*}
        \Pr\left(|(W_0)_{ij}|\geq \gamma \right)~\leq~ 2\exp\left(-\frac{\gamma^2}{dK_3^2}\right)~.
    \end{align*}
\end{assumption}
Such initializations do not encode any prior information about the task subspace $U$.
Consequently, when $d \gg m,p$, the alignment $\|P_{W_0}P_U\|_{\op}$ is small at initialization with high probability. Furthermore, we note that the exact scaling of the coordinates is not fundamental for our results. 

The final assumption controls the maximal scale of gradients with respect to $W\bx$:
\begin{assumption}[Bounded gradients]\label{ass: bounded}
    There exists some $G > 0$ such that for any $\theta$ and for (almost-surely) any $\bx$,   $\norm{\nabla_{W\bx} \ell(\theta ; \bx)}_2 \leq G$. 
\end{assumption}
We emphasize that the gradient here is with respect to $W\bx$, not the parameters $\theta$. Assumption \ref{ass: bounded} is satisfied in many common settings. 
For example, consider the case of Lipschitz losses and neural network predictors with activations that have bounded derivatives (such as ReLU or tanh). Then Assumption \ref{ass: bounded} holds as long as the spectral norms of the weights in the deeper layer (i.e. $\bar{\theta}$) remain bounded, which often occurs in practice.

The exact value of $G$ is not important for our results, as it can be absorbed into the learning rate.
What matters is that typical gradient magnitudes are not negligible compared to this bound.
If $\E_\bx[\|\nabla_{W\bx}\ell(\theta ; \bx)\|_2]\ll G$, then gradients may effectively be heavy-tailed, meaning that occasional rare events can have a significant impact on the SGD trajectory. 
As discussed in the previous section, we therefore introduce the following quantity that will be used in our bounds: 
\begin{definition}
Given any $T\in \N$, the \emph{gradient condition number} $\kappaT$ is defined as
\begin{align*}
    \kappaT := \frac{G^2}{\inf_{\bv \in \Sphere^{m-1}}
    \min_{t \leq T}
    \E_t\left[
        \big( \bv^\top \nabla_{W_{t-1}\bx_t}\ell(\theta_{t-1};\bx_t) \big)^2
    \right]} ~.
\end{align*}
\end{definition}
As before, we emphasize that the gradient is with respect to $W\bx$, and is an $m$-dimensional vector. As such, $\kappaT$ may depend on the width parameter $m$, but importantly, it should not depend on the input dimension $d$. As we will see later, one can indeed show this occurs in many cases.
Representing the general result in terms of $\kappaT$ allows us to avoid specific assumptions about the predictor and the target function. 

As discussed previously, our bounds will be based on the fact that as long as the task-alignment is small, then so is the population gradient.
We capture this dependence through an increasing function $\psi:[0,1]\to[0,\infty)$ such that
\[
    \|\nabla_W \Lcal(\theta)\|_F \leq \psi\left(\|P_W P_U\|_{\op}\right)
    \qquad\text{for all }\theta.
\]
In many structured learning problems, $\psi(r)$ decays rapidly as $r\to 0$, reflecting the fact that
when the learned representation is nearly orthogonal to the task subspace, the population gradient is very small and
provides only a weak signal.
Concrete examples of such functions $\psi$ will be provided in the next section. 
We now state the main theorem, with the full proof appearing in appendix \ref{app:thmmainproof}.

\begin{restatable}{theorem}{main}\label{thm: main}
    Under Assumptions \ref{ass: inputs_all}-\ref{ass: bounded}, let $\delta > 0$, $\bar \kappa\geq 1$, and let $\psi:[0, 1]\to [0,\infty)$ be an increasing function  such that $\norm{\nabla_{W} \Lcal(\theta)}_F \leq \psi\left(\norm{P_W P_U}_\op\right)$ for all $\theta$. There exist constants $C, c, c' > 0$ (that may depend on $K_1, K_2, K_3, \alpha_2, G$), such that if $d \geq c \bar \kappa^2 m^2 \log\left(\frac{Tdp}{\delta}\right)^2$, $\eta \leq \frac{c'}{\bar \kappa^2\sqrt{md \log\left(Tdp/\delta\right)}}$ and 
    \[
        T \leq \frac{1}{\psi\left(C \sqrt{\frac{\bar \kappa^{} mp \log\left(Tdp/\delta\right)}{d}} \right)^2}~,
    \]
    then conditioned on $\kappaT \leq \bar \kappa$ it holds with probability at least $1-\delta$ that
    \[
        \forall \,t\, \in [T] ~, \qquad 
        \norm{P_{W_t} P_U}_\op \leq C \sqrt{\frac{\bar \kappa^{} mp \log\left(Tdp/\delta
        \right)}{d}}~.
    \]
\end{restatable}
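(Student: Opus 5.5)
The plan is a stopping-time argument. Let $r_\star := C\sqrt{\bar\kappa m p \log(Tdp/\delta)/d}$ and let $\tau$ be the first $t$ at which $\norm{P_{W_t}P_U}_\op > r_\star$. Up to $\tau$ the population gradient obeys $\norm{\nabla_W \Lcal(\theta_{t-1})}_F \le \psi(r_\star)$, so the drift is tiny. I will decompose the SGD update of the first layer as
\[
W_t = W_0 - \eta \sum_{s\le t} \nabla_W\Lcal(\theta_{s-1}) - \eta \sum_{s\le t} \xi_s, \qquad \xi_s := \bg_s \bx_s^\top - \E_s[\bg_s\bx_s^\top],
\]
where $\bg_s := \nabla_{W_{s-1}\bx_s}\ell(\theta_{s-1};\bx_s)\in\R^m$. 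Here $\xi_s$ is a martingale difference sequence. The argument then bounds two quantities separately:
\begin{itemize}
\item the numerator $\norm{W_t U^\top}_\op$, which should stay $\lesssim \sqrt{p}$ times a sub-Gaussian scale;
\item the denominator, the smallest singular value $s_m(W_t)$, which should grow like $\sqrt{\bar\kappa^{-1}(1+\eta^2 G^2 d\, t)}$ up to constants.
\end{itemize}
Since $\norm{P_{W}P_U}_\op \le \norm{W^{\dagger}}_\op\norm{W U_o^\top}_\op$ for an orthonormal basis $U_o$ of $\Row(U)$, this ratio gives the claimed bound.

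\textbf{Numerator.} For each fixed unit $\bu\in\Row(U)$ and $\bv\in\Sphere^{m-1}$, the scalar $\bv^\top \xi_s \bu = (\bv^\top\bg_s)(\bx_s^\top\bu) - \E_s[\cdot]$ is a martingale difference. It is conditionally sub-Gaussian with norm $\lesssim G K_1$, by Assumptions~\ref{ass: subgauss_inputs} and~\ref{ass: bounded}. Azuma/Freedman for sub-Gaussian MDS, with a union bound over an $\epsilon$-net of $\Sphere^{m-1}\times(\Sphere^{d-1}\cap\Row(U))$ and over $t\le T$, gives
\[
\eta\norm{\textstyle\sum_{s\le t}\xi_s U_o^\top}_\op \lesssim \eta G\sqrt{t(m+p)\log(Tdp/\delta)}
\]
uniformly. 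The drift term contributes at most $\eta t\,\psi(r_\star)\le \eta\sqrt{t}$ because $T\le \psi(r_\star)^{-2}$. Initialization contributes $\norm{W_0U_o^\top}_\op\lesssim \sqrt{(m+p)/d}$ by Assumption~\ref{ass: init}.

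\textbf{Denominator.} This is the main obstacle, since the noise is anisotropic and state-dependent. I will track $\bv^\top W_tW_t^\top\bv$ for fixed $\bv$ and expand
\[
W_tW_t^\top = W_{t-1}W_{t-1}^\top - 2\eta\,\mathrm{sym}(W_{t-1}\bx_t\bg_t^\top) + \eta^2\norm{\bx_t}^2\bg_t\bg_t^\top .
\]
The quadratic term has conditional mean $\approx\eta^2\alpha_2 d\,\E_t[(\bv^\top\bg_t)^2]\ge \eta^2\alpha_2 d\,G^2/\bar\kappa$. This uses norm concentration (Assumption~\ref{ass: norm_conc}), together with the observation that $\norm{\bx_t}^2$ is nearly decoupled from $\bg_t$: the two can be correlated only through $W\bx$, which is $m$-dimensional, so the correction is $O(m\norm{W}^2)$. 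Its fluctuations are controlled by Bernstein. The cross term is a martingale $\eta\bv^\top W_{t-1}\bx_t\bg_t^\top\bv$ with scale $\eta G\norm{W_{t-1}}_\op$, plus a drift of size $\eta\norm{W}\psi(r_\star)$. A Freedman bound shows this is dominated by the quadratic growth once the step-size condition $\eta\le c'/(\bar\kappa^2\sqrt{md\log})$ holds.

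\textbf{Uniformity, upper bound, and closing.} A net argument over $\bv$ makes the bound uniform; handling it will also require an upper bound $\norm{W_t}_\op^2\lesssim 1+\eta^2G^2dt$. The condition $d\gtrsim\bar\kappa^2m^2\log^2$ ensures that $s_m(W_0)\gtrsim 1$ and that the $O(m)$ decoupling errors are negligible. Combining these yields
\[
\norm{P_{W_t}P_U}_\op\lesssim \frac{\sqrt{(m+p)\log}\,\bigl(d^{-1/2}+\eta G\sqrt t\bigr)}{\sqrt{\bar\kappa^{-1}(1+\eta^2G^2dt)}}\lesssim\sqrt{\bar\kappa mp\log/d}.
\]
This holds on the high-probability event for all $t\le\min(\tau,T)$. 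With $C$ chosen large, the bound is strictly below $r_\star$ there, so $\tau>T$. All of the martingale bounds above are stated for the stopped process, so invoking them before $\tau$ is justified.
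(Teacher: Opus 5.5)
Your proposal is correct and follows essentially the paper's argument. The shared structure is a per-direction expansion of $\norm{W_t^\top\bv}^2$ into initialization, quadratic growth driven by $\kappaT$, martingale cross terms and drift; a net over $\Sphere^{m-1}$ to lower-bound $s_m(W_t)$; a sub-Gaussian Azuma bound for the numerator; and a bootstrap on the drift, which the paper phrases as an induction rather than a stopping time. The only notable variation is that your product net over $\Sphere^{m-1}\times(\Sphere^{d-1}\cap\Row(U))$ yields $\sqrt{m+p}$, where the paper's per-basis-vector bound pays $\sqrt{mp}$.

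One correction: your ``decoupling'' justification for the quadratic term is inaccurate. $\bg_t$ depends on $\bx_t$ through $U\bx_t$ as well as $W_{t-1}\bx_t$, and Assumption~\ref{ass: inputs_all} gives no such independence for non-Gaussian inputs. The step does not need it, though: as in the paper, Assumption~\ref{ass: norm_conc} gives $\norm{\bx_t}^2\ge\alpha_2 d/2$ for all $t\le T$ on a high-probability event, so the quadratic term is pointwise at least $\frac{\alpha_2 d}{2}\eta^2(\bv^\top\bg_t)^2$, and only $\sum_t(\bv^\top\bg_t)^2$ needs to be concentrated.
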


The theorem shows that if the population gradient can be bounded using the alignment,
SGD fails to significantly increase alignment with the task subspace across many iterations.
In particular, until the alignment exceeds the scale $\Theta(\sqrt{\kappaT mp/d})$, the learned representation
remains effectively uncorrelated with the task directions.
The appearance of the scale $1/\sqrt{d}$ reflects the typical alignment between random subspaces in
high dimensions, while the square arises from the accumulation of stochastic noise over time. The requirement that $m\leq d$ can in some cases be circumvented. For example, in the subsequent subsections, for two-layer networks trained with the correlation loss we will allow $m$ to be polynomially large in $d$. This is done by applying \thmref{thm: main} to each neuron individually, and using a union bound. Lastly, we note that the theorem is stated using a value $\bar \kappa \geq 1$ which serves as an inferred upper bound on $\kappaT$. 

The proof formalizes the intuition presented earlier in Section \ref{sec: capturing_hardness}. 
In the following sections, we instantiate the function $\psi$ for a variety of learning problems,
yielding explicit lower bounds on the samples/iterations required for SGD to begin discovering task-relevant
features.

\section{Applications}\label{sec:applications}

In this section, we instantiate our general theorem to concrete settings, leading to more explicit bounds. 
In line with many recent works providing upper bounds for multi-index models, we focus here on Gaussian inputs and the correlation loss, which allows for convenient formulas of the population loss and its gradient (just for example, \citet{damian2022neural, bietti2023learning, ren2024learning}). Nevertheless, the results are applicable to other distributions as well. 

\subsection{Hermite Polynomials}

To study the efficiency of SGD with Gaussian inputs in various settings, it is often convenient to study the \emph{Hermite expansion} of $f^\star$. We provide here a brief exposition on Hermite polynomials, and defer the reader to Appendix \ref{sec: hermite} for more details. To make our results applicable beyond single-index models and two-layer networks, we will need to work with the high-dimensional version of Hermite polynomials. 
Let $\mu(\bx)$ be the PDF of the standard Gaussian distribution in $\reals^d$, and $L_2$ (or $L_2(\mu)$) be the space of functions with $\norm{f}_{L_2}:=\E_{\bx\sim \mu}[\norm{f(\bx)}^2]^{1/2} < \infty$. We define the normalized (probabilist's) Hermite polynomials by $H_k(\bx) := (-1)^k \frac{\nabla^k \mu(\bx)}{\sqrt{k!} \mu(\bx)}$. The first few polynomials are given by $H_0(\bx) = 1, H_1(\bx) = \bx$, and $H_2(\bx) = \frac{\bx\bx^\top - I}{\sqrt{2}}$. Importantly, the Hermite polynomials are known to be an orthonormal basis for $L_2(\mu)$, implying that any function $g(\bx):\R^d\to \R$ can be represented (with convergence in $L_2$) as
\begin{align}\label{eq: hermite_main}
    g(\bx) = \sum_{k=0}^\infty \langle H_k(\bx), C_k \rangle \qquad \text{for} \qquad C_k:=\E_{\bx \sim \mu(\bx)}[g(\bx)H_k(\bx)].
\end{align}

\subsection{Failure to Learn Periodic Functions in Polynomial Time}\label{sec: periodic}

The simplest setting we will consider involves a special type of single-index models, where the link function is periodic. Concretely, we will consider a target function given by $f^\star(\bx) := \sin\left(\bu^\top \bx\right)$ where $\bu \in \R^d$ is an unknown vector with possibly large norm. Such target functions serve as a prototypical example of functions difficult to learn under the SQ framework. In particular, paralleling the situation with parity functions, they are known to be exponentially hard to learn with gradient methods under (small) \emph{adversarial} gradient noise (when $\norm{\bu}^2$ is polynomially large in $d$), even though they can be efficiently learned with non-SGD methods \citep{shamir2018distribution, song2021cryptographic,damian2024computational}. We also note that such functions can be seen as a generalization of parities to inputs in $\R^d$, since a parity function on a subset $\Ical\subseteq d$ coincides with the function $\sin(\gamma \sum_{i\in \Ical} x_i)$ on the Boolean hypercube, for an appropriate $\gamma>0$.

As to the class of predictors, we will consider here
two-layer networks (more general predictors will be considered later). Specifically, for any $t$ let $\bw_{t,1},\ldots, \bw_{t,m}$ be the rows of $W_t$, and suppose that $f_{\theta_t}(\bx)=\sum_{i=1}^m\sigma(\langle \bw_{t, i} \,, \,\bx \rangle)$.
The activation function $\sigma$ is assumed to be well-behaved in the following sense:
\begin{assumption}\label{ass: non_lin}
    $\sigma:\R\to \R$ is differentiable (or sub-differentiable) with (1) bounded derivative $\sigma'(\bx) \leq G_1$ for $G_1>0$, a.s any $x$, and (2) $\inf_{s>0}\E_{x\sim \Ncal(0, s^2)}[\sigma'\left(x\right)^2] \geq G_1^2/K_\sigma$ for $K_\sigma > 0$.
\end{assumption}
As we discuss in \appref{app: non_lin}, standard activation functions such as ReLU, Leaky-ReLU, GeLU, softplus, and the (logistic) sigmoid all satisfy Assumption \ref{ass: non_lin}. 

Despite SQ-style results, to the best of our knowledge, it remains unknown whether periodic functions are indeed hard to train with standard SGD. In the following theorem, we show unconditionally that it is impossible to learn such functions using SGD with polynomially (in $d$) many iterations, on a predictor class of two-layer networks as described above. 

\begin{restatable}{theorem}{periodic}\label{thm: periodic}
Let $W_0 \sim \Ncal\left(0, \frac{1}{d}I_{md}\right)$, $\bx\sim \Ncal(0, I_d)$, $\eta \leq 1/d$, $\ell$ be the correlation loss and $f_{\theta_t}(\bx)=\sum_{i=1}^m\sigma(\langle \bw_{t, i} \,, \,\bx \rangle)$ for $\sigma$ satisfying Assumption \ref{ass: non_lin}. Let $f^\star(\bx):= \sin(\bu^\top \bx)$ for some $\bu \in \R^d$ with $\norm{\bu}=\sqrt{d}$. Then there exist $d_0, C >0$ which depend only on $G_1, K_\sigma$, such that for any $d \geq d_0$ it holds with probability at least $1-2m\exp\left(-d^{1/3}\right)$ that
    \begin{align*}
        \forall \,t\, \leq \exp\left(d^{1/3}\right) ~,\qquad 
        \abs{\Lcal(\theta_t) - \Lcal(\theta_0)}
        \leq \frac{\sum_{i=1}^m \norm{\sigma(\langle \bw_{t, i} \,, \,\bx \rangle)}_{\fsq} + \norm{\sigma(\langle \bw_{0, i} \,, \,\bx \rangle)}_{\fsq}}{\exp\left(C d\right)}.
    \end{align*}
\end{restatable}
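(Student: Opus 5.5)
The plan is to apply \thmref{thm: main} to each neuron separately, as the paper suggests for the correlation loss, and then union bound over the $m$ neurons. With the correlation loss $\ell(\hat y,y)=-\hat y y$, the sample loss separates over neurons:
\[
\ell(\theta;\bx)=-\sum_{i}\sigma(\langle\bw_i,\bx\rangle)f^\star(\bx).
\]
So the SGD trajectory of $\bw_{t,i}$ does not depend on the other neurons. Each neuron is therefore a width-$1$ instance of \eqref{eq:model}, with $p=1$ and $U=\bu^\top/\norm{\bu}$.

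\textbf{Step 1 (hypotheses for one neuron).} We have $\nabla_{\langle \bw,\bx\rangle}\ell=-\sigma'(\langle\bw,\bx\rangle)\sin(\bu^\top\bx)$, which is bounded by $G=G_1$, so Assumption~\ref{ass: bounded} holds. Assumptions~\ref{ass: inputs_all} and \ref{ass: init} are standard for Gaussian inputs and Gaussian initialization.

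For $\kappaT$ we need a lower bound on $\E[\sigma'(\langle\bw,\bx\rangle)^2\sin^2(\bu^\top\bx)]$. Write $\sin^2=\tfrac12-\tfrac12\cos(2\bu^\top\bx)$. The first term contributes at least $G_1^2/(2K_\sigma)$ by Assumption~\ref{ass: non_lin}. For the second term, decompose $\bu^\top\bx$ into a component correlated with $\langle\bw,\bx\rangle$ and an independent Gaussian component of variance $\norm{\bu}^2(1-r^2)$, where $r$ is the alignment. Since $\norm{\bu}^2=d$, the characteristic function of the independent part gives
\[
\abs{\E[\sigma'^2\cos(2\bu^\top\bx)]}\le G_1^2 e^{-2d(1-r^2)},
\]
which is exponentially small whenever $r\le 1/2$. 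Hence $\kappaT\le 4K_\sigma$, a constant, on the event that the alignment stays small. This conditioning is circular, so it has to be handled with a stopping-time argument: run the analysis up to the first time the alignment exceeds the threshold. The proof of \thmref{thm: main} being a martingale argument suggests that this is compatible.

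\textbf{Step 2 ($\psi$).} The population gradient is
\[
\nabla_{\bw}\Lcal=-\E[\sigma'(\langle\bw,\bx\rangle)\bx\sin(\bu^\top\bx)].
\]
Its component along any unit vector $\bv$ is again an expectation of a bounded-derivative function of the projections onto $\bw$ and $\bv$, multiplied by $\sin(\bu^\top\bx)$. The same Fourier/characteristic-function argument, using Gaussian integration by parts to handle the factor $\bx$, bounds it by $\mathrm{poly}(d)\,e^{-c d(1-r^2)}$. So we may take $\psi(r)=C'\sqrt d\,e^{-d(1-r^2)/2}$ up to constants, which is increasing in $r$.

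Now plug in $\bar\kappa=4K_\sigma$, $m=p=1$ and $\delta=2\exp(-d^{1/3})$. Then $\log(Td/\delta)\approx d^{1/3}$, and the threshold alignment is $C\sqrt{d^{1/3}/d}\to 0$. It follows that $1/\psi(\cdot)^2\ge e^{d}/\mathrm{poly}(d)$, which is much larger than $\exp(d^{1/3})$. The step-size requirement $\eta\lesssim d^{-1/2}d^{-1/6}$ is implied by $\eta\le 1/d$, and the dimension requirement $d\gtrsim d^{2/3}$ holds for $d\ge d_0$. A union bound over the $m$ neurons gives alignment $\le d^{-1/3}$ for all neurons and all $t\le\exp(d^{1/3})$, with probability at least $1-2m\exp(-d^{1/3})$.

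\textbf{Step 3 (loss).} We have $\Lcal(\theta_t)=-\sum_i\E[\sigma(\langle\bw_{t,i},\bx\rangle)\sin(\bu^\top\bx)]$. For each $i$, write $\bu^\top\bx=a\langle\bw,\bx\rangle/\norm{\bw}+g$, where $g$ is independent Gaussian with variance $d(1-r^2)\ge d/2$. Conditioning on $\langle\bw,\bx\rangle$ and applying Cauchy--Schwarz,
\[
\abs{\E[\sigma(\langle\bw,\bx\rangle)\sin(\bu^\top\bx)]}\le\norm{\sigma(\langle\bw,\bx\rangle)}_{\fsq}\, e^{-d/4}.
\]
Applying this at time $t$ and at time $0$ and using the triangle inequality gives the claimed bound with $C=1/4$.

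\textbf{Main obstacle.} I expect the hardest part to be making $\kappaT\le\bar\kappa$ rigorous, since this holds only while the alignment is small; it is the circularity noted in Step 1. I would use the stopped process, i.e.\ freeze the gradients after the alignment exceeds $1/2$, so that $\kappaT\le\bar\kappa$ holds deterministically. \thmref{thm: main} then shows that the stopping time is never reached before $T$, with high probability.
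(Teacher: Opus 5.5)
Your architecture is the paper's. You decouple the neurons under the correlation loss and apply \thmref{thm: main} with $m=p=1$ and $\psi(\rho)\asymp\sqrt d\,e^{-d(1-\rho^2)/2}$. You bound $\kappaT$ by a constant while the alignment is small, and finish with the exponential decay of $\E[\sigma(\bw^\top\bx)\sin(\bu^\top\bx)]$.

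Your sub-computations differ from the paper's but are sound. Conditioning on $\bw^\top\bx$ and using the Gaussian characteristic function replaces the paper's Hermite route (\lemref{lem: periodic} via the explicit Hermite coefficients of $\sin$, and \lemref{lem:var_bound} for the variance). You get the same $\psi(\rho)=G_1(1+\sqrt d)e^{-d(1-\rho^2)/2}$ and a slightly better constant, $\kappaT\le 4K_\sigma$. In Step 2, condition only on $\bw^\top\bx$ and integrate by parts in directions orthogonal to $\bw$. If you condition on the projections onto both $\bw$ and $\bv$, no independent part of $\bu^\top\bx$ remains when $\bv\approx\bu/\norm{\bu}$.

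The step you single out as the crux is not resolved as written. \thmref{thm: main} concerns SGD iterates of a predictor $h(W\bx;\bar\theta)$ over a deterministic horizon $T$.
\begin{itemize}
\item If ``freezing'' means the updates stop, the gradient w.r.t.\ $W\bx$ is zero after the stopping time, so $\kappaT=\infty$, not $\le\bar\kappa$.
\item If it means continuing with the gradient map of the last good iterate, the increments $\sigma'(\bw_{\tau-1}^\top\bx_t)\sin(\bu^\top\bx_t)\bx_t$ are no longer gradients of a predictor depending on $\bx$ only through the current $W\bx$. The theorem then does not cover them without re-running its proof.
\item You also cannot use the random stopping time directly as the horizon.
\end{itemize}

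The paper's fix (\lemref{lem: bootstrap}, \lemref{lem: kappa_bound}) applies the theorem as a black box at every deterministic horizon $\tau\in[T]$, each with failure probability $\delta/(2T)$. It intersects these events with $\{\rho_0\le R\}$ and inducts. If $\rho_0,\dots,\rho_{\tau-1}\le R$, then $\kappa_\tau\le\bar\kappa(R)$, so the horizon-$\tau$ instance gives $\rho_\tau\le C\sqrt{\bar\kappa\log(T^2d/\delta)/d}\le R$. The extra union bound only turns $\log(Td/\delta)$ into $\log(T^2d/\delta)=\Theta(d^{1/3})$, which is harmless.

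Your stopping idea can also be made literal. Put a flag in $\bar\theta$, updated by a non-SGD rule (which the paper allows), that switches the neuron to the linear map $G_1\bw^\top\bx$ once the alignment exceeds $1/2$. This keeps the conditional second moment of the gradient w.r.t.\ $W\bx$ at least $G_1^2/4$. It also keeps the population gradient at most $G_1\sqrt d\,e^{-d/2}\le\psi(0)$. Some such device is needed, though; plain freezing is not enough.
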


The proof can be found in \appref{app: periodic}, and follows from \thmref{thm: main}. For the proof intuition, consider for simplicity the case of $m=1$, so that $f_\theta(\bx) = \sigma(\bw^\top \bx)$. We bound $\nabla \Lcal$ as
\[
    \norm{\nabla_\bw \Lcal(\bw_t)} \leq \psi\left(\norm{P_{\bw^\top_t} P_{\bu^\top}}_\op\right) \qquad \text{where} \qquad \psi(\rho):=(1+\sqrt{d})\exp\left(-\frac{d}{2}\left(1 - \rho^2\right)\right).
\]
Moreover, we show that in the above setting, $\kappaT \lesssim K_\sigma$.
\thmref{thm: main} then implies that the alignment remains on the order of $1/\sqrt{d}$ for at least $\exp(d^{1/3})$ steps. This, in turn, implies the loss remains exponentially close to the loss at initialization. 

The choice of exponent $1/3$ is not fundamental to the result; the theorem can be proven with a lower bound of $\exp(d^\alpha)$ for any $\alpha < 1/2$. Lastly, we note that the activation function can be taken to be $\sin$, which satisfies the assumption of the theorem, and the predictor may be $\sum_{i=1}^m \sin (\bw_i^\top \bx)$. In this case, each neuron matches the form of the target function, and learning is impossible in polynomial time despite overparameterization (as long as the width is at most polynomial in $d$). 

\subsection{Information Exponent Based Bounds}\label{sec: multi-index}
We now turn to a different source of complexity, based on a quantity known as the \emph{information exponent} \citep{arous2021online, bietti2022learning} defined as the first $k\geq 1$ such that the Hermite coefficients $C_k$ of the target function $f^\star$ do not vanish:
\begin{definition}[Information exponent]
    Let $f^\star(\bx)\in L_2(\mu)$, the \emph{information exponent} of $f^\star$ is 
    \[
        k_\star := \min\{k \geq 1 \mid \E_{\bx\sim \mu(\bx)}[f^\star(\bx)H_k(\bx)] \neq \zero\}.
    \]
\end{definition}

The information exponent quantifies the lowest-order statistical interaction between the target and the input distribution. When $k_\star > 1$, all correlations between $f^\star(\bx)$ and polynomials of degree strictly smaller than $k_\star$ vanish. It is by now well-known that the information exponent governs lower-bounds either for single-index models trained with spherical SGD, or in certain SQ-based frameworks (CSQ in particular, see \citet{arous2021online, damian2022neural, abbe2023sgd}). Here, we show that a $\tilde\Omega_d(d^{\max(k_\star -1, 1)})$ holds for standard SGD\footnote{With ``pre-processing'' labels or other algorithmic modifications such as batch reuse, one can obtain better bounds \citep{damian2024smoothing, arnaboldi2024repetita, lee2024neural, damian2025generative}. However, as we focus here on standard SGD with the correlation loss, this is a different setting. For bounds that are robust to such modifications, the generative exponent or related quantities are more suitable \citep{damian2025generative, diakonikolas2026algorithms}}. For most functions, $k_\star \leq 2$, but there are nevertheless many cases where $k_\star > 2$, and SGD provably struggles when the dimension is large. For example, if $f^\star(\bx) = \langle \bu, \bx\rangle^2 \exp(-\langle \bu, \bx\rangle^2)$ for some nonzero $\bu\in\R^d$ then $k_\star = 4$ \citep{damian2024computational}.

We note that this setting technically differs from that of periodic functions as studied in the previous section. For periodic functions, the information exponent is $k_\star = 1$, and the hardness comes from the fact that the gradients are exponentially small in $d$ when the target vector  $\bu$ is of norm $\geq \sqrt{d}$. Here, in line with the existing literature on the information exponent, we will treat the magnitude of $\bu$ (or more generally $U$ in the target function) as fixed independent of $d$, in which case the complexity does depend on the information exponent.

\subsubsection{Single Index Models}\label{sec: single-index}
We first consider here learning a single index model of the form $f^\star(\bx) = \phi(\bu^\top \bx)$ (for $\bu\in\R^d$) using two-layer networks $f_{\theta_t}(\bx)=\sum_{i=1}^m\sigma(\langle \bw_{t, i} \,, \,\bx \rangle)$. In the subsequent Section \ref{sec: mi_deep}, we will consider the general multi-index model and more general predictors, such as deep networks.

Notably, \citet{arous2021online} showed that for learning single index models with a small learning rate and spherical SGD, the information exponent characterizes the rate of convergence. However, their results require a small learning rate $\eta \lesssim d^{-\max(k_\star / 2, 1)}$, so that gradient descent behaves approximately like gradient flow (its continuous-time counterpart). It is well-known that the dynamics of SGD can differ significantly from those of gradient flow in many practical settings \citep{cohen2021gradient, andriushchenko2023sgd}. 
Furthermore, as demonstrated in \citet{mousavi2023gradient}, their simplification of spherical SGD can fail to capture the learning dynamics of standard SGD in some settings.
Recently, \citet{braun2025learning} were able to extend single-index \emph{upper} bounds to standard SGD, highlighting that moving beyond spherical SGD is possible but entails highly nontrivial complications. The following theorem, proved in \appref{app: si_inf_exp}, provides the desired \emph{lower} bound for standard SGD.

\begin{restatable}{theorem}{singleidx}\label{thm: si}
    Let $\delta, \epsilon \in (0,1)$, $W_0 \sim \Ncal\left(0, \frac{1}{d}I_{md}\right)$, $\bx\sim \Ncal(0, I_d)$, and $\ell$ be the correlation loss. 
    Let $f_{\theta_t}(\bx)=\sum_{i=1}^m\sigma(\langle \bw_{t, i} \,, \,\bx \rangle)$ for $\sigma$ satisfying Assumption \ref{ass: non_lin}. 
    Let $f^\star(\bx):= \phi(\bu^\top \bx)$ for $\bu \in \R^d$, where $\norm{f^\star}_{L_\infty} \leq G_2$, $\phi' \in L_2(\mu)$ and $f^\star$ having information exponent $k_\star$. 
    There exist $d_0=\tilde{\Theta}\left(k_\star^2 \epsilon^{-2/k_\star}\right)$, $\eta_0 = \tilde{\Theta}\left((dk_\star)^{-1/2}\right)$ and 
    \[
    T = \left(\frac{d}{\tilde \Theta\left(k_\star\right)}\right)^{k_\star - 1} + \Theta\left(d\epsilon^2\right),
    \]
    such that for any $d\geq d_0 $ and $\eta \leq \eta_0$ it holds with probability at least $1-\delta$ that
    \begin{align*}
        \forall \,t \, \leq T
        ~,\qquad 
        \abs{\Lcal(\theta_t) - \Lcal(\theta_0)} 
        \leq \norm{f^\star}_{\fsq}\left(\sum_{i=1}^m \norm{\sigma(\langle \bw_{t, i} \,, \,\bx \rangle)}_{\fsq} + \norm{\sigma(\langle \bw_{0, i} \,, \,\bx \rangle)}_{\fsq}\right) \cdot\epsilon.
    \end{align*}
    Here, $\Theta$ hides constants depending on $G_1, G_2$, $K_\sigma$, $\norm{f^\star}_\fsq$ and $\norm{\nabla f^\star}_{L_2}$, and $\tilde \Theta$ additionally hides polylogarithmic dependencies on the parameters (including $1/\delta$ and $m$).
\end{restatable}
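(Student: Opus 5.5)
The plan is to reduce \thmref{thm: si} to \thmref{thm: main}, applied to each neuron separately (width $1$) and then combined by a union bound over the $m$ neurons. This per-neuron route is available because, for the correlation loss $\ell(\hat y,y)=-\hat y y$ and the two-layer network, the gradient with respect to $\bw_i$ is $-f^\star(\bx)\sigma'(\bw_i^\top\bx)\bx$, which depends only on neuron $i$. Each neuron is therefore an independent SGD run on the predictor $\sigma(\bw^\top\bx)$. The proof has three ingredients: the assumptions, a bound on $\kappaT$, and a function $\psi$.

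\textbf{Assumptions and $\kappaT$.} Gaussian inputs satisfy Assumption~\ref{ass: inputs_all}, and Gaussian initialization satisfies Assumption~\ref{ass: init}. Assumption~\ref{ass: bounded} holds with $G=G_1G_2$. The condition number is the harder part, since $\kappaT$ requires a lower bound on $\E_t[f^\star(\bx)^2\sigma'(\bw^\top\bx)^2]$. I would decompose $\bx$ along $\bw$ and along the part of $\bu$ orthogonal to $\bw$. Since the alignment is small (we work on the event it controls, via a stopping-time argument inside the main theorem), $\bw^\top\bx$ and $\bu^\top\bx$ are nearly independent. The expectation is then approximately $\E[f^\star{}^2]\cdot\E[\sigma'(\bw^\top\bx)^2]\ge \norm{f^\star}_{L_2}^2G_1^2/K_\sigma$ by Assumption~\ref{ass: non_lin}. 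The correction is controlled by $\phi'\in L_2$, using a Gaussian interpolation/coupling bound that is Lipschitz-type in the correlation. This gives $\bar\kappa=\Theta(1)$.

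\textbf{Constructing $\psi$.} Write $\bar\bw=\bw/\norm{\bw}$ and $\bar\bu=\bu/\norm{\bu}$, with correlation $r=\bar\bw^\top\bar\bu$. The population gradient is
\[
\nabla_\bw\Lcal=-\E[\phi(\bu^\top\bx)\sigma'(\bw^\top\bx)\bx].
\]
I would expand $\phi$ and $\sigma(\norm{\bw}\cdot)$ in one-dimensional Hermite polynomials. The Gaussian correlation identity $\E[H_j(\bar\bu^\top\bx)H_k(\bar\bw^\top\bx)]=\delta_{jk}r^k$ and its derivative then show that
\[
\Lcal(\bw)-\E f^\star\E\sigma=-\sum_{k\ge k_\star}c_k(\phi)c_k(\sigma_{\norm\bw})r^k .
\]
Differentiating gives two pieces. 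The radial part of the gradient is bounded by $\sum_k |c_k(\phi)|\,|c_k(\sigma')|\,r^{k}$. The tangential part is $\sum_k k\,c_k(\phi)c_k(\sigma_{\norm\bw})r^{k-1}/\norm{\bw}$, which by Stein's identity equals $\sum \sqrt{k}\,c_{k-1}(\sigma')c_k(\phi)r^{k-1}$. Cauchy--Schwarz, together with $\sum k c_k(\phi)^2=\norm{\phi'}^2$ and $\norm{\sigma'}_\infty\le G_1$, then yields $\psi(r)\lesssim \sqrt{k_\star}\,r^{k_\star-1}$ for $r\le 1/2$. The constant term vanishes because the information exponent is $k_\star$.

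\textbf{Time horizon.} Applying \thmref{thm: main} with $r_0=C\sqrt{\log(Tdm/\delta)/d}$ gives $T\le \psi(r_0)^{-2}\approx (d/\tilde\Theta(k_\star))^{k_\star-1}$. When $k_\star=1$ this only permits $T=O(d/\log)$, which is where the additive $\Theta(d\epsilon^2)$ enters. In that case I instead set the alignment threshold to $\epsilon$ and note that $\psi(\epsilon)^{-2}\gtrsim 1/\epsilon^{0}$; more carefully, I would rerun the theorem with a larger threshold than the natural scale. Finally, to bound the loss change, I use $|\Lcal(\theta_t)-\Lcal(\theta_0)|\le\sum_i\big(|\langle f^\star,\sigma(\bw_{t,i}^\top\bx)\rangle_{\text{centered}}|+\text{same at }0\big)$. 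Each term is at most $\norm{f^\star}\norm{\sigma_i}\cdot r^{k_\star}$ by the Hermite correlation formula, and $r^{k_\star}\le\epsilon$ once $d\ge d_0=\tilde\Theta(k_\star^2\epsilon^{-2/k_\star})$.

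The main obstacle is the uniform lower bound in the $\kappaT$ step along the trajectory, since the norm $\norm{\bw_t}$ changes over time. I would handle it with the $\inf_{s>0}$ clause of Assumption~\ref{ass: non_lin}, which makes the bound scale-free.
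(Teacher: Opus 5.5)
Most of your plan matches the paper's proof, which applies \thmref{thm: mi_general} neuron by neuron with $m=p=1$. The shared steps are the per-neuron split of the correlation loss with a union bound over $m$, $G=G_1G_2$, and a bound on $\kappaT$ from near-independence of $\bw^\top\bx$ and $\bu^\top\bx$ at small alignment. They also include $\psi(r)\propto r^{k_\star-1}$ via Hermite expansions and Stein's identity (\lemref{lem: mi_bounds_complete}), and the final $\rho^{k_\star}$ bound on the loss. Your variations are inessential. The paper controls the covariance with \lemref{lem:var_bound}, which needs only boundedness rather than $\phi'$. It replaces your stopping time with an induction over horizons (\lemref{lem: bootstrap}).

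The genuine gap is the additive $\Theta(d\epsilon^2)$ term, which is the entire content of the theorem when $k_\star=1$. Your claim that \thmref{thm: main} ``only permits $T=O(d/\log)$'' there is wrong. For $k_\star=1$, $\psi(r)=G_1\norm{\nabla f^\star}_{L_2}(1+r)$ is of constant order, so the condition $T\le\psi(\cdot)^{-2}$ gives only $T=O(1)$. Evaluating $\psi$ at a larger threshold $\epsilon$ changes nothing, since $\psi(\epsilon)^{-2}$ is still $\Theta(1)$, as you half-notice. No threshold can give a horizon linear in $d$ while the time condition has the form $T\le\psi(\text{threshold})^{-2}$. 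So ``rerunning the theorem with a larger threshold'' does not work as stated.

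The missing idea is that the time condition in \thmref{thm: main} is only a device to control the drift term in \propref{prop: snr}.
\begin{itemize}
\item The population gradient adds at most $\eta t\max_j\mu_j$ to $\abs{\bu^\top W_t^\top\bv}$.
\item The SGD noise makes $\norm{W_t^\top\bv}\gtrsim 1+\eta\sqrt{dt/\bar\kappa}$.
\item So the drift moves the alignment by at most $\frac{\eta t\max_j\mu_j}{1+\eta\sqrt{dt}}\le\max_j\mu_j\sqrt{t/d}$.
\end{itemize}
Use only the crude, $k_\star$-independent bound $\mu_j\lesssim K_1G$ (Cauchy--Schwarz plus Assumption~\ref{ass: bounded}). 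Then this is at most $\epsilon/\sqrt{\bar\kappa p}$ for all $t\lesssim d\epsilon^2/(\bar\kappa p)$, which gives $\rho_t\lesssim\epsilon$ and the loss bound with $\rho^{k_\star}=\rho\le\epsilon$. This is \propref{prop: d_bound}. It must be derived from the intermediate bound \eqref{eq: r_t}, not from the statement of \thmref{thm: main}. A threshold-$r^\ast$ version of the theorem would need horizon $d\,(r^\ast)^2/(\bar\kappa p\,\psi(r^\ast)^2)$, not $\psi(r^\ast)^{-2}$.

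A smaller slip concerns the radial part of $\nabla_\bw\Lcal$. It is $\E[\phi(\bu^\top\bx)\sigma'(\bw^\top\bx)\,\bw^\top\bx]/\norm{\bw}$, whose Hermite coefficients are those of $x\mapsto x\sigma'(\norm{\bw}x)$, not of $\sigma'$. Cauchy--Schwarz with $\E[x^2\sigma'(sx)^2]\le G_1^2$ still gives $O(r^{k_\star})$. It also removes your $\sqrt{k_\star}$ factor and the restriction $r\le 1/2$, which matters because \thmref{thm: main} needs $\psi$ on all of $[0,1]$.
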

In words, for vanilla SGD, the sample complexity is governed by the information exponent, where roughly $\tilde\Omega_d(d^{\max(k_\star -1, 1)})$ iterations are needed for SGD to converge. This matches the bounds of \cite{arous2021online} for spherical SGD and small learning rates. The proof follows from the more general \thmref{thm: mi_general} that will be presented in the next subsection. Compared with more general architectures, here we leverage the fact that the dynamics of each neuron in the predictor are uncorrelated, and that Assumption \ref{ass: non_lin} causes $\kappaT$ to remain bounded.

\subsubsection{Multi-Index Models and Deep Network Predictors}\label{sec: mi_deep}
In this section, we consider more generally learning target functions of the form $f^\star(\bx)=\phi(U\bx)$ for $U\in \reals^{p\times d}$, $p>1$, using general predictors of the form $f_\theta(\bx) = h(W\bx \,;\, \bar\theta)$ (including potentially deep neural networks). The formal theorem (proved in \appref{app: inf_exp}) is the following:

\begin{restatable}{theorem}{migeneral}\label{thm: mi_general}
     Let $\delta, \epsilon \in (0,1)$, $\bar \kappa \geq 1$, $W_0 \sim \Ncal\left(0, \frac{1}{d}I_{md}\right)$, $\bx\sim \Ncal(0, I_d)$, and $\ell$ be the correlation loss.
    Let $f_\theta(\bx):=h(W\bx \,;\, \bar \theta)$ be a predictor satisfying $\norm{\nabla_{W\bx} h}_{L_\infty}\leq G_1$.
    Let $f^\star(\bx):=\phi(U\bx)$ for $U\in\R^{p\times d}$, where $\norm{f^\star}_{L_\infty} \leq G_2$, $\nabla \phi \in L_2(\mu)$, and $f^\star$ has information exponent $k_\star$. 
    There exist
    $d_0=\tilde{\Theta}\left(\bar \kappa^2 m^2 p^2 k_\star^2 \,\epsilon^{-2/k_\star}\right)$, $\eta_0 = \tilde{\Theta}\left(\bar \kappa^{-2}(md k_\star )^{-1/2}\right)$ such that for any $d\geq d_0$, $\eta \leq \eta_0$ and
    \[
    T = \left(\frac{d}{\tilde \Theta\left(\bar \kappa mpk_\star\right)}\right)^{k_\star - 1} + \Theta\left(\frac{d\epsilon^2}{p}\right),
    \]
    conditioned on $\kappaT \leq \bar \kappa$ it holds with probability at least $1-\delta$ that:
    \begin{align*}
        \forall \,t\, \leq T
        ~,\qquad 
        \abs{\Lcal(\theta_t) - \Lcal(\theta_0)} 
        \leq \norm{f^\star}_{\fsq}\left(\norm{f_{\theta_t}}_{\fsq} + \norm{f_{\theta_0}}_{\fsq}\right) \cdot \epsilon.
    \end{align*}
    Here, $\Theta$ hides constants depending on $G_1, G_2$ and $\norm{\nabla f^\star}_{L_2}$, and $\tilde \Theta$ additionally hides polylogarithmic dependencies on the parameters (including $1/\delta$).
\end{restatable}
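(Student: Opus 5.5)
The plan is to derive \thmref{thm: mi_general} from \thmref{thm: main}. Two ingredients are needed: an explicit alignment-based bound $\psi$ on the population gradient under the correlation loss, and a bound showing the population loss can only move by $\epsilon$ as long as the alignment stays small.

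First verify the assumptions of \thmref{thm: main}. Assumption~\ref{ass: inputs_all} holds for standard Gaussians. Assumption~\ref{ass: init} holds for $W_0\sim\Ncal(0,\frac1d I_{md})$. For the correlation loss $\ell(\hat y,y)=-\hat y y$ we have $\nabla_{W\bx}\ell=-f^\star(\bx)\nabla_{W\bx}h$, so Assumption~\ref{ass: bounded} holds with $G=G_1G_2$.

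Next, compute the population gradient. Writing $\bz=W\bx$, we have $\nabla_W\Lcal(\theta)=-\E[f^\star(\bx)\nabla_{\bz}h(W\bx)\bx^\top]$. The idea is to expand $f^\star$ and $g(\bx):=\nabla_\bz h(W\bx)\bx^\top$ in multivariate Hermite polynomials and use the correlation structure of the Gaussian pair $(U\bx,W\bx)$.

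\begin{itemize}
\item Reduce to orthonormal rows: $U=AQ_U$ and $W=BQ_W$, with $Q_U,Q_W$ having orthonormal rows.
\item The Hermite coefficients of $f^\star$ of degree $k$ live in $\Row(U)^{\otimes k}$. Those of $\bx\mapsto\nabla_\bz h(W\bx)$ live in $\Row(W)^{\otimes k}$.
\item Pairing degree-$k$ coefficients therefore produces a factor $(P_WP_U)^{\otimes k}$, whose operator norm is $\rho^k$ with $\rho=\norm{P_WP_U}_\op$. This is the standard Mehler-type identity $\E[\phi(Q_U\bx)\,g(Q_W\bx)]=\sum_k\langle \phi_k,(Q_UQ_W^\top)^{\otimes k}g_k\rangle$.
\item Since coefficients of degree $<k_\star$ of $f^\star$ vanish, all those terms drop out.
\end{itemize}

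Handling the extra factor $\bx^\top$ is more convenient via Stein's lemma. It gives $\E[f^\star\nabla_\bz h\,\bx^\top]=\E[\nabla_\bx(f^\star\nabla_\bz h)]$, which splits into a term with $\nabla f^\star=U^\top\nabla\phi$ and a term with $W^\top\nabla^2_\bz h$. Only $\nabla h$ is bounded, so the second term is awkward. I would instead apply Stein only to $f^\star$, i.e.\ decompose $\bx=P_W\bx+(I-P_W)\bx$.
\begin{itemize}
\item The component $(I-P_W)\bx$ is Gaussian and independent of $W\bx$. Gaussian integration by parts then produces $(I-P_W)U^\top\E[\nabla\phi(U\bx)\nabla_\bz h^\top]$, where $\nabla\phi$ has information exponent at least $k_\star-1$.
\item The component $P_W\bx$ contributes a function of $W\bx$, correlated with $f^\star$.
\end{itemize}
Bounding via Cauchy–Schwarz in Hermite space, with $\norm{\nabla_\bz h}\le G_1$ and $\nabla\phi\in L_2$, should give
\[
\psi(\rho)\lesssim \sqrt m\,(\norm{\nabla f^\star}_{L_2}+G_2)\,\rho^{k_\star-1}\cdot\mathrm{poly}(k_\star),
\]
possibly with an extra factor $\sqrt{p}$.

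Now plug in $\rho_0=C\sqrt{\bar\kappa mp\log(Tdp/\delta)/d}$. This gives $T\le\psi(\rho_0)^{-2}\approx (d/\tilde\Theta(\bar\kappa mpk_\star))^{k_\star-1}$. The additive $\Theta(d\epsilon^2/p)$ term comes from the regime $k_\star=1$, where we instead use the crude bound $\psi\lesssim\sqrt m\,\rho$ and require $\rho_0\lesssim\epsilon$. The conditions on $d_0$ and $\eta_0$ are exactly those of \thmref{thm: main}, with $\log T=\tilde O(k_\star\log d)$.

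Finally, convert alignment to loss. Since $\Lcal(\theta)=-\E[f^\star f_\theta]$, the same Hermite pairing applied to $f_\theta$ (a function of $W\bx$) gives
\[
\abs{\E[f^\star f_\theta]-\hat f^\star_0\,\E f_\theta}\le \norm{f^\star}_{L_2}\norm{f_\theta}_{L_2}\rho^{k_\star}.
\]
The degree-zero parts give $\E f^\star\,\E f_\theta$. I expect this term to be problematic, because it can change as $\E f_{\theta_t}$ drifts. So I would interpret the bound modulo the mean (presumably $f^\star$ is centered or the constant is absorbed), or argue that the constant term cancels in the statement's normalization.

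Applying this at both $\theta_t$ and $\theta_0$ with $\rho\le\rho_0\le\epsilon^{1/k_\star}$, which is enforced by $d\ge d_0$, yields $\epsilon\norm{f^\star}(\norm{f_{\theta_t}}+\norm{f_{\theta_0}})$.

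The main obstacle is the gradient bound $\psi$: obtaining $\rho^{k_\star-1}$ decay without assuming second derivatives of $h$, while keeping the dependence on $m$ and $p$ polynomial.
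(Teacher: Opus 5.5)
For $k_\star\ge 2$ your plan is the paper's: verify Assumption~\ref{ass: bounded} with $G=G_1G_2$, and bound the gradient by splitting $\bx=P_W\bx+(I-P_W)\bx$ with Gaussian integration by parts on the orthogonal part. This is \lemref{lem: mi_bounds_complete}, giving $\psi(r)=G_1\norm{\nabla f^\star}_{L_2}(\sqrt m\,r+1)r^{k_\star-1}$. Then apply \thmref{thm: main}, and finish with $\abs{\E[f_\theta f^\star]}\le\norm{f_\theta}_{L_2}\norm{f^\star}_{L_2}\rho^{k_\star}$ at $\rho\le\epsilon^{1/k_\star}$.

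The genuine gap is the $k_\star=1$ branch. There the bound $\psi\lesssim\sqrt m\,\rho$ is false. The orthogonal term is $\E[\nabla_{\bz}h\,\nabla f^\star(\bx)^\top](I-P_W)$. When $k_\star=1$, $\nabla f^\star$ has nonzero mean (its degree-$0$ Hermite coefficient is $B_1\neq0$). So at $\rho=0$ this term equals $\E[\nabla_{\bz}h]\,B_1^\top$, which is generically of constant size. Concretely, $m=1$, $h(z)=z$, $f^\star=\tanh(\bu^\top\bx)$ gives $\nabla_{\bw}\Lcal=-\E[\tanh'(\bu^\top\bx)]\,\bu$ for every $\bw$. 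With $\psi=\Theta(1)$, \thmref{thm: main} used as a black box only certifies $T=O(1)$.

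The missing idea is to go inside the proof of \thmref{thm: main}. Let $\mu_j=\norm{\nabla_W\Lcal(\theta_{j-1})}_F$. In \propref{prop: snr} the drift enters only through $\eta t\max_{j\le t}\mu_j/(1+\eta\sqrt{dt})\le\max_{j\le t}\mu_j\sqrt{t/d}$, because the SGD noise alone makes $\norm{W_t^\top\bv}$ grow like $\eta\sqrt{dt/\bar\kappa}$. Combine this with the trivial bound $\mu_j=O(G)$ and take $t$ of order $d\epsilon^2/p$. This yields $\rho_t\lesssim\epsilon=\epsilon^{1/k_\star}$, which is \propref{prop: d_bound}, and your loss conversion then goes through.

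Your concern about the degree-zero term is well founded, and it affects the paper as well. \lemref{lem: mi_bounds_complete} uses $B_k(U)=0$ for all $k<k_\star$, including $k=0$. The definition of $k_\star$ is a minimum over $k\ge 1$, so it does not provide this. The constant does not cancel: with $\E f^\star\neq 0$, the term $-\E[f^\star]\,\E[f_\theta]$ has a gradient of constant order, so $\E f_{\theta_t}$ can drift. Hence $\E f^\star=0$ must be assumed.

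You need this assumption already in your gradient bound, not only in the final loss step. The $P_W\bx$ term contains $\E[f^\star]\,\E[\nabla_{\bz}h\,(P_W\bx)^\top]$, which does not decay in $\rho$; for a single ReLU neuron the second factor has norm $1/\sqrt{2\pi}$. Your step ``coefficients of degree $<k_\star$ vanish, so those terms drop out'' silently relies on centering.
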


The theorem shows that standard SGD generally requires $\tilde\Omega_d(d^{\max(k_\star -1, 1)})$ iterations to converge. For example, consider the target function $f^\star(\bx) := \prod_{i=1}^p \varphi (\bu_i^\top \bx)$, where $\varphi$ is some smooth, bounded, zero-mean function, and $\bu_1,\ldots, \bu_p$ are unknown orthonormal vectors. As we show in \appref{app: prod}, in such a case $k_\star = p$, and thus SGD requires $\tilde\Omega_d(d^{\max(p - 1, 1)})$ iterations to converge.

Compared with the previous results, the price we pay for this added generality is twofold. 
First, we require $d \gg m$, as opposed to the $d \gg \polylog(m)$ requirement of two-layer networks, reflecting that in general deep networks, neurons can interact in highly non-trivial ways.
Second, we can no longer unconditionally bound the gradient condition number $\kappaT$, as we did for two-layer networks. Nevertheless, as discussed in Section \ref{sec: capturing_hardness}, it is reasonable to assume that this parameter (which depends on the $m$-dimensional gradients w.r.t. $W\bx$) does not scale with $d$, which is enough for our results. We also note that if the predictor is a two-layer network, $\kappaT$ can be explicitly bounded even for general multi-index target functions. However, we omit this particular result, as many interesting multi-index targets cannot be captured by small-width two-layer predictors. 

To prove the theorem, we first show in \lemref{lem: mi_bounds_complete} that 
\[
    \norm{\nabla_{W} \Lcal(\theta)}_F \leq \psi\left(\norm{P_W P_U}_\op\right)
    \qquad \text{where} \qquad
    \psi(r) := G_1 \norm{\nabla f^\star(\bx)}_{\fsq}\left(\sqrt{m}r + 1\right) r^{k_\star - 1}~.
\]
Intuitively, the SGD noise is larger than the ``signal'' given by $\norm{\nabla_\bw \Lcal(\theta_t)}$ by a factor depending on $k_\star$.
If we ignore all factors other than the dimension, \thmref{thm: main} suggest that if $T \approx d^{k_\star - 1}$ then $\norm{P_{W_t} P_U}_\op$ remains on the order of $1/\sqrt{d}$ for all $t\in[T]$. This argument is, of course, made precise in the proof. The small alignment then bounds the loss as $\Lcal(\theta) \lesssim \norm{f^\star}_{\fsq}\norm{f_{\theta_t}}_{\fsq}\norm{P_W P_U}_{\op}^{k_\star}$. Lastly, the above bound is vacuous when $k_\star = 1$. To handle the case of $k_\star = 1$, we use a small variant of \thmref{thm: main}, which shows that $\Omega(\epsilon^2 d)$ samples are needed in general. 

\section{Discussion and Some Open Questions}
In this paper, we developed a framework for studying the complexity of standard vanilla SGD when learning high-dimensional single-index and multi-index models. Unlike SQ-based results, which rely on noise assumptions that may not hold in practice, our analysis directly addresses the data-dependent, anisotropic noise of SGD, and contributes to the broader understanding of computational-statistical gaps in high-dimensional learning.  

A natural direction for future work is to extend these techniques beyond vanilla SGD, and possibly, identifying sources of hardness applicable to all gradient-based learning methods. Another interesting direction is to explicitly bound the gradient condition number $\kappaT$ for more general architectures, and to understand whether the gradients tend to be well-behaved (as captured by $\kappaT$) for ``any'' standard predictors. As a sanity check, we note that this does not seem to occur for the synthetic neural network construction of \citet{abbe2020universality, abbe2021power}, discussed in the introduction, which learns parities via SGD\footnote{Their construction involves data-memorization mechanisms, where gradients of certain neurons vanish at some time point, resulting in quantities such as $\kappaT$ diverging to infinity.}. However, understanding whether $\kappaT$ is large only for ``pathological'' constructions remains an open question. Finally, it would be interesting to extend this framework to other classes of learning problems beyond multi-index models.


\acks{
Research was partially supported by PBC-VATAT via the Weizmann Data Science Research Center.
}

\newpage

\bibliography{bib}

\newpage
\appendix


\tableofcontents
\newpage
\allowdisplaybreaks

\section{Limitations of the Statistical Queries Framework}\label{app: sq}
In this appendix, we discuss in more depth some limitations of the SQ framework for analyzing gradient-based learning methods, and SGD in particular. 

As mentioned in the introduction, the Statistical Queries or SQ framework \citep{kearns1998efficient} is a well-established framework for studying noise-tolerant learning algorithms. In this model, rather than directly receiving data samples, it is assumed that the algorithm can only receive (approximate) expectations of various quantities associated with the data. In a nutshell, given an underlying data distribution $\Dcal$, the algorithm can iteratively choose a query function $\phi_t$ mapping to $[0,1]$, and receive a value $v_t$ such that $|\E_{\bz\sim\Dcal}(\phi_t(\bz))-v_t|\leq \tau$, where $\tau$ is a tolerance parameter. In this framework, one can often provide information-theoretic lower bounds on the number of queries or the required tolerance $\tau$ in order to succeed in learning. 

Although originally designed to study noise-tolerant algorithms, the SQ framework has also been increasingly used in recent years to provide evidence of computational and statistical complexity for various learning and estimation tasks (as discussed in the introduction). The underlying rationale is that a ``natural'' algorithm should be able to withstand some amount of noise. Thus, if SQ lower bounds imply that many statistical queries or a very small tolerance are required, this provides evidence that the runtime or sample complexity of ``natural'' algorithms will be large (to execute many queries or estimate expectations to sufficiently high accuracy from an empirical sample). In particular, stochastic gradient descent (SGD) has often been viewed as a ``noise-tolerant'' algorithm, in the sense that it iteratively descends along noisy approximations of the gradient of the population loss. 

In contrast, algorithms that can circumvent SQ lower bounds are considered to be necessarily brittle and impractical. The classic example for this, as mentioned in the introduction, is learning parity functions $\bx\mapsto f_{I}(\bx):=\sum_{i\in I}x_i~\text{mod}~ 2$ for $\bx$ in the Boolean hypercube and $I\subseteq [d]$ \citep{kearns1998efficient}. Given $\tilde{\Ocal}(d)$ i.i.d. samples $(\bx,f_{I}(\bx))$, where $\bx$ is sampled uniformly at random, $f_I$ can be identified in polynomial time by reducing the problem to solving a set of linear equations over the GF(2) field, using Gaussian elimination. However, such an algorithm is extremely noise-intolerant and does not resemble any standard learning method. Moreover, it is well-known that noise-tolerant algorithms (that can be simulated in the SQ framework) provably cannot solve this problem. More recently, \citet{abbe2020universality,abbe2021power} showed that parity learning can also be solved with SGD on a certain neural network, but the architecture is extremely unnatural (causing SGD to simulate arbitrary Turing machines, and Gaussian elimination in particular). So again, it seems that SQ lower bounds can be circumvented only via ``unnatural'' methods. 

However, as acknowledged in many papers on this topic, the connection between SQ lower bounds and computational/statistical complexity is merely a heuristic: We have no formal theorem establishing that SQ lower bounds necessarily apply to all ``natural'' algorithms commonly used in practice, such as SGD. Moreover, SQ lower bounds rely on the noise being adversarial (or at best, stochastic with some particular convenient structure). This generally differs from the noise encountered by such algorithms on actual samples.
%

To further motivate our research question, we note that there are already a few examples in the literature where SQ-based lower bounds provably mis-predict the complexity of various learning and estimation problems when using standard gradient-based methods. In particular:
\begin{itemize}
    \item \citet{li2019mean} provides SQ lower bounds on the problem of estimating the mean of random matrices, when the estimation error is measured in terms of Schatten-$p$ norms. In particular, Theorem 19 in that paper implies that estimating the mean of a random $d\times d$ matrix in the unit ball up to some fixed error (all with respect to the operator norm), using statistical queries, requires either exponentially many (in $\text{poly}(d)$) queries, or the error tolerance is $1/\text{poly}(d)$, implying $\text{poly}(d)$ predicted sample complexity. However, by the matrix Hoeffding bound \citep{tropp2012user}, the average of only $\Ocal(\log(d))$ samples suffices to approximate\ the mean (in operator norm) up to any constant error. Moreover, SGD can easily simulate the computation of the average (say by applying it on the stochastic problem $\min_{W} \E[-\text{trace}(WX)]$ for $\Ocal(\log(d))$ iterations with an appropriate step size). Thus, we see that SQ lower bounds predict $\text{poly}(d)$ sample complexity for this problem, whereas the actual sample complexity is $\Ocal(\log(d))$, easily attainable with SGD. 
    \item \citet{dudeja2021statistical} discuss SQ lower bounds for tensor PCA problems. A specific instantiation of their results is for the so-called spiked Wigner model: Namely, we are given i.i.d. samples of a $d\times d$ random matrix, whose distribution is 
    \[
    \bv\bv^\top+\Xi
    \]
    for some unknown unit vector $\bv$ and where $\Xi$ has i.i.d. standard Gaussian entries. The paper presents an SQ lower bound that implies a predicted sample complexity of order $d^2$. However, the correct sample complexity, both information-theoretically and via gradient-based methods, is only of order $d$. As the authors discuss in their paper, the issue appears to be the adversarial nature of the noise assumed in SQ lower bounds, which does not correctly capture the noise structure encountered by actual stochastic gradient methods for this problem. In particular, by considering gradient-based methods on the empirical version of the problem, $\max_{\bw:\norm{\bw}\leq 1} \bw^\top \tilde{X}\bw$ (where $\tilde{X}$ is the empirical matrix average), gradient-based methods will generally initialize close to the saddle point at the origin, where the gradient tends to be small, and hence can be zeroed out with adversarial perturbations, preventing any information about $\bv$ to leak. However, for analyzing actual gradient methods on the empirical average, the noise has a more favorable structure, which leads to faster convergence
    \item In a somewhat different direction, recent literature on single-index and multi-index learning used correlational statistical query (CSQ) arguments (a weaker variant of the SQ framework), to argue about the tightness of the information exponent \citep{arous2021online} in characterizing the complexity of learning such problems using gradient methods. However, as noted in \cite{dandibenefits,joshi2024complexity, lee2024neural, arnaboldi2024repetita}, these lower bounds can be easily circumvented by making small modifications to the gradient-based learning procedure, leading to faster convergence. While only showing an issue with the CSQ framework, which is weaker than SQ, it does show how lower bounds using SQ-based arguments can be brittle and depend on the precise assumptions used.
\end{itemize}

In what follows, let us introduce yet another, self-contained example of how SQ-based lower bounds can lead to incorrect predictions for simple supervised learning problems. To that end, it will be convenient to slightly expand the SQ framework to vector-valued queries (so it can directly capture, e.g., computation of stochastic gradients). Concretely, given some tolerance parameter $\tau$, we assume that the algorithm can iteratively choose a query function $\phi$, mapping training examples $(\bx,y)$ to the Euclidean unit ball, and get a vector $\bv$ such that
\[
\norm{\E_{(\bx,y)}[\phi(\bx,y)]-\bv}_2\leq \tau~. 
\]

Now, let us consider specifically the Boolean function class 
\[
\Fcal=\{f_i(\bx)=x_i, i\in [d]\}~~,~~ \bx\in\Xcal_d:=\{-1,+1\}^d~.
\]
Moreover, let $\Dcal_i$ be the distribution of $(\bx,y)=(\bx,f_i(\bx))$, where $\bx$ is uniform on $\Xcal_d$. 

Since our function class contains only $d$ candidate target functions, it is easy to see that information theoretically, only $\Ocal(\log(d))$ samples from any $\Dcal_i$ are required to identify $f_i$. Moreover, \citet[Theorem 9]{amid2020winnowing} show that using a simple sparsely-connected two-layer neural network, SGD on the squared loss identifies $i$ after $\Ocal(\log(d))$ iterations. However, as the following theorem shows, the SQ framework predicts $\text{poly}(d)$ complexity, which is exponentially worse:


\begin{theorem}\label{thm:sqbad}
    For any $\tau\in (0,1)$, there is an SQ oracle as above, such that any SQ algorithm with tolerance $\tau$ will require at least $\frac{1}{2}\tau^2 d$ queries to identify $f_i\in \Fcal$ with probability $\geq \frac{1}{2}+\frac{1}{d}$.
\end{theorem}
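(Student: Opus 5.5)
We will build an adversarial oracle that, whenever the answer would give away too much about $i$, returns the answer we would get under a fixed reference distribution. Take $\Dcal_0$ to be the distribution of $(\bx,y)$ with $\bx$ uniform on $\Xcal_d$ and $y$ an independent uniform sign. For each query $\phi$, the oracle returns $\bv:=\E_{\Dcal_0}[\phi(\bx,y)]$ as long as this is $\tau$-close to $\E_{\Dcal_i}[\phi]$. Since any algorithm facing this oracle receives the same answers whatever the target is, it can only tell $i$ apart from those indices whose tolerance constraint gets violated by at least one query. The proof therefore comes down to bounding, for a single query $\phi$, how many indices $i$ satisfy $\norm{\E_{\Dcal_i}\phi-\E_{\Dcal_0}\phi}_2>\tau$.

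To do this, split $\phi$ according to $y$. Put $g(\bx):=\frac12(\phi(\bx,1)-\phi(\bx,-1))$, a vector-valued function whose norm is at most $1$. Under $\Dcal_i$ we have $y=x_i$, and so
\[
\E_{\Dcal_i}[\phi]-\E_{\Dcal_0}[\phi]=\E_{\bx}\big[g(\bx)\,x_i\big]=\hat g(\{i\}),
\]
which is the (vector-valued) degree-one Fourier coefficient of $g$. Because the characters $x_1,\dots,x_d$ are orthonormal, Parseval's identity applied coordinatewise yields
\[
\sum_{i=1}^d\norm{\hat g(\{i\})}_2^2\le \E\norm{g(\bx)}_2^2\le 1.
\]
As a result, at most $1/\tau^2$ indices $i$ can have $\norm{\hat g(\{i\})}_2>\tau$. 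In words, every query rules out at most $1/\tau^2$ candidates. This Parseval step is the core of the argument. The only point that needs care is that the bound really is $\E\norm{g}^2\le1$ once we pass to vector values and factor in the $\tfrac12$ normalization, and this follows from $\norm{\phi}\le1$.

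The counting argument is standard and goes as follows. Fix a deterministic algorithm that makes $q$ queries. The randomized case reduces to this one by averaging over the algorithm's coins, since the oracle's answers do not depend on $i$. Run the algorithm against the $\Dcal_0$-oracle. This produces a fixed sequence of queries and a fixed output $\hat\imath$. Let $B$ be the set of indices ruled out by at least one of these queries, so that $|B|\le q/\tau^2$. For every $i\notin B$, our oracle is a valid $\tau$-tolerant oracle for $\Dcal_i$, and the transcript is exactly the same. The algorithm thus outputs $\hat\imath$, which is correct for at most one such $i$. Now choose $i$ uniformly at random; this is enough, because a lower bound for a random $i$ implies one for the worst-case $i$. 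The success probability is then at most $\frac{|B|+1}{d}\le \frac{q}{\tau^2 d}+\frac1d$. Requiring this to be at least $\frac12+\frac1d$ forces $q\ge \frac12\tau^2 d$, which is the bound claimed in Theorem~\ref{thm:sqbad}.

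The step most likely to require attention is getting the bookkeeping precise: the oracle has to be defined as a single fixed oracle that simultaneously satisfies the tolerance constraint for every $i\notin B$, and the averaging over the algorithm's randomness has to be handled correctly. Note that $B$ depends on the coins but not on $i$, so the bound $\frac{|B|+1}{d}$ holds for each realization of the coins and therefore also in expectation.
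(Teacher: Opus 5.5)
Your proof is correct and follows essentially the paper's argument: the same reference distribution $\Dcal_0$, the same auxiliary function $\frac12(\phi(\bx,1)-\phi(\bx,-1))$, and the same Bessel/Parseval bound $\sum_i \norm{\E_{\Dcal_i}\phi-\E_{\Dcal_0}\phi}^2\le 1$. Your count of ruled-out indices along the transcript produced by the reference answers is the deterministic form of the paper's Markov-plus-union-bound step over a uniformly random $i$. The paper additionally bounds the sum by $\E_{\Dcal_0}\norm{\phi}^2$, using $(\frac{a-b}{2})^2\le\frac{a^2+b^2}{2}$, which it needs for the VSTAT remark but not for the theorem itself.
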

In particular, any choice of $\tau$ will imply either $\poly(1/d)$ tolerance (leading to predicted sample complexity polynomial in $d$, in order to estimate the expectation up to that tolerance), or $\poly(d)$ queries (again leading to predicted sample complexity polynomial in $d$, for SGD methods). The proof is based on standard SQ arguments and is presented in the following section of the appendix (Appendix \ref{app:thmsqbad}).

\begin{remark}\label{remark:vstat}
    A stronger oracle common in the SQ literature, namely the VSTAT oracle, requires the tolerance to be smaller if the query tends to have small values, to capture the fact that the expectation of random vectors with small moments can be estimated more accurately. In our vector-valued setting, such a requirement can take the form
    $
    \norm{\E[\phi(\bx,y)]-\bv}\leq \tau \cdot \sqrt{\E[\norm{\phi(\bx,y)^2}]}
    $. 
    It can be easily verified that the theorem and its proof apply directly, even to such a stronger oracle.  
\end{remark}

The reason that this SQ framework leads to a wrong prediction is that its Euclidean geometry is not commensurate with the geometry of the stochastic noise in this particular problem. One way to see this is to consider perhaps the most natural statistical query for this problem, namely $\phi(\bx,y)=y\bx/\sqrt{d}$. Assuming the target function is $f_i(\bx)=y=x_i$ for some $i$, we observe that $\phi(\bx,y)$ is a random vector, whose $i$-th coordinate is $1/\sqrt{d}$ with probability one, whereas all the other coordinates are independent and uniform on $\{-\frac{1}{\sqrt{d}},+\frac{1}{\sqrt{d}}\}^d$. Thus, averaging $\phi(\bx,y)$ over $\Ocal(\log(d))$ samples and picking the coordinate which equals $1$ will identify $i$ with high probability. However, the SQ framework we use here views $\phi(\bx,y)$ as a random vector in the unit Euclidean ball, where we need to identify its mean from several possible alternatives, each only $\Ocal(1/\sqrt{d})$-far from the origin (in Euclidean distance). In general, estimating the mean of a random vector in the unit Euclidean ball up to accuracy $\Ocal(1/\sqrt{d})$, requires $\Omega(d)$ samples. However, this is tight only in the worst-case over all such distributions, and ignores the fact that for our particular problem, the random vector has a more specific structure, which leads to a much better sample complexity. 

    

\subsection{Proof of Theorem \ref{thm:sqbad}}\label{app:thmsqbad}
    The proof relies on well-known SQ techniques, but we present it in a self-contained way for completeness. In what follows, we let $\Dcal_0$ denote a ``reference distribution'', which is simply the uniform distribution on $(\bx,y)\in \{-1,+1\}^{d+1}$.

    \begin{lemma}
        For any $\phi$ as above, 
        \[
\sum_{i=1}^{d}\left\|\E_{\Dcal_i}[\phi(\bx,y)]-\E_{\Dcal_0}[\phi(\bx,y)]\right\|^2~\leq~ \E_{\Dcal_0}[\norm{\phi(\bx,y)}^2]~\leq~ 1.
        \]
    \end{lemma}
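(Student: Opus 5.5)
The plan is to reduce the lemma to Bessel's inequality for an orthonormal system in $L_2(\Dcal_0)$. Under $\Dcal_0$, $(\bx,y)$ is uniform on $\{-1,+1\}^{d+1}$. The first step is to rewrite expectations under $\Dcal_i$ as expectations under $\Dcal_0$ with a density. Since $\Dcal_i$ is uniform over the $2^d$ points with $y=x_i$, while $\Dcal_0$ is uniform over $2^{d+1}$ points, the likelihood ratio is $\frac{d\Dcal_i}{d\Dcal_0}(\bx,y)=2\cdot\mathbf{1}[y=x_i]=1+yx_i$. Consequently,
\[
\E_{\Dcal_i}[\phi(\bx,y)]-\E_{\Dcal_0}[\phi(\bx,y)]=\E_{\Dcal_0}[\phi(\bx,y)\,yx_i]~.
\]

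Next we note that the functions $g_i(\bx,y):=yx_i$, $i\in[d]$, are orthonormal in $L_2(\Dcal_0)$. Indeed, they are distinct nonconstant parity characters on the hypercube: $\E_{\Dcal_0}[g_ig_j]=\E[x_ix_j]=\mathbf{1}[i=j]$.

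Then we apply Bessel's inequality coordinatewise. Write $\phi=(\phi_1,\dots,\phi_D)$ for the vector-valued query. For each coordinate $k$,
\[
\sum_{i=1}^d \E_{\Dcal_0}[\phi_k\, g_i]^2\leq \E_{\Dcal_0}[\phi_k^2]~.
\]
Summing over $k$ and exchanging the order of summation gives
\[
\sum_{i=1}^d\norm{\E_{\Dcal_0}[\phi\, g_i]}^2\leq \E_{\Dcal_0}[\norm{\phi}^2]~.
\]
Finally, $\E_{\Dcal_0}[\norm{\phi}^2]\leq 1$ because $\phi$ maps into the unit Euclidean ball.

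This argument has no real obstacle. The only point needing care is getting the likelihood ratio right: it must be exactly $1+yx_i$, so that subtracting the $\Dcal_0$ mean leaves precisely the correlation with the character $g_i$. If the query were infinite-dimensional, the coordinatewise Bessel step would need to be replaced by Bessel's inequality in the Hilbert space $L_2(\Dcal_0;\R^D)$, applied to the orthonormal family $\{g_i\}$ acting by scalar multiplication. Either version gives the same bound.
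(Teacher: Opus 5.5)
Your proof is correct and is essentially the paper's argument: both reduce to the scalar case and apply Bessel's inequality to the orthonormal degree-one characters. The only difference is that you work in $L_2(\Dcal_0)$ on the joint space via the likelihood ratio $1+yx_i$, so Bessel directly yields $\E_{\Dcal_0}[\phi^2]$. The paper instead conditions on $\bx$, applies Bessel to $\psi(\bx)=\frac{\phi(\bx,1)-\phi(\bx,-1)}{2}$ against $x_i$, and then needs the extra step $\left(\frac{a-b}{2}\right)^2\le\frac{a^2+b^2}{2}$ to return to $\E_{\Dcal_0}[\phi^2]$; your formulation absorbs that step.
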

    \begin{proof}
        It will suffice to prove the above in the case where $\phi(\bx,y)$ is scalar-valued, namely
        \begin{equation}\label{eq:sqbad0}
                    \sum_{i=1}^{d}\left(\E_{\Dcal_i}[\phi(\bx,y)]-\E_{\Dcal_0}[\phi(\bx,y)]\right)^2~\leq~ \E_{\Dcal_0}[\phi^2(\bx,y)]~.
        \end{equation}
        The vector-valued case follows by applying and summing the inequality for each coordinate of $\phi(\bx,y)$, and noting that by definition of $\phi$, the overall squared norm of $\phi(\bx,y)$ is at most $1$.

        Let $p_i(\bx,y),p_0(\bx,y)$ denote the probability density of $\Dcal_i,\Dcal_0$ respectively. Noting that the marginal distribution of $\bx$ is the same for $\Dcal_i,\Dcal_0$, we have
        \begin{align*}
            \E_{\Dcal_i}[\phi(\bx,y)]-\E_{\Dcal_0}[\phi(\bx,y)]~&=~ \E_{\bx}\left[\sum_{y}\phi(\bx,y)\left(p_i(y|\bx)-p_0(y|\bx)\right)\right]\\
            &=~ \E_{\bx}\left[\phi(\bx,1)\left(\mathbf{1}_{f_i(\bx)=1}-\frac{1}{2}\right)+\phi(\bx,-1)\left(\mathbf{1}_{f_i(\bx)=-1}-\frac{1}{2}\right)\right]\\
            &=~
            \E_{\bx}\left[f_i(\bx)\cdot \frac{\phi(\bx,1)-\phi(\bx,-1)}{2}\right]~.
        \end{align*}
        Letting $\psi(\bx):=\frac{\phi(\bx,1)-\phi(\bx,-1)}{2}$ and plugging in the above, it follows that
        \begin{align*}
        \sum_{i=1}^{d}&\left(\E_{\Dcal_i}[\phi(\bx,y)]-\E_{\Dcal_0}[\phi(\bx,y)]\right)^2~=~ 
            \sum_{i=1}^{d}\left( \E_{\bx}[f_i(\bx)\psi(\bx)]\right)^2~\stackrel{(1)}{\leq}~\E_{\bx}[\psi^2(\bx)]\\
            &=~\E_{\bx}\left[\left(\frac{\phi(\bx,1)-\phi(\bx,-1)}{2}\right)^2\right]~\stackrel{(2)}{\leq}~\E_{\bx}\left[\frac{\phi^2(\bx,1)+\phi^2(\bx,-1)}{2}\right]~=~
        \E_{\Dcal_0}[\phi^2(\bx,y)]~,
        \end{align*}
        where $(1)$ is by Bessel's inequality (observing that $f_1,\ldots,f_d$ are orthonormal functions in $L^2$ space equipped with the standard inner product $\inner{f_i,f_j}=\E_{\bx}[f_i(\bx)f_j(\bx)]=\E[x_i x_j]=\mathbf{1}_{i=j}$), and $(2)$ is by the elementary inequality $\left(\frac{a-b}{2}\right)^2\leq \frac{a^2+b^2}{2}$. This implies \eqref{eq:sqbad0}, from which the lemma follows.
    \end{proof}

    We now show how the lemma implies our theorem. Slightly rewriting the lemma, we have that
    \[
\frac{1}{d}\sum_{i=1}^{d}\left\|\E_{\Dcal_i}[\phi(\bx,y)]-\E_{\Dcal_0}[\phi(\bx,y)]\right\|^2 ~\leq~ \frac{1}{d}.
    \]    
    By Markov's inequality, this implies that for any fixed $\phi$, if we choose $i\in[d]$ uniformly at random, then 
    \[
    \Pr_i\left(\left\|\E_{\Dcal_i}[\phi(\bx,y)]-\E_{\Dcal_0}[\phi(\bx,y)]\right\|^2 \geq \tau^2\right)~\leq~\frac{1}{\tau^2 d}~.
    \]
    Now, let us consider how any SQ algorithm behaves when $i$ and the target function $f_i$ are chosen uniformly at random. Initially, the algorithm chooses a query $\phi_1$ that does not depend on $i$. By the inequality above, with probability at least, $1-\frac{1}{\tau^2 d}$, $\phi_1$ is such that its expectation $\E_{\Dcal_i}[\phi_1(\bx,y)]$ is $\tau$-close to the vector $E_{\Dcal_0}[\phi_1(\bx,y)]$, so the SQ oracle may return this vector. Crucially, this vector does not depend on $i$, so the algorithm does not obtain any information about $i$. As a result, its next query $\phi_2$ will also not depend on $i$, in which case (using the same reasoning), it may get $E_{\Dcal_0}[\phi_2(\bx,y)]$ with probability at least $1-\frac{1}{\tau^2 d}$. By a union bound, after at most $\tau^2 d/2$ queries, the algorithm still receives no information on $i$, with probability at least $\frac{1}{2}$. Hence, its output will be independent of $i$, so it will succeed with probability at most $1/d$. Overall, again by a union bound, we get that the algorithm cannot succeed with probability more than $\frac{1}{2}+\frac{1}{d}$.

\section{Proof of \thmref{thm: main}}\label{app:thmmainproof}
\subsection{Concentration Results}
In what follows, we let $(\bg_t)_{t=1}^{\infty}$ denote a martingale difference sequence on $\reals^d$. We let $\E_t,\Pr_t$ denote expectation and probability with respect to $\bg_t$, conditioned on $\bg_1,\ldots,\bg_{t-1}$ (for example,  $\E_t[\bg_t]=0$ for all $t$ by definition). 

\begin{definition}\label{def: subgauss_mds}
    A martingale difference sequence $(\bg_t)_{t=1}^{T}$ on $\reals^d$ (with corresponding filtration $\Fcal_t=\sigma(\bg_1,\ldots,\bg_t)$), is said to be \emph{$(\lambda_t)_{t=1}^\infty$ sub-Gaussian (for $\lambda_t\in \Fcal_{t-1}$)} if for all $t$, $\gamma>0$ and $\bv\in \reals^d$ with $\norm{\bv}=1$,
    \[
        \Pr\nolimits_t\left(|\bv^\top\bg_t|\geq \gamma \right)~\leq~ 2\exp\left(-\frac{\gamma^2}{\lambda_t^2}\right)~.
    \]
\end{definition}

\begin{lemma}[\citet{jin2019short}]\label{lem:azuma}
    Let $(\bz_j)_{j=1}^{t}$ be a martingale difference sequence on $\reals^d$ (with corresponding filtration $\Fcal_j=\sigma(\bz_1,\ldots,\bz_j)$), such that for all $\gamma>0$, $\Pr_j(\norm{\bz_j}\geq \gamma)\leq 2\exp(- \gamma^2/\sigma_j^2)$, for some fixed $\sigma_j>0$. Then for any $\delta\in (0,1)$, it holds with probability at least $1-\delta$ that
    \[  
    \norm{\sum_{j=1}^t \bz_j} \leq c\sqrt{\sum_{j=1}^t \sigma_j^2 \log\left(\frac{2d}{\delta}\right)}
    \]
    where $c>0$ is a universal constant. 
\end{lemma}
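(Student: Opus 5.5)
The plan is to reduce this vector-valued statement to a scalar Chernoff bound through a symmetric \emph{dilation}, and to control the resulting matrix moment generating function with Lieb's concavity theorem, in the style of Tropp's matrix martingale inequalities. This is the argument of \citet{jin2019short}. For $\bz\in\R^d$ define the $(d+1)\times(d+1)$ symmetric matrix
\[
Y(\bz) := \begin{pmatrix} 0 & \bz^\top \\ \bz & 0 \end{pmatrix}.
\]
Its eigenvalues are $\pm\norm{\bz}$ together with $d-1$ zeros. Since $Y$ is linear in $\bz$, we get $\lambda_{\max}\big(\sum_j Y(\bz_j)\big) = \norm{\sum_j \bz_j}$. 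Writing $Y_j := Y(\bz_j)$, it therefore suffices to bound $\lambda_{\max}\big(\sum_{j\le t} Y_j\big)$ with high probability. The ambient dimension $d+1$ is the source of the $\log(2d/\delta)$ factor.

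\textbf{Step 1: a one-step matrix mgf bound.} I will show that for every $\theta\in\R$,
\[
\E_j\big[\exp(\theta Y_j)\big] \preceq \exp\big(c_0\theta^2\sigma_j^2\big) I
\]
for a universal constant $c_0$. The argument expands the exponential as $I+\theta\E_j[Y_j]+\sum_{k\ge2}\theta^k\E_j[Y_j^k]/k!$. The linear term vanishes because $\E_j[\bz_j]=0$ and $Y$ is linear. For $k\ge2$, use $Y_j^k\preceq \norm{Y_j}_\op^k I=\norm{\bz_j}^k I$; this is valid for even $k$, and for odd $k$ one gets $\pm$ the same bound. The tail assumption then gives
\[
\E_j\norm{\bz_j}^k\le \int_0^\infty k\gamma^{k-1}\cdot 2e^{-\gamma^2/\sigma_j^2}\,d\gamma = k\,\sigma_j^k\,\Gamma(k/2).
\]
Summing the series with $\Gamma(k/2)/k!$ decaying like $1/\Gamma(k/2+1)$ yields $1+c_0'\theta^2\sigma_j^2\,e^{c_0''\theta^2\sigma_j^2}\le e^{c_0\theta^2\sigma_j^2}$. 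Odd terms can also be absorbed into neighbouring even ones by AM--GM.

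\textbf{Step 2: iterating over the martingale.} Let $S_k:=\sum_{j\le k}Y_j$. By Lieb's theorem, $A\mapsto \mathrm{tr}\exp(H+\log A)$ is concave on positive definite $A$. Applying it conditionally on $\Fcal_{k-1}$ with $H=\theta S_{k-1}$, and using Jensen together with Step 1 and monotonicity of $\mathrm{tr}\exp$, gives
\[
\E_k\,\mathrm{tr}\exp(\theta S_k)\le \mathrm{tr}\exp\big(\theta S_{k-1}+c_0\theta^2\sigma_k^2 I\big)=e^{c_0\theta^2\sigma_k^2}\,\mathrm{tr}\exp(\theta S_{k-1}).
\]
Here it matters that the bound from Step 1 is a deterministic multiple of the identity, since the $\sigma_j$ are fixed. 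Taking total expectations and inducting down to $\mathrm{tr}\exp(0)=d+1$ gives
\[
\E\,\mathrm{tr}\exp(\theta S_t)\le (d+1)\exp\Big(c_0\theta^2\sum_{j}\sigma_j^2\Big).
\]

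\textbf{Step 3: Chernoff.} By Markov's inequality and $e^{\theta\lambda_{\max}(S)}\le \mathrm{tr}\,e^{\theta S}$ for $\theta>0$,
\[
\Pr\Big(\norm{\textstyle\sum_j\bz_j}\ge s\Big)\le (d+1)\exp\Big(-\theta s+c_0\theta^2\sum_j\sigma_j^2\Big).
\]
Optimizing at $\theta=s/(2c_0\sum_j\sigma_j^2)$ gives the bound $(d+1)\exp\big(-s^2/(4c_0\sum_j\sigma_j^2)\big)$. Setting this equal to $\delta$ and using $d+1\le 2d$ yields the claimed inequality with a universal $c$.

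I expect the main care to be needed in Step 1, namely obtaining a clean Loewner bound on the matrix mgf with constants that are uniform in $\theta$ from the norm-subGaussian tail alone. Step 2 is the standard Tropp/Lieb iteration and should go through routinely once Step 1 gives a scalar multiple of the identity.
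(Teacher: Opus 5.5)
Your proposal is correct and follows the same route as \citet{jin2019short}: Hermitian dilation, a conditional matrix mgf bound $\E_j[e^{\theta Y_j}]\preceq e^{c_0\theta^2\sigma_j^2}I$ obtained from the moments of the sub-Gaussian scalar $\norm{\bz_j}$, the Lieb/Tropp iteration for $\E\,\mathrm{tr}\exp(\theta S_k)$, and a Chernoff bound using $\log((d+1)/\delta)\le\log(2d/\delta)$. The paper gives no proof of its own and only cites that reference, and the Step~1 details you flag (absorbing the odd terms and getting a constant uniform in $\theta$) go through as you describe.
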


A straightforward corollary regarding one-dimensional projections of a sub-Gaussian MDS (Definition \ref{def: subgauss_mds}) is the following.
\begin{lemma}\label{lem: bound_signal}
    Let $\bg_1, \ldots, \bg_T \in \R^d$ be a $(\lambda_t)_{t=1}^\infty$ sub-Gaussian martingale difference sequence (Definition \ref{def: subgauss_mds}) with fixed $\{\lambda_t\}$, and $\bv \in \R^d$ a fixed nonzero vector, then it holds with probability at least $1-\delta$ that
    \begin{align*}
        \forall t\in [T] ~: \qquad 
        \abs{\sum_{j=1}^t \bg_j^\top \bv} \leq c\sqrt{\sum_{j=1}^t \lambda^2_j \norm{\bv}^2 \log\left(\frac{2T}{\delta}\right)} 
        ~,
    \end{align*}
    where $c$ is an absolute constant.
\end{lemma}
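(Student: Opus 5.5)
The statement is a direct consequence of \lemref{lem:azuma}, applied to scalar projections, together with a union bound over the $T$ prefixes.

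\textbf{Step 1 (scalar martingale).} For each $j$ set $z_j := \bg_j^\top \bv$. Since $\bv$ is fixed (deterministic), $z_j$ is $\Fcal_j$-measurable and $\E_j[z_j] = \bv^\top \E_j[\bg_j] = 0$. Hence $(z_j)$ is a real-valued martingale difference sequence with respect to the same filtration.

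\textbf{Step 2 (tail of each increment).} Write $\bv = \norm{\bv}\hat\bv$ with $\norm{\hat\bv}=1$. Definition~\ref{def: subgauss_mds} applied to $\hat\bv$ gives
\[
\Pr\nolimits_j(|z_j|\ge\gamma) = \Pr\nolimits_j\left(|\hat\bv^\top\bg_j|\ge \gamma/\norm{\bv}\right) \le 2\exp\left(-\frac{\gamma^2}{\lambda_j^2\norm{\bv}^2}\right).
\]
So $z_j$ satisfies the hypothesis of \lemref{lem:azuma} in dimension one, with $\sigma_j = \lambda_j\norm{\bv}$. These $\sigma_j$ are fixed constants because the $\lambda_j$ are assumed fixed.

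\textbf{Step 3 (fixed $t$).} For a fixed $t\in[T]$, \lemref{lem:azuma} with $d=1$ and failure probability $\delta/T$ gives, with probability at least $1-\delta/T$,
\[
\abs{\sum_{j=1}^t z_j} \le c\sqrt{\sum_{j=1}^t\lambda_j^2\norm{\bv}^2\log(2T/\delta)}.
\]
The argument of the logarithm is $2\cdot 1/(\delta/T) = 2T/\delta$.

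\textbf{Step 4 (uniformity in $t$).} A union bound over the $T$ values of $t$ gives the bound simultaneously for all $t\in[T]$ with probability at least $1-\delta$. One could instead use a maximal inequality (Freedman/Doob) to avoid the union bound, but it is unnecessary here because the $\log T$ factor is already allowed in the statement.

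The only point needing care is the measurability and fixedness hypotheses of \lemref{lem:azuma}. It requires deterministic $\sigma_j$, which is why the statement assumes fixed $\{\lambda_t\}$. In later applications the $\lambda_t$ are $\Fcal_{t-1}$-measurable; there one would replace them by a deterministic upper bound $\bar\lambda \ge \lambda_t$, for example one implied by $\bar\kappa$ and $G$, and apply the same argument. Apart from this, the proof is routine.
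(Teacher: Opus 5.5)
Your proposal is correct and follows the paper's proof essentially step for step. You rescale the sub-Gaussian tail by $\norm{\bv}$, apply \lemref{lem:azuma} in dimension one for each fixed $t$ with failure probability $\delta/T$, and take a union bound over $t\in[T]$.
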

\begin{proof}
    Fix $\delta'>0$ to be decided later. Since $(\bg_t)_t$ is a sub-Gaussian MDS (Definition \ref{def: subgauss_mds}) it holds that for any $t \in [T]$, 
    \begin{align*}
        \Pr\nolimits_t\left(\abs{\bg_t^\top \bv}\geq \gamma\right) = \Pr\nolimits_t\left(\abs{\bg_t^\top \frac{\bv}{\norm{\bv}}} \geq \frac{\gamma}{\norm{\bv}}\right) \leq 2\exp\left(-\frac{\gamma^2}{\lambda^2_t \norm{\bv}^2}\right) ~.
    \end{align*}
    For any $t\in[T]$, applying \lemref{lem:azuma} (in dimension 1) up to time $t$, it holds with probability at least $1-\delta'$ that for some absolute constant $c>0$,
    \begin{align*}
     \abs{\sum_{j=1}^t \bg_j^\top \bv} \leq c\sqrt{\sum_{j=1}^t \lambda^2_j \norm{\bv}^2 \log\left(\frac{2}{\delta'}\right)} ~.
    \end{align*}  
    Taking $\delta'=\delta/T$ and applying a union bound over all $t\in [T]$ completes the proof 
\end{proof}

\begin{lemma}[\citet{jin2019short}]\label{lem:martcon}
    Let $(\bz_j)_{j=1}^{t}$ be a martingale difference sequence on $\reals^d$ (with corresponding filtration $\Fcal_j=\sigma(\bz_1,\ldots,\bz_j)$), such that for all $\gamma>0$, $\Pr_j(\norm{\bz_j}\geq \gamma)\leq 2\exp(- \gamma^2/\sigma_j^2)$ for some possibly-random $\sigma_j\in \Fcal_{j-1}$. Then for any $\delta\in (0,1)$ and $B,b>0$ with $B\geq eb$, it holds with probability at least $1-\delta$ that
    \[  
    \text{either} \sum_{j=1}^{t}\sigma_j^2\geq B~~~~\text{or}~~~~\left\|\sum_{j=1}^{t}\bz_j\right\|~\leq~c\cdot\sqrt{\max\left\{\sum_{j=1}^{t}\sigma_j^2,b\right\}\cdot\left(\log\left(\frac{2d}{\delta}\right)+\log\log\left(\frac{B}{b}\right)\right)}~,
    \]
    where $c>0$ is a universal constant. 
\end{lemma}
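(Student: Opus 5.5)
This lemma is quoted from \citet{jin2019short}. To prove it I would use a matrix exponential supermartingale, built from the Hermitian dilation, together with a peeling (grid) argument over the unknown scale of $\sum_j\sigma_j^2$.

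\textbf{Step 1: one-step MGF bound.} For each $j$ define the $(d+1)\times(d+1)$ symmetric dilation
\[
Y_j:=\begin{pmatrix}0 & \bz_j^\top\\ \bz_j & 0\end{pmatrix},
\qquad
\lambda_{\max}\Big(\sum_{j\le t}Y_j\Big)=\Big\|\sum_{j\le t}\bz_j\Big\|.
\]
The aim is a conditional bound $\E_j[\exp(\theta Y_j)]\preceq \exp(c_0\theta^2\sigma_j^2)\,I$ for all $\theta\in\reals$, with $c_0$ a universal constant. To get it, expand $\exp(\theta Y_j)=I+\theta Y_j+\sum_{k\ge2}\theta^kY_j^k/k!$. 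The linear term vanishes because $\E_j[\bz_j]=0$. For $k\ge 2$, use $Y_j^k\preceq \|Y_j\|^k I=\|\bz_j\|^kI$. Integrating the tail assumption $\Pr_j(\|\bz_j\|\ge\gamma)\le 2e^{-\gamma^2/\sigma_j^2}$ bounds the moments by $\E_j\|\bz_j\|^k\le 2\,\Gamma(k/2+1)\sigma_j^k$. Summing these gives a scalar bound $1+\sum_{k\ge2}\frac{2\Gamma(k/2+1)}{k!}(|\theta|\sigma_j)^k\le \exp(c_0\theta^2\sigma_j^2)$. This is the standard sub-Gaussian MGF computation, applied to a scalar that dominates the matrix series. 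I expect this to be the main technical point: the odd-order terms cannot be handled through moments of $\bv^\top\bz_j$, so the crude operator domination must be used for every $k\ge2$. The only zero-mean input is the linear term.

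\textbf{Step 2: supermartingale and fixed-$\theta$ bound.} Fix $\theta>0$ and set $M_t:=\mathrm{tr}\exp\big(\theta\sum_{j\le t}Y_j-c_0\theta^2\sum_{j\le t}\sigma_j^2 I\big)$. Since $\sigma_t\in\Fcal_{t-1}$, Lieb's concavity theorem (as in Tropp's matrix martingale arguments) combined with Step 1 gives $\E_t[M_t]\le M_{t-1}$, so $\E M_t\le d+1$. By Markov's inequality and $\lambda_{\max}\le\mathrm{tr}$ for exponentials, with probability $1-\delta'$,
\[
\Big\|\sum_{j\le t}\bz_j\Big\|\le c_0\theta\sum_{j\le t}\sigma_j^2+\frac{\log((d+1)/\delta')}{\theta}.
\]

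\textbf{Step 3: peeling.} The optimal $\theta$ depends on the random quantity $S:=\sum_{j\le t}\sigma_j^2$, so I take a grid. Let $K:=\lceil\log(B/b)\rceil$, and for $k=0,\dots,K$ let $\theta_k:=\sqrt{\log((d+1)/\delta')/(b e^{k})}$. Apply Step 2 to each $\theta_k$ with $\delta'=\delta/(K+1)$ and take a union bound. On the event $S<B$, choose $k$ with $be^k\le\max\{S,b\}<be^{k+1}$. This gives
\[
\Big\|\sum_{j\le t}\bz_j\Big\|\lesssim\sqrt{\max\{S,b\}\,\log\big((d+1)(K+1)/\delta\big)},
\]
which has the claimed form $c\sqrt{\max\{S,b\}(\log(2d/\delta)+\log\log(B/b))}$. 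Here $B\ge eb$ ensures $\log\log(B/b)\ge0$ and lets the $\log(K+1)$ term be absorbed into the constant. If instead $S\ge B$, the first alternative holds, which completes the proof.
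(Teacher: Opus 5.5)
Your proof is correct. It is essentially the argument of \citet{jin2019short}, which the paper cites without reproducing: a Hermitian dilation with the one-step bound $\E_j[e^{\theta Y_j}]\preceq e^{c_0\theta^2\sigma_j^2}I$, Tropp's Lieb-based trace-exponential supermartingale (valid since $\sigma_j\in\Fcal_{j-1}$) giving the fixed-$\theta$ bound, and a union bound over a geometric grid of $\theta$ values to handle the random scale $\sum_j\sigma_j^2$; your peeling bookkeeping (that $k\le K$ on $\{S<B\}$, and that $\log(K+1)$ and $\log(d+1)$ are absorbed into $\log(2d/\delta)+\log\log(B/b)$ using $B\geq eb$) also checks out.
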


A straightforward corollary of this, in a form more suitable for our purposes, is the following:

\begin{lemma}\label{lem:martcon2}
    Let $(\bz_j)_{j=1}^{T}$ be a martingale difference sequence on $\reals^d$ (with corresponding filtration $\Fcal_j=\sigma(\bz_1,\ldots,\bz_j)$), such that for all $\gamma>0$, $\Pr_j(\norm{\bz_j}\geq \gamma)\leq 2\exp(- \gamma^2/\sigma_j^2)$ for some possibly-random $\sigma_j\in \Fcal_{j-1}$. For each $j\in[T]$ let $\beta_j > 0$ be deterministic.
    Then for any $\delta\in (0,1)$, it holds with probability at least $1-\delta$ that for all $t\in [T]$, either $\sum_{j=1}^{t}\sigma_j^2\geq \sum_{j=1}^t \beta_j^2$ or
    \begin{align*}
    \left\|
    \sum_{j=1}^{t}\bz_j\right\|
    ~\leq~c
    \cdot \sqrt{\sum_{j=1}^{t}\beta_j^2 \cdot\left(\log\left(\frac{2Td}{\delta}\right) + \log\log\left(e\frac{\sum_{j=1}^t \beta_j^2}{\beta_1^2}\right)\right)}~.
    \end{align*}  
\end{lemma}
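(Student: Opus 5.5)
The plan is to derive this directly from \lemref{lem:martcon} by choosing its parameters $B,b$ for each fixed time $t$, and then taking a union bound over $t\in[T]$. Since the $\beta_j$ are deterministic, any $B,b$ built from them are fixed (non-random) constants, which is what \lemref{lem:martcon} requires.

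\textbf{Parameter choice for a fixed $t$.} Fix $t\in[T]$ and set
\[
B_t:=\sum_{j=1}^t\beta_j^2,\qquad b:=\beta_1^2/e .
\]
The requirement $B_t\geq e b$ of \lemref{lem:martcon} holds because $B_t\geq\beta_1^2=eb$. Moreover
\[
\frac{B_t}{b}=e\,\frac{\sum_{j=1}^t\beta_j^2}{\beta_1^2},
\]
which gives exactly the $\log\log$ term in the statement. Apply \lemref{lem:martcon} to $(\bz_j)_{j=1}^t$ with confidence $\delta':=\delta/T$. Then with probability at least $1-\delta/T$, either $\sum_{j\le t}\sigma_j^2\geq B_t$, or
\[
\Big\|\sum_{j=1}^t\bz_j\Big\|\leq c\sqrt{\max\Big\{\sum_{j\le t}\sigma_j^2,\,b\Big\}\Big(\log\tfrac{2Td}{\delta}+\log\log\big(e\tfrac{B_t}{\beta_1^2}\big)\Big)} .
\]

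\textbf{Removing the random variance.} In the second case we have $\sum_{j\le t}\sigma_j^2<B_t$, and also $b\leq\beta_1^2\leq B_t$. Hence $\max\{\sum_{j\le t}\sigma_j^2,\,b\}\leq B_t=\sum_{j\le t}\beta_j^2$. Substituting this bound yields the displayed inequality of the lemma, with the same constant $c$.

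\textbf{Uniformity over $t$.} A union bound over the $T$ values of $t$ shows that, with probability at least $1-\delta$, the dichotomy holds simultaneously for all $t\in[T]$.

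There is no real obstacle here. The only point needing care is that $B$ and $b$ must be chosen deterministically and satisfy $B\geq eb$. This is why $b$ is taken to be $\beta_1^2/e$ rather than $\beta_1^2$: it makes the constraint hold with the $\log\log$ argument matching the statement exactly.
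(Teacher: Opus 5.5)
Your proposal is correct and follows the paper's proof essentially line for line. You apply \lemref{lem:martcon} for each fixed $t$ with $B=\sum_{j\le t}\beta_j^2$, $b=\beta_1^2/e$ and confidence $\delta/T$, bound $\max\{\sum_{j\le t}\sigma_j^2,b\}$ by $B$ when the first alternative fails, and take a union bound over $t\in[T]$.
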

\begin{proof}
    Fix some $t\in [T]$ and $\delta'>0$ to be decided later. Let $B=\sum_{j=1}^t \beta_j^2$ and $b=\beta_1^2 / e$. It trivially holds that $B\geq eb$, so applying \lemref{lem:martcon} with these choices, we get that  with probability at least $1-\delta'$, either $\sum_{j=1}^{t}\sigma_j^2\geq \sum_{j=1}^t \beta_j^2$ or
    \begin{align*}
    \left\|
    \sum_{j=1}^{t}\bz_j\right\|
    ~\leq~c
    \cdot\sqrt{\max\left\{\sum_{j=1}^{t}\sigma_j^2, \frac{\beta_1^2}{e}\right\}\cdot\left(\log\left(\frac{2d}{\delta'}\right) + \log\log\left(e\frac{\sum_{j=1}^t \beta_j^2}{\beta_1^2}\right)\right)}~.
    \end{align*}  
    Note that in the event that $\sum_{j=1}^{t}\sigma_j^2\leq \sum_{j=1}^t \beta_j^2$, we may upper bound $\max\left\{\sum_{j=1}^{t}\sigma_j^2, \frac{\beta_1^2}{e}\right\} \leq \sum_{j=1}^t \beta_j^2$. Taking $\delta' = \delta / T$ and applying a union bound over all $t\in [T]$ finishes the proof. 
\end{proof}

\begin{lemma}[Structure at Initialization]\label{lem: w0}
    Under Assumption \ref{ass: init}, let $\bw_0 := W_0^\top \bv$ for some fixed unit vector $\bv$. There exists an absolute constant $c>0$  any $\delta > 0$ such that 
    \begin{enumerate}
        \item For all nonzero $\bu \in \R^d$, 
        \begin{align*}
        \Pr\left(\abs{\langle \bw_0, \bu \rangle} \geq \gamma \right) \leq 2\exp\left(-\frac{\gamma^2 d}{c^2K_3^2 \norm{\bu}^2}\right).
        \end{align*}
    \end{enumerate}
    Moreover, all of the following hold simultaneously with probability at least $1-\delta$:
    \begin{enumerate}[start=2]
        \item For all $\bu \in \R^d$, $\abs{\langle \bw_0, \bu \rangle} \leq cK_3 \norm{\bu} \sqrt{\frac{\log\left(\frac{4}{\delta}\right)}{d}}$, 
        \item $\abs{\norm{\bw_0} -1} \leq  cK_3^2 \sqrt{\frac{\log\left(\frac{4}{\delta}\right)}{d}}$.
    \end{enumerate}
\end{lemma}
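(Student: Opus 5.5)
The plan is to reduce all three claims to the fact that $\bw_0 = W_0^\top \bv = \sum_{i=1}^m v_i W_{0,i}$ (where $W_{0,i}$ is the $i$-th row) has i.i.d. entries. Each coordinate $(\bw_0)_j = \sum_i v_i (W_0)_{ij}$ is a weighted sum of independent, zero-mean, sub-Gaussian variables. By Assumption \ref{ass: init} each $(W_0)_{ij}$ has sub-Gaussian norm $\lesssim K_3/\sqrt{d}$. General Hoeffding (sub-Gaussian norm of independent sums) then gives $\|(\bw_0)_j\|_{\psi_2}\lesssim K_3\|\bv\|/\sqrt d = K_3/\sqrt d$. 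The coordinates are independent across $j$ and each has variance $\sum_i v_i^2/d = 1/d$.

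\textbf{Item 1.} For fixed $\bu$, $\langle \bw_0,\bu\rangle=\sum_j u_j(\bw_0)_j$ is again a sum of independent zero-mean sub-Gaussians. Hoeffding gives sub-Gaussian norm $\lesssim K_3\|\bu\|/\sqrt d$, hence the tail
\[
2\exp\left(-\frac{\gamma^2 d}{c^2K_3^2\|\bu\|^2}\right)
\]
for an absolute $c$.

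\textbf{Item 3.} Write $\|\bw_0\|^2=\sum_j (\bw_0)_j^2$. The summands $(\bw_0)_j^2-1/d$ are independent, centered, and sub-exponential with norm $\lesssim K_3^2/d$. Bernstein's inequality then gives, with probability $1-\delta/2$,
\[
\bigl|\|\bw_0\|^2-1\bigr|\lesssim K_3^2\left(\sqrt{\log(4/\delta)/d}+\log(4/\delta)/d\right).
\]
Using $|\|\bw_0\|-1|\le|\|\bw_0\|^2-1|$ converts this to the norm. The second term is absorbed once $\log(4/\delta)\le d$. Otherwise the bound exceeds $K_3^2\cdot\Omega(1)$ and can be handled by a cruder argument: inflate $c$, and use $K_3\gtrsim 1$, which is forced by the unit variance.

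\textbf{Item 2 (the delicate point).} As literally written, "for all $\bu$" cannot hold uniformly: take $\bu=\bw_0$, which gives $\|\bw_0\|^2\approx1$ against $\sqrt{\log/d}$. So I would read item 2 as holding for a fixed $\bu$ chosen before $W_0$. That is the way it is used later, with $\bu$ the rows of $U$ and a union bound over $p$ directions absorbed in the log. Under that reading, apply item 1 with $\gamma=cK_3\|\bu\|\sqrt{\log(4/\delta)/d}$, which gives failure probability $\delta/2$. A union bound with item 3 finishes the argument.

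The main obstacle is this quantifier issue in item 2. A genuinely uniform statement would need restriction to a fixed low-dimensional subspace. In that case one would take a net over $\Row(U)$ and pay a $\sqrt{p}$ factor, or equivalently bound $\|P_U\bw_0\|$ via Item 3-type concentration applied to the $p$-dimensional projection.
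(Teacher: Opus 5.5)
Your proof is correct and follows the paper's. You use Hoeffding for item 1, and item 1 at $\gamma = cK_3\norm{\bu}\sqrt{\log(4/\delta)/d}$ for item 2; the paper's own proof also only yields item 2 for a fixed $\bu$, so your reading of the quantifier is the right one. For item 3 you use norm concentration for the independent, $O(K_3/\sqrt{d})$-sub-Gaussian coordinates of $\bw_0$, where the paper simply cites Vershynin's Theorem 3.1.1. That theorem's proof is your Bernstein argument plus the inequality $|z-1|\le\min\{|z^2-1|,\sqrt{|z^2-1|}\}$ for $z\geq 0$. Together with $K_3\gtrsim 1$, this handles the regime $\log(4/\delta)>d$ directly, replacing the vaguer ``cruder argument'' you sketch.
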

\begin{proof}
    (1) By \citet{vershynin2010introduction}[Lemma 5.24], Assumption \ref{ass: init} implies that for some absolute constant $C>0$, any $(W_0^\top)_i$ (where $i\in[m]$) and any $\gamma > 0$
    \begin{align*}
        \Pr\left(\abs{\langle (W_0^\top)_i, \bu \rangle} \geq \gamma \right) \leq 2\exp\left(-\frac{\gamma^2 d}{C^2K_3^2 \norm{\bu}^2}\right).
    \end{align*}
    
    As such, by the sub-Gaussian Hoeffding inequality \citep{vershynin2025high}[Theorem 2.7.3] it holds that
    \begin{align*}
        \Pr\left(\abs{\langle \bw_0, \bu \rangle} \geq \gamma \right) = & \Pr\left(\abs{\sum_{i=1}^m v_i \langle (W_0^\top)_i, \bu \rangle} \geq \gamma \right)
        \leq 2\exp\left(-\frac{\gamma^2 d}{C^2K_3^2 \norm{\bu}^2 \norm{\bv}^2} \right) \\ 
        = & 2\exp\left(-\frac{\gamma^2 d}{C^2K_3^2 \norm{\bu}^2} \right),
    \end{align*}
    where we replaced $C$ by another absolute constant.
    Taking $\gamma = CK_3 \norm{\bu} \sqrt{\frac{\log\left(\frac{4}{\delta}\right)}{d}}$ gives (2) with probability at least $1-\delta/2$.
    For (3), first note that $\sqrt{d}\bw_0$ has i.i.d. variance 1 coordinates since for any $j\in[d]$, $\var(\sum_i (W_0)_{i,j} v_i) = \sum_i \var((W_0)_{i,j}) v_i^2 = \sum_i v_i^2 = 1$). Furthermore, (1) implies that $\sqrt{d}\bw_0$ is $CK_3$ sub-Gaussian, in the sense that for any nonzero $\bu\in\R^d$
    \begin{align*}
        \Pr\left(\abs{\langle \sqrt{d}\bw_0, \bu \rangle} \geq \gamma \right) \leq 2\exp\left(-\frac{\gamma^2}{C^2K_3^2 \norm{\bu}^2}\right).
    \end{align*}
    As such, \citet{vershynin2025high}[Theorem 3.1.1] implies that for any $\gamma > 0$
    \begin{align*}
        \Pr\left(\abs{\norm{\bw_0} - 1} \geq \frac{\gamma}{\sqrt{d}} \right) = \Pr\left(\abs{\norm{\sqrt{d}\bw_0} - \sqrt{d}} \geq \gamma \right) \leq 2\exp\left(-\frac{\gamma^2}{C' K_3^4}\right)
    \end{align*}
    where $C'$ is another universal constant. 
    Taking $\gamma = C'K_3^2 \sqrt{\log\left(\frac{4}{\delta}\right)}$ gives (3) with probability at least $1-\delta/2$. 
    
    As a final note, in the lemma formulation, the constant $c$ is taken to be the maximum of all constants that we need.
\end{proof}

We will also use the following bound on the spectral norm of $W_0$:
\begin{lemma}[\cite{vershynin2025high} Theorem 4.6.1]\label{lem: op_w0}
    Under Assumption \ref{ass: init}, and assuming $m \leq d$, there exists an absolute constant $C>0$, such that for any $\delta>0$ it holds with probability at least $1-\delta$ that
    \begin{align*}
        1 -  CK_3^2\left(\sqrt{\frac{m}{d}} + \sqrt{\frac{\log\left(\frac{2}{\delta}\right)}{d}}\right) 
        \leq s_m(W_0) \leq s_1(W_0) 
        \leq 1 +  CK_3^2\left(\sqrt{\frac{m}{d}} + \sqrt{\frac{\log\left(\frac{2}{\delta}\right)}{d}}\right),
    \end{align*}
    where $s_i(W_0)$ denote the singular values of $W_0$ in decreasing order.
\end{lemma}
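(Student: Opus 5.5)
The plan is to reduce the statement directly to the standard two-sided bound on the extreme singular values of a tall random matrix with independent, isotropic, sub-Gaussian rows (\citet{vershynin2025high}, Theorem 4.6.1). That theorem says the following. Let $A\in\R^{N\times n}$ have independent, mean-zero, isotropic rows with sub-Gaussian norm at most $K$. Then for every $t\geq 0$, with probability at least $1-2\exp(-t^2)$,
\[
\sqrt{N}-CK^2(\sqrt{n}+t)\leq s_n(A)\leq s_1(A)\leq \sqrt{N}+CK^2(\sqrt{n}+t).
\]
I will apply it with $N=d$, $n=m$ and $A:=\sqrt{d}\,W_0^\top\in\R^{d\times m}$, so that $N\geq n$ holds by the assumption $m\leq d$.

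First, I verify the hypotheses. By Assumption~\ref{ass: init}, the entries of $A$ are i.i.d., mean zero and of unit variance. Their scalar sub-Gaussian tail is $\Pr(|A_{ji}|\geq\gamma)\leq 2\exp(-\gamma^2/K_3^2)$, so each entry has sub-Gaussian norm $\lesssim K_3$.

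Each row $\mathbf{a}_j\in\R^m$ of $A$ is the $j$-th column of $\sqrt{d}W_0$ and has independent coordinates. Therefore $\E[\mathbf{a}_j\mathbf{a}_j^\top]=I_m$, so the rows are isotropic. Since the columns of $W_0$ are independent, the rows $\mathbf{a}_j$ are independent of each other. Their vector sub-Gaussian norm is bounded by a constant times $K_3$, because a vector with independent, mean-zero sub-Gaussian coordinates has $\|\langle\mathbf{a}_j,\bv\rangle\|_{\psi_2}\lesssim\max_i\|A_{ji}\|_{\psi_2}$ for unit $\bv$ (Hoeffding / Lemma 3.4.2 in \citet{vershynin2025high}). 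This is the same step used in \lemref{lem: w0}(1). Hence the theorem applies with $K=cK_3$.

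Next, I translate the conclusion back to $W_0$. The nonzero singular values of $W_0$ and $W_0^\top$ coincide, and since $m\leq d$ there are exactly $m$ of them. So $s_i(W_0)=s_i(A)/\sqrt{d}$ for all $i\in[m]$. Dividing the displayed bound by $\sqrt{d}$ and absorbing $c^2$ into $C$ gives
\[
1-CK_3^2\left(\sqrt{\frac{m}{d}}+\frac{t}{\sqrt d}\right)\leq s_m(W_0)\leq s_1(W_0)\leq 1+CK_3^2\left(\sqrt{\frac{m}{d}}+\frac{t}{\sqrt d}\right).
\]
Choosing $t=\sqrt{\log(2/\delta)}$ makes the failure probability $2\exp(-t^2)=\delta$, which yields exactly the claimed bound.

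The only step requiring care is checking that the rows of $A$ are sub-Gaussian as vectors with norm $\bigo(K_3)$, rather than merely entrywise sub-Gaussian. This is handled by the Hoeffding-type bound above.

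If one wanted a self-contained argument instead of citing the theorem, I would follow the usual route. Fix a $1/4$-net $\mathcal{N}$ of $\Sphere^{m-1}$ of size $9^m$. For each $\bv\in\mathcal{N}$, $\norm{A\bv}^2-d=\sum_j(\langle\mathbf{a}_j,\bv\rangle^2-1)$ is a sum of independent, centered sub-exponential variables, so Bernstein's inequality bounds its deviation by $d\max(\varepsilon,\varepsilon^2)$ with $\varepsilon\asymp K^2(\sqrt{m}+t)/\sqrt{d}$. A union bound over $\mathcal{N}$ and the standard net-approximation of $\norm{A^\top A/d-I}_{\op}$ then finish the argument. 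The main obstacle there is only bookkeeping the constants in the union bound against the $9^m$ net size, which is why the term $\sqrt{m/d}$ appears.
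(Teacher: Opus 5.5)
Your proposal is correct and follows the same route as the paper, which simply cites \citet{vershynin2025high}, Theorem 4.6.1. Your transposition to $A=\sqrt{d}\,W_0^\top$, the check that its rows are independent, isotropic and sub-Gaussian, and the choice $t=\sqrt{\log(2/\delta)}$ are exactly the details the paper leaves implicit. One caveat: your entrywise bound $\Pr(|A_{ji}|\geq\gamma)\leq 2\exp(-\gamma^2/K_3^2)$ holds only if Assumption~\ref{ass: init} is read as $\Pr(|(W_0)_{ij}|\geq\gamma)\leq 2\exp(-d\gamma^2/K_3^2)$. The printed $2\exp(-\gamma^2/(dK_3^2))$ appears to be a typo and is too weak for the lemma, whereas the corrected reading is the one the paper itself uses in \lemref{lem: w0}.
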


\subsection{Norm Bounds}
Throughout the proofs, $\bx_t$ will always denote i.i.d inputs as described in Section \ref{sec:prelim}. $W_t$ will denote the parameters at time $t$. The lemmas in this subsection will provide bounds on the norm of the weights throughout training.
\begin{lemma}\label{lem: norm_step1}
    Under Assumption \ref{ass: norm_conc}, for any $\delta > 0$, with probability at least $1-\delta$ the following holds for all $t\in[T]$:
    \begin{align*}
        \abs{\norm{\bx_t}^2 - \alpha_2 d}  
        \leq 
        K_2^2 \left(\sqrt{d\log\left(\frac{2T}{\delta}\right)} + \log\left(\frac{2T}{\delta}\right)\right).
    \end{align*}
    In particular, if $d\geq \max\left(\frac{16 K_2^4}{\alpha_2^2} , \frac{4 K_2^2}{\alpha_2}\right) \log\left(\frac{2T}{\delta}\right)$, then the above event implies that for all $t\in[T]$,
    \begin{align*}
        \frac{\alpha_2 d}{2} \leq \norm{\bx_t}^2 \leq \frac{3 \alpha_2 d}{2}.
    \end{align*}
\end{lemma}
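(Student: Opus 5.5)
The plan is to apply Assumption~\ref{ass: norm_conc} to each $\bx_t$ separately and then take a union bound over $t\in[T]$. Since the $\bx_t$ are i.i.d.\ copies of $\bx\sim\Dcal$, for each fixed $t$ and any $\gamma>0$ we have
\[
\Pr\left(\abs{\norm{\bx_t}^2-\alpha_2 d}\geq \gamma\right)\leq 2\exp\left(-\min\left(\frac{\gamma^2}{K_2^4 d},\frac{\gamma}{K_2^2}\right)\right).
\]
Set $L:=\log\left(\frac{2T}{\delta}\right)$ and choose $\gamma:=K_2^2\left(\sqrt{dL}+L\right)$. Then $\gamma^2/(K_2^4 d)\geq dL/d = L$, and $\gamma/K_2^2\geq L$, so the minimum is at least $L$. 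Each individual failure probability is therefore at most $2e^{-L}=\delta/T$. A union bound over the $T$ indices gives the first claim with probability at least $1-\delta$.

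For the second claim, it suffices to show that the deviation bound is at most $\alpha_2 d/2$. We split it into its two terms and bound each by $\alpha_2 d/4$:
\begin{itemize}
\item $K_2^2\sqrt{dL}\leq \alpha_2 d/4$ is equivalent to $16K_2^4 L\leq \alpha_2^2 d$, i.e.\ $d\geq 16K_2^4L/\alpha_2^2$.
\item $K_2^2 L\leq \alpha_2 d/4$ is equivalent to $d\geq 4K_2^2L/\alpha_2$.
\end{itemize}
Both conditions follow from the assumed lower bound on $d$. Hence $\abs{\norm{\bx_t}^2-\alpha_2 d}\leq \alpha_2 d/2$ for all $t\in[T]$, which yields $\frac{\alpha_2 d}{2}\leq\norm{\bx_t}^2\leq\frac{3\alpha_2 d}{2}$.

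The argument is routine, and the only delicate point is choosing $\gamma$ so that it handles both regimes of the sub-exponential tail at once. Taking the sum $\sqrt{dL}+L$, rather than a case split, makes each branch of the minimum at least $L$.
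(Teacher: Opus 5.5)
Your proposal is correct and follows essentially the same route as the paper: apply Assumption~\ref{ass: norm_conc} to each $\bx_t$ with failure probability $\delta/T$, take a union bound, and then bound $\sqrt{dL}$ and $L$ each by $\alpha_2 d/(4K_2^2)$ for the ``in particular'' part. The only difference is cosmetic: the paper takes the threshold $\max\bigl(K_2^2\sqrt{dL},K_2^2L\bigr)$ and then bounds it by the sum, whereas you plug in the sum directly.
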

\begin{proof}
    By Assumption \ref{ass: norm_conc}, for any $t\in [T]$, $\delta'>0$
    \begin{align*}
        \Pr\left(\abs{\norm{\bx_t}^2 - \alpha_2 d} \geq \max\left(K_2^2 \sqrt{d\log\left(\frac{2}{\delta'}\right)} ~,~ K_2^2 \log\left(\frac{2}{\delta'}\right)\right) \right) \leq \delta'.
    \end{align*}
    Taking $\delta' = \delta / T$ and applying a union bound over all $t \in [T]$, it follows that with probability at least $1-\delta$, for all $t\in [T]$
    \begin{align*}
        \abs{\norm{\bx_t}^2 - \alpha_2 d}  
        \leq 
        K_2^2 \left(\sqrt{d\log\left(\frac{2T}{\delta}\right)} + \log\left(\frac{2T}{\delta}\right)\right).
    \end{align*}
    For the ``moreover'' part, assume $d\geq \max\left(\frac{16 K_2^4}{\alpha_2^2} , \frac{4 K_2^2}{\alpha_2}\right) \log\left(\frac{2T}{\delta}\right)$. Then
    \begin{align*}
        \log\left(\frac{2T}{\delta}\right)\leq \frac{\alpha_2^2 d}{16 K_2^4}
        \qquad\text{and}\qquad
        \log\left(\frac{2T}{\delta}\right)\leq \frac{\alpha_2 d}{4 K_2^2}.
    \end{align*}
    Therefore,
    \begin{align*}
        \abs{\norm{\bx_t}^2 - \alpha_2 d}  
        \leq
        K_2^2 \left(\sqrt{d\log\left(\frac{2T}{\delta}\right)} + \log\left(\frac{2T}{\delta}\right)\right) 
        \leq  K_2^2 \left(\sqrt{\frac{\alpha_2^2 d^2}{16 K_2^4}} + \frac{\alpha_2 d}{4 K_2^2}\right) = \frac{\alpha_2 d}{2}. 
    \end{align*}
\end{proof}

\begin{lemma}\label{lem: norm_lower}
    Let $T\in\mathbb N$, and let $(a_t)_{t=1}^T$ be a real-valued stochastic process adapted to a filtration $(\mathcal F_t)_{t\geq1}$. Assume that for all $t\in[T]$ there is some fixed $\beta_t>0$ such that $\abs{a_t}\leq \beta_t$ a.s. Then under Assumption \ref{ass: norm_conc}, for any $\delta > 0$, if $d\geq \max\left(\frac{16 K_2^4}{\alpha_2^2} , \frac{4K_2^2}{\alpha_2}\right) \log\left(\frac{2T}{\delta}\right)$, then with probability at least $1-\delta$ it holds for all $t\in[T]$ that 
    \begin{align*}
        \sum_{j=1}^t a_j^2\norm{\bx_j}^2 
        \geq \frac{\alpha_2 d}{2} \left(\sum_{j=1}^t \E_j\left[a_j^2\right] - c\sqrt{\sum_{j=1}^t \beta_j^4 \log\left(\frac{4T}{\delta}\right)}\right),
    \end{align*}
    where $c>0$ is a universal constant.
\end{lemma}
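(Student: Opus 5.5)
We bound $\sum_j a_j^2\norm{\bx_j}^2$ below by combining two high-probability events: a uniform lower bound on $\norm{\bx_j}^2$, and a martingale lower bound on $\sum_j a_j^2$ around its predictable compensator $\sum_j \E_j[a_j^2]$. Each event gets probability $1-\delta/2$, and we finish with a union bound.

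\textbf{Step 1 (input norms).} Apply \lemref{lem: norm_step1} with $\delta/2$ in place of $\delta$. That lemma requires $d\geq \max\left(\frac{16K_2^4}{\alpha_2^2},\frac{4K_2^2}{\alpha_2}\right)\log\left(\frac{4T}{\delta}\right)$, which is slightly stronger than our hypothesis with $\log(2T/\delta)$. We handle this either by absorbing the factor into the universal constant or by re-running its short proof with the event split as $\delta/2$. We then get $\norm{\bx_j}^2\geq \alpha_2 d/2$ for all $j\in[T]$ with probability at least $1-\delta/2$. Since $a_j^2\geq 0$, on this event
\[
\sum_{j=1}^t a_j^2\norm{\bx_j}^2\geq \frac{\alpha_2 d}{2}\sum_{j=1}^t a_j^2 \qquad \text{for all } t\in[T].
\]
This pointwise comparison is what lets us avoid reasoning about the joint dependence between $a_j$ and $\bx_j$.

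\textbf{Step 2 (martingale lower bound).} Define $z_j := \E_j[a_j^2]-a_j^2$. This is a martingale difference sequence with respect to $(\Fcal_j)$. Because $a_j^2\in[0,\beta_j^2]$, we have $|z_j|\leq \beta_j^2$ almost surely, and $\beta_j$ is deterministic. A bounded variable is sub-Gaussian, so $\Pr_j(|z_j|\geq\gamma)\leq 2\exp(-\gamma^2/\beta_j^4)$. This holds trivially for $\gamma>\beta_j^2$, and for $\gamma\leq\beta_j^2$ the right-hand side is at least $2e^{-1}\geq 1/2$. To make the bound cover all $\gamma$, we can inflate $\sigma_j$ to $c_0\beta_j^2$ for a constant $c_0$ with $2e^{-1/c_0^2}\geq 1$.

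We then apply \lemref{lem: bound_signal} in dimension one, with $\bv=1$, $\lambda_j=c_0\beta_j^2$, and confidence $\delta/2$. This is legitimate because \lemref{lem:azuma} only needs fixed $\sigma_j$. It gives, with probability at least $1-\delta/2$, simultaneously for all $t\in[T]$,
\[
\sum_{j=1}^t a_j^2\geq \sum_{j=1}^t \E_j[a_j^2]-c\sqrt{\sum_{j=1}^t\beta_j^4\log\left(\frac{4T}{\delta}\right)}.
\]

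\textbf{Step 3 (combine).} Take a union bound over the two events and multiply the Step 2 inequality by $\alpha_2 d/2$. This gives the claim, with the constant $c$ absorbing $c_0$.

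The only delicate points are bookkeeping rather than substance. First, we must check that the tail condition holds for every $\gamma$, which is the constant-inflation remark in Step 2. Second, we must match the $\log(2T/\delta)$ requirement on $d$ after splitting $\delta$. A subtle issue would arise if the lower bound in Step 2 were negative. It is not: Step 1 uses only $\norm{\bx_j}^2\geq\alpha_2 d/2$ together with $a_j^2\geq0$, so we never multiply a negative quantity by the lower bound.
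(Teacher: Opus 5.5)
Your proposal is correct and follows the paper's proof essentially step for step. \lemref{lem: norm_step1} at level $\delta/2$ gives $\sum_{j\leq t} a_j^2\norm{\bx_j}^2\geq \frac{\alpha_2 d}{2}\sum_{j\leq t} a_j^2$, the bounded martingale differences $a_j^2-\E_j[a_j^2]$ are controlled by the scalar Azuma bound of \lemref{lem:azuma} with $\sigma_j\propto\beta_j^2$ plus a union bound over $t$ (your appeal to \lemref{lem: bound_signal} packages exactly this), and the two events are combined. On the $\log(2T/\delta)$ versus $\log(4T/\delta)$ mismatch in the condition on $d$, which the paper passes over silently, only your second fix is valid: the hypothesis has explicit constants and $c$ appears only in the conclusion, so nothing can be absorbed into $c$. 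The second fix does work, because rerunning the proof of \lemref{lem: norm_step1} with a $\max$ instead of a sum of the two deviation terms gains at least the needed factor of $2$, and $\log(4T/\delta)\leq 2\log(2T/\delta)$ for $\delta\leq 1$.
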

\begin{proof}
    Applying Lemma \ref{lem: norm_step1} with $\delta/2$, it holds with probability at least $1-\delta/2$ that for all $t\in [T]$
    \begin{align}\label{eq: norm_lower_p1}
        \sum_{j=1}^t a_j^2\norm{\bx_j}^2 \geq \frac{\alpha_2 d}{2}\sum_{j=1}^t a_j^2.
    \end{align}
    Now $z_j:= a_j^2 - \E_j[a_j^2]$ is a real valued martingale difference sequence, and using that by assumption, $\abs{a_j}^2 \leq \beta_j^2$ for some $\beta_j > 0$, we have $\abs{z_j}\leq 2\beta_j^2$ and thus the conditions of \lemref{lem:azuma} are satisfied (with $\sigma_j = C' \beta_j^2$ for a suitable universal constant $C'>0$; see for example \citet{vershynin2025high}[Example 2.6.5]). So from \lemref{lem:azuma} and a union bound over $t\in[T]$, it follows that for some absolute constant $c>0$, with probability at least $1-\delta / 2$ it holds that for all $t\in[T]$, 
    \begin{align*}
        \abs{\sum_{j=1}^t a_j^2 - \sum_{j=1}^t\E_j[a_j^2]} \leq c\sqrt{\sum_{j=1}^t \beta_j^4 \log\left(\frac{4T}{\delta}\right)}.
    \end{align*}
    Plugging this into \eqref{eq: norm_lower_p1} completes the proof.
\end{proof}

\begin{lemma}\label{lem: delta_bound}
    Let $\bg_1, \ldots, \bg_T \in \R^d$ be a $(\lambda_t)_{t=1}^\infty$ sub-Gaussian martingale difference sequence (Definition \ref{def: subgauss_mds}) with fixed $\{\lambda_t\}$, let $\bw_t := \sum_{j=1}^t \bg_j$ for all $t\in[T]$ and $\Delta_t := \bw_{t-1}^\top \bg_t$ for all $t\geq 2$. Then there exists some absolute constant $c>0$, such that for any $\delta > 0$ it holds with probability at least $1-\delta$ that for all $t\in [T]$
    \begin{align*}
        \abs{\sum_{j=2}^t \Delta_j} 
        \leq c^2 \sum_{j=1}^t \lambda^2_j \sqrt{d \log\left(\frac{2dT}{\delta}\right) \cdot \left(\log\left(\frac{2T}{\delta}\right) + \log\log\left(eT^2 \max_{i,j \in [T]} \frac{\lambda^2_i\lambda^2_j}{\lambda^2_1\lambda^2_2}\right)\right)}~. 
    \end{align*}
\end{lemma}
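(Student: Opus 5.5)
We treat $\Delta_t=\bw_{t-1}^\top\bg_t$ as a scalar martingale difference sequence whose conditional scale is the random, $\Fcal_{t-1}$-measurable quantity $\lambda_t\norm{\bw_{t-1}}$. By Definition~\ref{def: subgauss_mds}, applied with the unit vector $\bw_{t-1}/\norm{\bw_{t-1}}$ (which is $\Fcal_{t-1}$-measurable), we have $\Pr_t(|\Delta_t|\geq\gamma)\leq 2\exp(-\gamma^2/(\lambda_t^2\norm{\bw_{t-1}}^2))$. Hence \lemref{lem:martcon2} applies in dimension one with $\sigma_t=\lambda_t\norm{\bw_{t-1}}$. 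The argument has two steps: first bound the random scales $\norm{\bw_{t-1}}$ by deterministic quantities with high probability, and then use those deterministic bounds as the $\beta_t$ in \lemref{lem:martcon2}.

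\textbf{Step 1 (deterministic control of $\norm{\bw_{t}}$).} Apply \lemref{lem:azuma} to the vector MDS $(\bg_j)$. Its norms are sub-Gaussian at scale $O(\lambda_j\sqrt d)$: each coordinate is $\lambda_j$-sub-Gaussian, so a union bound over coordinates (or a net argument) gives this scale up to a logarithmic factor. A union bound over $t\in[T]$ then gives, with probability $1-\delta/2$, for all $t$,
\[
\norm{\bw_t}\leq c\sqrt{\textstyle\sum_{j\le t}\lambda_j^2\, d\log(2dT/\delta)}.
\]
A cleaner route, if the log factor turns out messy, is to apply \lemref{lem:azuma} coordinatewise with union bounds and sum the squares. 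This produces the factor $\sqrt{d\log(2dT/\delta)}$ that appears in the statement.

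\textbf{Step 2 (martingale bound for $\sum\Delta_j$).} Set $\beta_1$ to an arbitrary positive constant (note $\bw_0=0$, so $\Delta_1=0$) and, for $j\geq2$,
\[
\beta_j:=c\lambda_j\sqrt{\textstyle\sum_{i<j}\lambda_i^2\, d\log(2dT/\delta)}.
\]
On the event of Step~1 we have $\sigma_j\le\beta_j$, so the alternative ``$\sum\sigma_j^2\ge\sum\beta_j^2$'' is not strict. To avoid boundary issues, inflate $\beta_j$ by a factor of $2$. Then \lemref{lem:martcon2}, with probability $1-\delta/2$, gives for all $t$
\[
\Big|\sum_{j=2}^t\Delta_j\Big|\le c\sqrt{\sum_{j\le t}\beta_j^2\Big(\log(2T/\delta)+\log\log\big(e\textstyle\sum_j\beta_j^2/\beta_1^2\big)\Big)}.
\]
Now $\sum_{j\le t}\beta_j^2\le c^2 d\log(2dT/\delta)\,(\sum_{j\le t}\lambda_j^2)^2$, so the square root yields $c^2\sum_j\lambda_j^2\sqrt{d\log(2dT/\delta)\cdot(\cdots)}$, which is the claimed form. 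For the $\log\log$ term, it is natural to start the sum at $j=2$ and use $\beta_2\propto\lambda_2\lambda_1$ as the reference scale. Then $\sum_j\beta_j^2/\beta_2^2\le T^2\max_{i,j}\lambda_i^2\lambda_j^2/(\lambda_1^2\lambda_2^2)$, matching the stated expression; the extra $d\log$ factors cancel in the ratio. A final union bound over the two events completes the proof.

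\textbf{Main obstacle.} The delicate part is Step~2's bookkeeping. \lemref{lem:martcon2} requires deterministic $\beta_j$ and only controls the event $\sum\sigma_j^2<\sum\beta_j^2$. This forces us to intersect with the Step~1 event and to choose $\beta_j$ large enough to dominate $\sigma_j$ uniformly, while the normalization of the $\log\log$ ratio must still come out as $\lambda_1^2\lambda_2^2$.
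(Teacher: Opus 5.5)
Your proposal is correct and follows the paper's proof essentially step for step. You first obtain a high-probability deterministic bound $\norm{\bw_t}\lesssim\sqrt{d\sum_{j\le t}\lambda_j^2\log(2dT/\delta)}$ via \lemref{lem:azuma}. You then apply \lemref{lem:martcon2} to $(\Delta_j)_{j\ge 2}$ with $\sigma_j=\lambda_j\norm{\bw_{j-1}}$ and $\beta_j$ equal to twice that deterministic bound, so the alternative $\sum\sigma_j^2\ge\sum\beta_j^2$ is excluded, and $\beta_2\propto\lambda_1\lambda_2$ serves as the reference scale in the $\log\log$ term.

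The only difference is in Step 1. The paper gets the norm-sub-Gaussian scale $O(\sqrt{d}\lambda_j)$ from Jin et al.'s Lemma~1 (equivalently your net argument) and then uses the vector Azuma bound. Your coordinatewise-Azuma-plus-union-bound route yields the same factor. Do not rely on the plain union bound over coordinates for the norm tail, though: it costs an extra $\sqrt{\log d}$, which cannot be absorbed into the absolute constant.
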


\begin{proof}
    Fix some $\delta'>0$ to be decided later. By Definition \ref{def: subgauss_mds} and \cite{jin2019short}[Lemma 1], there exists an absolute constant $c'>0$ such that for any $t \in [T]$, $\gamma > 0$
    \begin{align*}
        \Pr\nolimits_t\left(\norm{\bg_t}\geq \gamma\right) 
        \leq 2\exp\left(-\frac{\gamma^2}{c'd\lambda^2_t}\right) ~.
    \end{align*}
    For any $t\in[T]$, applying \lemref{lem:azuma} up to time $t$, it holds with probability at least $1-\delta'$ that for some absolute constant $c>0$,
    \begin{align}\label{eq: w_norm}
     \norm{\bw_t} = \norm{\sum_{j=1}^t \bg_j} \leq c\sqrt{d \sum_{j=1}^t \lambda^2_j \log\left(\frac{2d}{\delta'}\right)} ~.
    \end{align}  
    Let $\Ecal$ be the event for which \eqref{eq: w_norm} holds simultaneously for all $t\in[T]$. By \eqref{eq: w_norm} and the union bound, $\Ecal$ holds with probability at least $1-\delta'T$.
    Note that $(\Delta_t)_{t=2}^{T}$ is a martingale difference sequence (since $\E_t[\bw_{t-1}^\top \bg_t] = \bw_{t-1}^\top \E_t[\bg_t]= 0$). We now wish to apply \lemref{lem:martcon2} on this sequence. First, notice that by Definition \ref{def: subgauss_mds}, for any $\gamma>0$ and any $j$,
    \begin{align*}
    \Pr\nolimits_j \left(|\Delta_j|\geq \gamma\right)
    = & ~
    \Pr\nolimits_j\left(|\bw_{j-1}^\top \bg_j|\geq \gamma\right)
    =
    \Pr\nolimits_j\left(\abs{\frac{\bw_{j-1}^\top}{\norm{\bw_{j-1}}} \bg_j}\geq \frac{\gamma}{\norm{\bw_{j-1}}}\right) \\
    \leq & ~ 2\exp\left(-\frac{\gamma^2}{\lambda^2_j \norm{\bw_{j-1}}^2}\right)~ 
    .
    \end{align*}
     As such, applying \lemref{lem:martcon2} with $\sigma_j:=\lambda_j \norm{\bw_{j-1}}$ and $\beta_j := 2c\lambda_j \sqrt{d \sum_{i=1}^{j-1} \lambda^2_i \log\left(\frac{2d}{\delta'}\right)}$, (where $\beta_j$ is deterministic since $\lambda_j$ are deterministic), it holds with probability at least $1-\delta'$ that for all $t\in[T]$, either $\sum_{j=1}^{t}\sigma_j^2\geq \sum_{j=1}^t \beta_j^2$ or
    \begin{align}\label{eq: delta_bound}
     \abs{\sum_{j=2}^t \Delta_j} 
     \leq &
     c^2 \cdot\sqrt{2d\sum_{j=2}^{t} \lambda^2_j \sum_{i=1}^{j-1} \lambda^2_i \log\left(\frac{2d}{\delta'}\right) \cdot \left(\log\left(\frac{2T}{\delta'}\right) + \log\log\left(\frac{e\sum_{j=2}^{t} \lambda^2_j \sum_{i=1}^{j-1} \lambda^2_i}{\lambda^2_1\lambda^2_2}\right)\right)} \nonumber\\ 
     \leq & c^2 \sum_{j=1}^t \lambda^2_j \sqrt{2d \log\left(\frac{2d}{\delta'}\right) \cdot \left(\log\left(\frac{2T}{\delta'}\right) + \log\log\left(eT^2 \max_{i,j \in [T]} \frac{\lambda^2_i\lambda^2_j}{\lambda^2_1\lambda^2_2}\right)\right)}.
    \end{align}   
    Moreover, on the event $\Ecal$ we have $\sigma_j \leq c\lambda_j \sqrt{d \sum_{i=1}^{j-1}\lambda_i^2 \log\left(\frac{2d}{\delta'}\right)}=\beta_j/2$ for all $j$, and hence the alternative $\sum_{j=1}^{t}\sigma_j^2\geq \sum_{j=1}^t \beta_j^2$ cannot occur. As such, taking $\delta' = \delta / (T+1)$ ensures that both $\Ecal$ and \eqref{eq: delta_bound} hold for all $t\in[T]$ with probability at least $1-\delta$. Adjusting the constant $c$ completes the proof.
\end{proof}

\begin{lemma}\label{lem: lower_norm}
    Let $T\in\mathbb N$, and let $(a_t)_{t=1}^T$ be a real-valued stochastic process adapted to a filtration $(\mathcal F_t)_{t\geq1}$. Let $\bw_0:=W_0^\top \bv$ for some fixed unit vector $\bv$. Assume that for all $t\in[T]$ there is some $\beta > 0$ such that $\abs{a_t}\leq \beta$ a.s. Then under Assumptions \ref{ass: subgauss_inputs}, \ref{ass: norm_conc}, \ref{ass: init}, for any $\delta >0$, $\kappa \geq 1$ there exist constants $C, c, c', c''>0$ that may depend on $K_1, K_2, K_3, \alpha_2, \beta$, such that if $d \geq c \kappa^2\log\left(\frac{T}{\delta}\right)^2$, $\eta \leq c'/\sqrt{d}$, then conditioned on $\min_{j \leq T}\E_j[a_j^2] \geq \beta^2/\kappa$, with probability at least $1-\delta$ it holds for all $t\in\left[c'' \kappa^2 \log\left(T/\delta \right), T\right]$ that 
    \begin{align*}
        \norm{\bw_0 + \eta \sum_{j=1}^t a_j \bx_j} \geq C\left(1 + \eta\sqrt{\frac{dt}{\kappa}}\right) - 2\eta t \max_{j \leq t}\norm{\E_j\left[a_j\bx_j\right]}.
    \end{align*}
\end{lemma}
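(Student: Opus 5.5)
We write $\bw_t' := \bw_0 + \eta\sum_{j\le t} a_j\bx_j$ and split each increment into its conditional mean and a martingale part:
\[
a_j\bx_j = \bmu_j + \bg_j, \qquad \bmu_j := \E_j[a_j\bx_j], \qquad \bg_j := a_j\bx_j - \bmu_j .
\]
Then
\[
\bw'_t = \bw_0 + \eta\sum_{j\le t}\bg_j + \eta\sum_{j\le t}\bmu_j ,
\]
and the triangle inequality gives $\norm{\bw'_t}\ge \norm{\bw_0+\eta\sum_j \bg_j} - \eta t\max_{j\le t}\norm{\bmu_j}$. This produces the drift term in the statement; the factor $2$ leaves slack for absorbing $\bmu_j$ elsewhere. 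It therefore suffices to lower bound $\norm{\bz_t}$, where $\bz_t := \bw_0+\eta S_t$ and $S_t:=\sum_{j\le t}\bg_j$.

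We expand the square:
\[
\norm{\bz_t}^2 = \norm{\bw_0}^2 + 2\eta\,\bw_0^\top S_t + \eta^2\sum_{j\le t}\norm{\bg_j}^2 + 2\eta^2\sum_{j=2}^t S_{j-1}^\top \bg_j .
\]
Each term is handled by an earlier result.
\begin{enumerate}
\item By Lemma~\ref{lem: w0}, $\norm{\bw_0}^2\ge 1/2$ for $d\gtrsim\log(1/\delta)$.
\item Since $\bg_j$ is a sub-Gaussian MDS with $\lambda_j \lesssim \beta K_1$ (using Assumption~\ref{ass: subgauss_inputs} and $|a_j|\le\beta$), and $\bw_0$ is independent of the data, Lemma~\ref{lem: bound_signal} applied conditionally on $\bw_0$ gives $|\bw_0^\top S_t|\lesssim \beta\sqrt{t\log(T/\delta)}$. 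The cross term is then $\lesssim \eta\beta\sqrt{t\log}$.
\item For the quadratic term, note $\norm{\bg_j}^2\ge \tfrac12 a_j^2\norm{\bx_j}^2 - \norm{\bmu_j}^2$, and $\norm{\bmu_j}\lesssim\beta K_1$ since the mean of a sub-Gaussian vector has bounded norm. Lemma~\ref{lem: norm_lower} with $\beta_j=\beta$ and the conditioning $\E_j[a_j^2]\ge\beta^2/\kappa$ then gives
\[
\sum_{j\le t} a_j^2\norm{\bx_j}^2 \ge \frac{\alpha_2 d}{2}\Bigl(\frac{t\beta^2}{\kappa} - c\beta^2\sqrt{t\log(T/\delta)}\Bigr) \ge \frac{\alpha_2 d\,t\beta^2}{4\kappa}
\]
once $t\ge c''\kappa^2\log(T/\delta)$. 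This is where the lower limit on $t$ comes from. The subtracted $t\norm{\bmu_j}^2$ is $O(t)$, which is negligible against $dt/\kappa$ because $d\gtrsim\kappa^2\log^2$.
\item The martingale sum $\sum_j S_{j-1}^\top\bg_j$ is bounded by Lemma~\ref{lem: delta_bound} with the constant $\lambda_j\equiv\lambda$: it is $\lesssim t\lambda^2\sqrt{d}\,\log(dT/\delta)$.
\end{enumerate}

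Combining, the main quadratic term is $\gtrsim \eta^2 dt/\kappa$. The cross terms must not destroy this.
\begin{itemize}
\item The error from item 4 is $\eta^2 t\sqrt d\log$. This is dominated by $\eta^2 dt/\kappa$ exactly when $\sqrt d\gtrsim\kappa\log(dT/\delta)$, which is the assumption $d\ge c\kappa^2\log^2$. The $\log d$ is handled by adjusting constants, or by replacing $\log(2dT/\delta)$ by $\log(T/\delta)$ up to constants when $d\le\mathrm{poly}(T/\delta)$; otherwise a small extra argument is needed.
\item The error $\eta\beta\sqrt{t\log}$ from item 2 is handled by AM–GM:
\[
\eta\sqrt{t\log} \le \tfrac14 + 4\eta^2 t\log ,
\]
and $\eta^2 t\log \ll \eta^2 dt/\kappa$ under the same condition on $d$.
\end{itemize}
Hence $\norm{\bz_t}^2\ge \tfrac14 + c\,\eta^2 dt/\kappa$, so $\norm{\bz_t}\ge C(1+\eta\sqrt{dt/\kappa})$. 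A union bound over the four events gives probability $1-\delta$. The condition $\eta\le c'/\sqrt d$ is not strictly needed in this argument, but it keeps constants uniform.

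The main obstacle is item 4. $\bg_j$ is not literally $(\lambda)$-sub-Gaussian in the sense of Definition~\ref{def: subgauss_mds} without centering, and Lemma~\ref{lem: delta_bound} wants deterministic $\lambda_j$. We plan to take $\lambda_j$ as the uniform constant $C\beta K_1$; centering only doubles the sub-Gaussian constant. A second risk is that the $\log d$ factor clashes with the assumed scale $d\gtrsim\kappa^2\log(T/\delta)^2$; if it does, we will absorb it by assuming $T\ge d$, or accept a slightly weaker constant.
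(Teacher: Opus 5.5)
Your proof is correct. It uses the same ingredients as the paper's: expand the squared norm, then apply Lemma~\ref{lem: norm_lower} to the diagonal term, Lemma~\ref{lem: delta_bound} to the martingale cross terms, and Lemma~\ref{lem: w0} to the initialization. It organizes them differently in two respects.

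First, you remove the drift $\sum_j\bmu_j$ by the triangle inequality before squaring. The paper instead applies $\norm{\bv_1+\bv_2}^2\ge\frac12\norm{\bv_1}^2-2\norm{\bv_2}^2$ inside the square, sends $\sum_j\norm{\bmu_j}^2$ into the drift term as well, and takes square roots at the end. Your route gives coefficient $1$ on the drift, so the stated $2$ is slack.

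Second, and more substantively, the paper does not isolate $\bw_0$. It sets $\bg_0:=\bw_0/\eta$, treats it as a zeroth MDS increment with scale $\lambda_0=c_0/(\sqrt d\,\eta)$, and applies Lemma~\ref{lem: delta_bound} once to $(\bg_j)_{j=0}^t$. That controls the $\bw_0$--$S_t$ interaction in one application. However, the mismatch between $\lambda_0$ and the other $\lambda_j$ enters the $\log\log$ factor of that lemma, and this is exactly where the paper uses $\eta\le c'/\sqrt d$. Your separate treatment costs one extra concentration event: Lemma~\ref{lem: bound_signal} conditionally on $W_0$, the upper bound $\norm{\bw_0}\lesssim 1$ from Lemma~\ref{lem: w0}, then AM--GM against the initialization and $\eta^2 dt/\kappa$ terms. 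In return it keeps all $\lambda_j$ equal and, as you say, makes the step-size condition unnecessary.

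On the $\log d$ factor you flag: the paper's proof has the same issue. It actually needs $d\ge c\kappa^2\log^2(6dT/\delta)$ and only asserts that this is equivalent to the stated hypothesis up to constants. Rather than assuming $T\ge d$, read the hypothesis with $\log(dT/\delta)$. This is harmless, because the lemma's only use, Proposition~\ref{prop: snr}, already assumes $d\ge c\kappa^2m^2\log^2(Td/\delta)$.
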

\begin{proof}
    Throughout the proof we use the convention that $C_i, c_i$ will denote constants that may depend on $K_1, K_2, K_3, \alpha_2$ and $\beta$ (but importantly, \textbf{not} on $d, t, \eta$ and $\kappa$). First, note that for any $p,q\in\R$, $(p-q)^2 \geq p^2 / 2 - 2q^2$, and thus by the reverse triangle inequality, for any two vectors $\bv_1,\bv_2$,
    \begin{align}\label{eq: young}
        \norm{\bv_1 + \bv_2}^2 \geq \left(\norm{\bv_1} - \norm{\bv_2}\right)^2 \geq \frac{1}{2}\norm{\bv_1}^2 - 2\norm{\bv_2}^2.
    \end{align}
    Let $a_0 = 1/(\sqrt{d}\eta)$, $\bx_0 = \sqrt{d}\bw_0$, $\bg_0:=a_0\bx_0=\frac{1}{\eta}\bw_0$, and for any $t\geq1$ let $\bmu_t:=\E_t[a_t \bx_t]$, $\bg_t:=a_t \bx_t - \bmu_t$ and $\bw_t := \sum_{j=0}^t \bg_j$. We decompose the norm as
    \begin{align}\label{eq: master_lower}
        \norm{\sum_{j=0}^t a_j\bx_j}^2 
        = & \norm{\sum_{j=0}^t \bg_j + \sum_{j=1}^t \bmu_j}^2 
        \geq_{(1)} \frac{1}{2}\norm{\sum_{j=0}^t\bg_j}^2 - 2\norm{\sum_{j=1}^t\bmu_j}^2 \nonumber \\ 
        \geq& \frac{1}{2}\sum_{j=0}^t\norm{\bg_j}^2 +\sum_{0\leq i < j \leq t} \langle \bg_i, \bg_j \rangle - 2t^2 \max_{j\leq t}\norm{\bmu_j}^2 \nonumber \\ 
        \geq&_{(3)}
        \frac{1}{4}\sum_{j=0}^t\norm{ a_j\bx_j}^2 - \sum_{j=1}^t\norm{\bmu_j}^2 + \sum_{0\leq i < j \leq t} \langle \bg_i, \bg_j \rangle - 2t^2 \max_{j\leq t}\norm{\bmu_j}^2 \nonumber \\ 
        \geq & \frac{1}{4\eta^2}\norm{\bw_0}^2 + \frac{1}{4}\sum_{j=1}^t\norm{ a_j\bx_j}^2 + \sum_{j=1}^t \langle \bw_{j-1}, \bg_j \rangle - 3t^2 \max_{j\leq t}\norm{\bmu_j}^2,
    \end{align}
    where $(1)$ and $(3)$ used \eqref{eq: young}.

    We now proceed to bound \eqref{eq: master_lower} using the previous lemmas. First, for the initialization term, \lemref{lem: w0} states that there exists some constant $C_0$ (depending on $K_3$) such that with probability at least $1-\delta / 3$
    \begin{align}\label{eq: w0}
        \frac{1}{4\eta^2}\norm{\bw_0}^2 \geq \frac{1}{4\eta^2}\left(1 - C_0 \sqrt{\frac{\log\left(\frac{12}{\delta}\right)}{d}}\right)^2 \geq \frac{1}{16\eta^2},
    \end{align}
    where we used that $d \geq 4C_0^2 \log\left(12/\delta\right)$.
    
    Next, to bound $\sum_{j=1}^t\norm{ a_j\bx_j}^2$, we use \lemref{lem: norm_lower}, so that for some constants $C_2, C_2'>0$ (depending on $K_2, \alpha_2$, $\beta$) and any $\delta > 0$, if $d\geq C_2' \log\left(\frac{12T}{\delta}\right)$, then with probability at least $1-\delta / 3$, it holds for all $t\in[T]$ that
    \begin{align*}
        \sum_{j=1}^t\norm{ a_j\bx_j}^2
        \geq \frac{\alpha_2 d}{2} \left(\sum_{j=1}^t \E_j\left[a_j^2\right] - C_2 \beta^2 \sqrt{t \log\left(\frac{12T}{\delta}\right)}\right).      
    \end{align*}
    In particular, if $t \geq 4C_2^2 \kappa^2 \log\left(12T/\delta \right)$ and if $\min_{j \leq t}\E_j[a_j^2] \geq \beta^2/\kappa$, then there exists a constant $C_3 > 0$ such that
    \begin{align}\label{eq: sq_norm}
        \sum_{j=1}^t\norm{ a_j\bx_j}^2
        \geq & \frac{\alpha_2 \beta^2 d}{2} \left(\frac{t}{\kappa}- C_2 \sqrt{t \log\left(\frac{12T}{\delta}\right)}\right) 
        =  d t \cdot \frac{\alpha_2 \beta^2}{2 \kappa} \left(1 - C_2 \sqrt{\frac{\log\left(\frac{12T}{\delta}\right)\kappa^2 }{t} }\right) \nonumber \\ 
        \geq & C_3 \frac{dt}{\kappa}.   
    \end{align}

    We now move to $\sum_{j=1}^t \langle \bw_{j-1}, \bg_j \rangle$. For any $j\geq 1$, since $|a_j|\leq \beta$ and $\bx_j$ is $K_1$ sub-Gaussian (Assumption~\ref{ass: subgauss_inputs}), by \citet{vershynin2010introduction}[Lemma 2.7.8], the centered gradients $(\bg_j)_{j=1}^\infty$ form a sub-Gaussian MDS (Definition \ref{def: subgauss_mds}) with parameters $\lambda_j:=c_1$ for some constant $c_1 > 0$ (depending on $\beta, K_1$). 
    Furthermore by \lemref{lem: w0}, $\bg_0=\frac{1}{\eta}\bw_0$ is sub-Gaussian with parameter $\lambda_0:=c_0 /(\sqrt{d}\eta)$ (for some $c_0$ depending on $K_3$). We now wish to apply \lemref{lem: delta_bound} for $(\bg_t)_{t=0}^T$ (note that the lemma is stated with the first time being $t=1$, this is equivalent up to re-indexing). It holds that
    $\sum_{j=0}^t \lambda_j^2 = \frac{c_0^2}{d\eta^2} + c_1 t$, and $\max_{i,j \leq T} \frac{\lambda^2_i\lambda^2_j}{\lambda^2_0\lambda^2_1} = \max(\lambda_1^2 / \lambda_0^2 , 1) \leq \max(\frac{c_1^2}{c_0^2}d \eta^2, 1 )$, which is equal to $1$ when $\eta \leq c_0/ (c_1 \sqrt{d})$. 
    As such, \lemref{lem: delta_bound} states that with probability at least $1-\delta/3$, it holds for all $t\in[T]$ that for some constant $c'>0$, 
    \begin{align}\label{eq: deltas}
        \abs{\sum_{j=1}^t \langle \bw_{j-1}, \bg_j \rangle} 
        \leq & c'\left(\frac{c_0}{\sqrt{d}\eta^2} + c_1t\sqrt{d}\right) \sqrt{\log\left(\frac{6dT}{\delta}\right) \cdot \left(\log\left(\frac{6T}{\delta}\right) + \log\log\left(eT^2\right)\right)} \nonumber \\ 
        \leq& C_4\left(\frac{1}{\sqrt{d}\eta^2} + t\sqrt{d}\right) \log\left(\frac{6dT}{\delta}\right),
    \end{align}
    where $C_4$ is a suitable constant.

    So plugging \eqref{eq: w0}, \eqref{eq: sq_norm}, \eqref{eq: deltas} back into \eqref{eq: master_lower}, it holds that for a suitable $c''>0$ that may depend on $K_1,K_2,K_3,\alpha_2, \beta$, with probability at least $1-\delta$ it holds for any integer $t\in\left[c'' \kappa^2 \log\left(T/\delta \right), T\right]$, that if $\min_{j \leq t}\E_j[a_j^2] \geq \beta^2/\kappa$ then
    \begin{align}\label{eq: almost_final_norm}
        \norm{\sum_{j=0}^t a_j\bx_j}^2 
        \geq & \frac{1}{16\eta^2} + \frac{C_3}{\kappa}dt - C_4\left(\frac{1}{\sqrt{d}\eta^2} + t\sqrt{d}\right) \log\left(\frac{6dT}{\delta}\right) - 3t^2 \max_{j \leq t}\norm{\bmu_j}^2 \nonumber\\ 
        = & \frac{1}{16\eta^2}\left(1 - \frac{16C_4\log\left(\frac{6dT}{\delta}\right)}{\sqrt{d}}\right) + \frac{C_3}{\kappa}dt \left(1 - \frac{C_4 \kappa \log\left(\frac{6dT}{\delta}\right)}{C_3\sqrt{d}}\right) 
        - 3t^2 \max_{j \leq t}\norm{\bmu_j}^2 \nonumber\\ 
        \geq& C\left(\frac{1}{\eta^2} + \frac{dt}{\kappa}\right) - 3t^2 \max_{j \leq t}\norm{\bmu_j}^2,
    \end{align}
    where we used that $d \geq c \kappa^2\log^2\left(\frac{6dT}{\delta}\right)$ and $C, c >0$ are suitable constants that may depend on $K_1, K_2, K_3, \alpha_2, \beta$.     

    Finally, using the inequalities $\sqrt{u-v} \geq \sqrt{u} - \sqrt{v}$ and $u^2 + v^2 \geq (u+v)^2/2$ for any $u,v \geq 0$
    we obtain
    \begin{align*}
        \norm{\sum_{j=0}^t a_j\bx_j}
        \geq & \sqrt{C\left(\frac{1}{\eta^2} + \frac{dt}{\kappa}\right)} - \sqrt{3t^2 \max_{j \leq t}\norm{\bmu_j}^2} \\ 
        \geq & \sqrt{\frac{C}{2}}\left(\frac{1}{\eta} + \sqrt{\frac{dt}{\kappa}}\right) - 2t\max_{j\leq t}\|\bmu_j\|.
    \end{align*}
    Recalling that $\sum_{j=0}^t a_j\bx_j = \frac{1}{\eta}\bw_0 + \sum_{j=1}^t a_j\bx_j$ and adjusting constants completes the proof. Note that up to replacing $c$ by a suitable larger constant, the condition on $d$ is equivalent to $d \geq c \kappa^2\log^2\left(\frac{T}{\delta}\right)$ (which is how it's written in the lemma statement).
\end{proof}

\subsection{Covering Arguments}
Our proof will utilize relatively standard covering arguments. Given $\epsilon \in (0,1)$, an $\epsilon$-net
$\Ncal$ of $\Sphere^{m-1}$ is a finite subset of the unit sphere $\Sphere^{m-1}$ such that for every
$\bv \in \Sphere^{m-1}$ there exists $\bv' \in \mathcal N$ satisfying $\|\bv - \bv'\|_2 \leq \epsilon$.

A classical result in high-dimensional geometry (see, e.g., \citep{vershynin2025high}[Corollary 4.2.11]) states that one
can construct such a net with cardinality
\[
|\mathcal N| \leq \left(1 + \frac{2}{\epsilon}\right)^m .
\]
As a consequence, uniform control over all directions in $\Sphere^{m-1}$ can be reduced to control over
the finite set $\mathcal N$, at the cost of an $\epsilon$-dependent slack. So we can derive bounds that hold uniformly over all directions by applying
union bounds over the net $\mathcal N$, and this will be used to extend the alignment control results from the single-neuron case to high-width settings.

\begin{lemma}\label{lem: covering}
    For any $\epsilon \geq 0$, there exists a set $\Ncal \subseteq \Sphere^{m-1}$ of cardinality at most $(3/\epsilon)^m$ such that for any $t \geq 0$, and nonzero vector $\bu \in \R^d$,
    \begin{align*}
        \frac{\norm{P_{W_t} \bu}}{\norm{\bu}} 
        \leq \frac{\sup_{\bv \in \Ncal}\frac{\abs{\bu^\top W_t^\top \bv}}{\norm{\bu}} + \epsilon \norm{W_t}_\op}{s_m(W_t)} 
        \leq \frac{\sup_{\bv \in \Ncal}\frac{\abs{\bu^\top W_t^\top \bv}}{\norm{\bu}} + \epsilon \norm{W_t}_\op}{\inf_{\bv \in \Ncal} \norm{W_t^\top \bv} - \epsilon\norm{W_t}_\op},
    \end{align*}
    where the last inequality assumes the denominator is non-negative. 
\end{lemma}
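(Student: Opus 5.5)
The plan is to take a single deterministic $\epsilon$-net $\Ncal$ of $\Sphere^{m-1}$, chosen once and independently of $t$, $W_t$ and $\bu$. For $\epsilon\in(0,1]$, the classical bound quoted above gives $|\Ncal|\leq (1+2/\epsilon)^m\leq (3/\epsilon)^m$. I will only treat this range; for $\epsilon=0$ the statement is vacuous, and for larger $\epsilon$ I would not rely on the lemma. Throughout I assume $m\leq d$, as in the rest of the proof. If $s_m(W_t)=0$ the first right-hand side is $+\infty$ and there is nothing to prove, so I assume $W_t$ has full row rank $m$.

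\textbf{First inequality.} Write $P_{W_t}\bu = W_t^\dagger (W_t\bu)$. Since $W_t$ has full row rank, $W_t^\dagger$ has operator norm $1/s_m(W_t)$, so
\[
\norm{P_{W_t}\bu}\leq \frac{\norm{W_t\bu}}{s_m(W_t)}.
\]
The numerator satisfies $\norm{W_t\bu}=\sup_{\bv\in\Sphere^{m-1}}\bv^\top W_t\bu$. For any unit $\bv$, choose $\bv'\in\Ncal$ with $\norm{\bv-\bv'}\leq\epsilon$. Then
\[
\bv^\top W_t\bu\leq \abs{\bu^\top W_t^\top\bv'}+\abs{(\bv-\bv')^\top W_t\bu}\leq \sup_{\bv'\in\Ncal}\abs{\bu^\top W_t^\top \bv'}+\epsilon\norm{W_t}_\op\norm{\bu}.
\]
Dividing by $\norm{\bu}$ and then by $s_m(W_t)$ gives the first inequality.

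\textbf{Second inequality.} Because $m\leq d$, $s_m(W_t)$ is the smallest singular value of $W_t^\top\in\R^{d\times m}$, so $s_m(W_t)=\inf_{\bv\in\Sphere^{m-1}}\norm{W_t^\top\bv}$. For a unit $\bv$ with net point $\bv'$, the reverse triangle inequality gives
\[
\norm{W_t^\top\bv}\geq \norm{W_t^\top\bv'}-\epsilon\norm{W_t}_\op,
\]
hence $s_m(W_t)\geq \inf_{\bv'\in\Ncal}\norm{W_t^\top\bv'}-\epsilon\norm{W_t}_\op$. When this lower bound is positive, replacing the denominator $s_m(W_t)$ by it can only increase the nonnegative fraction, which gives the second inequality. (When the lower bound is zero, the bound reads $+\infty$, which is trivial.)

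\textbf{Main obstacle.} There is no real obstacle here. The only points needing care are, first, that the pseudoinverse bound $\norm{W_t^\dagger}_\op=1/s_m(W_t)$ requires full row rank, which is handled by the trivial case above. Second, the net must not depend on $t$ or $\bu$, so that later union bounds over $\Ncal$ and over time are legitimate; this holds because $\Ncal$ is a fixed covering of the sphere.
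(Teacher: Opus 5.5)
Your proof is correct and follows the paper's argument: one fixed $\epsilon$-net of $\Sphere^{m-1}$, the bound $\norm{P_{W_t}\bu}\leq \sup_{\bv\in\Sphere^{m-1}}\abs{\bu^\top W_t^\top\bv}/s_m(W_t)$, and then the same two net approximations, one for the numerator and one for $s_m(W_t)=\inf_{\bv\in\Sphere^{m-1}}\norm{W_t^\top\bv}$. The only cosmetic difference is how you get the first bound: you use $P_{W_t}=W_t^\dagger W_t$ and $\norm{W_t^\dagger}_\op=1/s_m(W_t)$, whereas the paper uses the variational identity $\norm{P_{W_t}\bu}=\sup_{\bv}\abs{\bu^\top W_t^\top\bv}/\norm{W_t^\top\bv}$ together with $\norm{W_t^\top\bv}\geq s_m(W_t)$; the paper likewise works only with small $\epsilon$, namely $\epsilon\in(0,1/2]$.
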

\begin{proof}
    We will apply a covering argument. Let $\Ncal$ be an $\epsilon$-net of the sphere $\Sphere^{m-1}$ with $\epsilon\in (0, 1/2]$. By \citep{vershynin2025high}[Corollary 4.2.11], we may take $\Ncal$ to have cardinality at most $\abs{\Ncal} \leq \left(3/\epsilon\right)^m$. 
    
    Fix any $t\geq 0$, and note that $\norm{P_{W_t} \bu} / \norm{\bu}$ can be rewritten as
    \begin{align}\label{eq: corr_u_cover}
        \norm{P_{W_t} \bu} = \sup_{\bv \in \Sphere^{m-1}, \bv^\top W_t \neq \zero} \frac{\abs{\bu^\top W_t^\top \bv}}{\norm{W_t^\top \bv} \norm{\bu}} \leq \sup_{\bv \in \Sphere^{m-1}} \frac{\abs{\bu^\top W_t^\top \bv}}{s_m(W_t) \norm{\bu}},
    \end{align}
    
    Now we may bound $s_m(W_t)$ as follows:
    \begin{align*}
        s_m(W_t) 
        = & \inf_{\bv'\in \Sphere^{m-1}} \norm{W_t^\top \bv'} \geq \inf_{\bv'\in\Sphere^{m-1}} \inf_{\bv \in \Ncal} \norm{W_t^\top \bv} - \norm{W_t^\top}_\op\norm{\bv - \bv'} \\ 
        \geq & \inf_{\bv \in \Ncal} \norm{W_t^\top \bv} - \epsilon\norm{W_t}_\op 
        .
    \end{align*}
    
    Similarly, we may bound $\sup_{\bv \in \Sphere^{m-1}}\frac{\abs{\bu^\top W_t^\top \bv}}{\norm{\bu}}$ as
    \begin{align*}
        \sup_{\bv \in \Sphere^{m-1}}\frac{\abs{\bu^\top W_t^\top \bv}}{\norm{\bu}} 
        \leq & \sup_{\bv'\in\Sphere^{m-1}} \sup_{\bv \in \Ncal}\frac{\abs{\bu^\top W_t^\top \bv}}{\norm{\bu}} + \frac{\norm{\bu^\top W_t^\top}}{\norm{\bu}}\norm{\bv - \bv'} \\ 
        \leq & \sup_{\bv \in \Ncal}\frac{\abs{\bu^\top W_t^\top \bv}}{\norm{\bu}} + \epsilon \norm{W_t}_\op.
    \end{align*}

    Plugging these inequalities back into \eqref{eq: corr_u_cover} completes the proof.
\end{proof}

\subsection{Proof of the Theorem}
We are now ready to prove the theorem. We begin with the following proposition, which combines results from the previous lemmas and does most of the work:

\begin{proposition}\label{prop: snr}
    Under Assumptions \ref{ass: inputs_all}, \ref{ass: init}, \ref{ass: bounded}, let $\delta > 0$, $\kappa \geq 1$, $T\in \N$ and $\bu \in \R^d\setminus \{0\}$. There exist constants $C, c, c' > 0$ that may depend on $K_1, K_2, K_3, \alpha_2, G$, such that if $d \geq c \kappa^2 m^2 \log\left(\frac{Td}{\delta}\right)^2$, $\eta \leq \frac{c'}{\kappa^2\sqrt{md \log\left(Td/\delta'\right)}}$ then conditioned on $\kappaT \leq \kappa$, with probability at least $1-\delta$, it holds for all $t\in[T] \cup\{0\}$ that
    \begin{align*}
        \frac{\|P_{W_t}\bu\|}{\|\bu\|} \leq C \frac{\sqrt{\frac{m\log\left(\frac{Td}{\delta}\right)}{d}} + \frac{\eta t \max_{j \leq t}\norm{\nabla_W \Lcal(\theta_{j-1})}_F}{1 + \eta\sqrt{dt}}}{\frac{1}{s\sqrt{\kappa}} - \frac{\eta t \max_{j \leq t}\norm{\nabla_W \Lcal(\theta_{j-1})}_F}{ 1 + \eta\sqrt{dt}}},
    \end{align*}
    as long as the denominator is positive.
\end{proposition}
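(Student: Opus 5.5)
We reduce the statement to the covering bound of \lemref{lem: covering}. We then control the numerator and the denominator separately, for each fixed direction $\bv$ in the net, and finish with a union bound.

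\textbf{Setup.} Fix an $\epsilon$-net $\Ncal\subseteq\Sphere^{m-1}$ with $|\Ncal|\le(3/\epsilon)^m$, where $\epsilon$ is a small constant multiple of $1/\sqrt{\kappa}$ (to be tuned). For each fixed $\bv\in\Ncal$, write $\bw^{\bv}_t:=W_t^\top\bv$. By the chain rule,
\[
\nabla_W\ell(\theta;\bx)=\nabla_{W\bx}\ell\,\bx^\top ,
\]
so the SGD update becomes $\bw^{\bv}_t=\bw^{\bv}_0-\eta\sum_{j\le t}a_j\bx_j$, with scalar coefficients $a_j:=\bv^\top\nabla_{W_{j-1}\bx_j}\ell(\theta_{j-1};\bx_j)$. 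By Assumption~\ref{ass: bounded}, $|a_j|\le G=:\beta$. On the event $\kappaT\le\kappa$, the definition of $\kappaT$ gives $\E_j[a_j^2]\ge G^2/\kappa$. Moreover, $\E_j[a_j\bx_j]=W$-gradient contracted with $\bv$, so
\[
\|\E_j[a_j\bx_j]\|=\|\nabla_W\Lcal(\theta_{j-1})^\top\bv\|\le\|\nabla_W\Lcal(\theta_{j-1})\|_F .
\]

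\textbf{Numerator.} Decompose
\[
\bu^\top\bw^{\bv}_t=\bu^\top\bw^{\bv}_0-\eta\sum_j\bu^\top\bg_j-\eta\sum_j\bu^\top\bmu_j ,
\]
with $\bmu_j:=\E_j[a_j\bx_j]$ and $\bg_j:=a_j\bx_j-\bmu_j$, which is a $c_1$ sub-Gaussian martingale difference sequence by Assumption~\ref{ass: subgauss_inputs}. \lemref{lem: w0} bounds the first term by $\|\bu\|\sqrt{\log/d}$. \lemref{lem: bound_signal} bounds the second by $\eta\|\bu\|\sqrt{t\log}$. The drift term is at most $\eta t\|\bu\|\max_j\|\nabla_W\Lcal\|_F$. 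For the operator-norm slack $\epsilon\|W_t\|_\op$, I use the triangle inequality together with \lemref{lem: op_w0} and \lemref{lem:azuma} applied to the matrix martingale. This gives $\|W_t\|_\op\lesssim 1+\eta\sqrt{dt\log}+\eta t\max\|\nabla\Lcal\|_F$, using that $\|\bg_j\|\lesssim\sqrt{d}$ by sub-Gaussianity. The $\sqrt{m}$ factors come from the union bound over the net: $\log|\Ncal|\approx m\log(3/\epsilon)$ gets absorbed into $\log(Td/\delta)$ with an extra factor $m$.

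\textbf{Denominator.} \lemref{lem: lower_norm} gives, for $t\ge c''\kappa^2\log(T/\delta)$,
\[
\|\bw^{\bv}_t\|\ge C\bigl(1+\eta\sqrt{dt/\kappa}\bigr)-2\eta t\max_j\|\nabla_W\Lcal\|_F .
\]
For small $t<c''\kappa^2\log$, I use instead that $W_t$ stays close to $W_0$. Indeed, $\eta\sqrt{dt\log}\lesssim\eta\kappa\sqrt{d}\log\ll1$ by the step-size condition, so \lemref{lem: op_w0} gives $s_m(W_t)\gtrsim1$ directly. Subtracting $\epsilon\|W_t\|_\op$ with $\epsilon\asymp1/\sqrt{\kappa}$ small keeps the denominator of order $(1+\eta\sqrt{dt})/\sqrt{\kappa}$, minus the drift.

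\textbf{Conclusion.} Dividing numerator and denominator by $1+\eta\sqrt{dt}$ and using $\eta\sqrt{t\log}/(1+\eta\sqrt{dt})\le\sqrt{\log/d}$ yields the claimed form. The main obstacle is this last bookkeeping step. The slack $\epsilon\|W_t\|_\op$ contains a term $\epsilon\eta\sqrt{dt}$, which is of the same order as the main denominator term $\eta\sqrt{dt/\kappa}$. The constants must therefore be chosen carefully, taking $\epsilon=c/\sqrt{\kappa}$ with $c$ small relative to the constant $C$ from \lemref{lem: lower_norm}. The same slack also enters the numerator, which forces $\epsilon\lesssim\sqrt{m\log/d}$ for the numerator to have the stated form. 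I would resolve this tension as follows. Bound $\sup_{\Sphere^{m-1}}|\bu^\top W_t^\top\bv|$ directly, using $\|UW_t^\top\|\le\sqrt{m}\max_i|\bu^\top W_t^\top\be_i|$ over the standard basis, instead of through the net. The net is then reserved for the denominator only, with $\epsilon\asymp1/\sqrt{\kappa}$.
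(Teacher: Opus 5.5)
Your proposal is correct in outline and follows the paper's skeleton: \lemref{lem: covering}; the numerator via \lemref{lem: w0}, \lemref{lem: bound_signal} and the drift; the denominator via \lemref{lem: lower_norm} on the net; and small $t$ via perturbing $s_m(W_0)$. However, the ``main obstacle'' you identify is self-inflicted.

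\textbf{How the paper handles the net slack.} The paper takes $\epsilon = 1/(C_1 T\sqrt{d})$. The union bound over the net then costs only $\log\abs{\Ncal}\le m\log(3C_1T\sqrt{d})$, which is already inside the $m\log(Td/\delta)$ budget. The crude estimate $\norm{W_t}_\op\le\norm{W_0}_\op+\norm{W_t-W_0}_F\le C_1 t$ follows from $\norm{\bx_j}^2\le\frac{3}{2}\alpha_2 d$ and $\eta\le 1/\sqrt{md}$. It gives $\epsilon\norm{W_t}_\op\le 1/\sqrt{d}$, which is negligible in both numerator and denominator. Your two constraints on $\epsilon$ are both upper bounds, and shrinking $\epsilon$ costs only a logarithm, so there was never a real tension. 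You need neither a sharper operator-norm bound nor a net-free numerator.

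\textbf{Your workaround.} It also works, with two adjustments.
\begin{enumerate}
\item Your matrix-martingale bound on $\norm{W_t}_\op$ carries a $\sqrt{\log(Tmd/\delta)}$ factor. The slack $\epsilon\eta\sqrt{dt\log}$ is therefore dominated by $\eta\sqrt{dt/\kappa}$ only if $\epsilon\lesssim 1/\sqrt{\kappa\log(Tmd/\delta)}$, not $\epsilon=c/\sqrt{\kappa}$ with $c$ constant. This is harmless, again at logarithmic cost.
\item In the numerator, apply the triangle inequality to the vector $W_t\bu\in\R^m$. The drift then appears as $\eta t\max_j\norm{\nabla_W\Lcal(\theta_{j-1})}_F\norm{\bu}$. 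The bound $\sqrt{m}\max_i\abs{\bu^\top W_t^\top\be_i}$ would instead put an extra $\sqrt{m}$ on the drift, which the statement does not allow.
\end{enumerate}

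\textbf{Comparison.} Your route gives a net-free numerator; a vector-valued \lemref{lem:azuma} in $\R^m$ even yields a slightly sharper fluctuation term. The paper's route gets by with the crude $\norm{W_t}_\op\le C_1t$ and much lighter bookkeeping.

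\textbf{Small-$t$ caveat.} For $t\lesssim\kappa^2 m\log$, the stated step size gives only $\eta\sqrt{dt\log}\lesssim\sqrt{\log}/\kappa$, not $\ll 1$ as you claim. The paper's corresponding step (claiming $\norm{W_t-W_0}_F\le 1/4$ from the crude bound) is loose in the same way. Either fix works: an extra $\sqrt{\log}$ in $\eta$, or a log-free norm estimate as in the proof of \lemref{lem: lower_norm}.
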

\begin{proof}
    Fix $\epsilon > 0$ and $\delta'>0$ which will be specified later, by Lemma \ref{lem: covering} there exists a set $\Ncal \subseteq \Sphere^{m-1}$ of cardinality at most $(3/\epsilon)^m$ such that for any $t \geq 0$, 
    \begin{align}\label{eq: corr_base}
        \frac{\norm{P_{W_t} \bu}}{\norm{\bu}} \leq \frac{\sup_{\bv \in \Ncal}\frac{\abs{\bu^\top W_t^\top \bv}}{\norm{\bu}} + \epsilon \norm{W_t}_\op}{\inf_{\bv \in \Ncal} \norm{W_t^\top \bv} - \epsilon\norm{W_t}_\op}.
    \end{align}

    For any $t\geq 1$, the SGD weight updates for $W_t$ are given by 
    \begin{align*}
        W_t - W_{t-1} = -\eta \nabla_{W_{t-1}} \ell\left(\theta_{t-1} ; \bx_t\right) = -\eta \nabla_{W_{t-1}\bx_t} \ell\left(\theta_{t-1} ; \bx_t\right) \bx_t^\top.
    \end{align*}
    So for any unit vector $\bv$, letting $a_t(\bv):=
    \left\langle \nabla_{W_{t-1}\bx_t} \ell\left(\theta_{t-1} ; \bx_t\right), \bv \right\rangle \in \reals$ it holds that 
    \begin{align*}
        \left(W_t - W_{t-1}\right)^\top \bv = \eta a_t(\bv) \bx_t , \qquad \text{and} \qquad \left(W_t - W_0\right)^\top \bv = \eta \sum_{j=1}^t a_j(\bv) \bx_j.
    \end{align*}

    By Assumption~\ref{ass: bounded}, the random variables $a_t(\bv)$ are uniformly bounded as 
    $\sup_{\bv \in \Sphere^{m-1}}\abs{a_t(\bv)} \leq \beta := G$. To bound \eqref{eq: corr_base}, we will need to bound a few different terms: 
    
    \paragraph{Upper bounding $\norm{W_t}_{\op}$ .} 
        First note, that by \lemref{lem: op_w0}, for some universal constant $C_0 > 0$ it holds with probability at least $1-\delta'$ that
        \begin{align}\label{eq: sp_w0}
            \forall i \in [m], \abs{s_i\left(W_0\right) - 1} \leq C_0K_3^2 \left(\sqrt{\frac{m}{d}} + \sqrt{\frac{\log\left(\frac{2}{\delta'}\right)}{d}}\right) \leq \frac{1}{2},
        \end{align}
        where the last inequality holds whenever $d\geq (1 + \frac{1}{C_0^2 K_3^4})\max\left(m, \log\left(2/\delta'\right)\right)$.
        Moreover, by applying Lemma \ref{lem: norm_step1} to bound $\norm{\bx_t}$ for every $t\in [T]$, when $d\geq \max\left(\frac{16 K_2^4}{\alpha_2^2} , \frac{4K_2^2}{\alpha_2}\right) \log\left(\frac{2T}{\delta'}\right)$ it follows that with probability at least $1-\delta'$, for all $t\in [T]$, 
        \begin{align}\label{eq: norm_drift}
            \norm{W_t - W_0}_F \leq & \sqrt{m}\max_{i\in [m]}\norm{\left(W_t - W_0\right)^\top \be_i} = \eta\sqrt{m}\max_{i\in [m]} \norm{\sum_{j=1}^t a_j(\be_i) \bx_j} \nonumber\\ 
            \leq & \eta\sqrt{m}\max_{i\in [m]} \sum_{j=1}^t \abs{a_j(\be_i)}\norm{\bx_j} 
            \leq \beta \eta t \sqrt{\frac{3 \alpha_2 dm}{2}}.
        \end{align}

        Combining these and using that $\eta \leq 1/\sqrt{md}$ we obtain that for a suitable constant $C_1>0$ (that may depend on $K_3, \beta, \alpha_2$) it holds with probability at least $1-2\delta'$ that for all $t\in[T]$,
        \begin{align}\label{eq: op_wt}
            \norm{W_t}_{\op} \leq \norm{W_0}_\op + \norm{W_t - W_0}_F \leq C_1 t.
        \end{align}
    
    \paragraph{Upper Bounding $\sup_{\bv \in \Ncal}\frac{\abs{\bu^\top W_t^\top \bv}}{\norm{\bu}}$ . }
    
    Consider for now a fixed $\bv \in \Ncal$, and define the conditional drift and centered increments
    \[
    \bg_t := a_t(\bv)\bx_t , \qquad \bmu_t := \E_t[\bg_t] = -\nabla_W \Lcal(\theta_{t-1})^\top \bv ,\qquad \bar\bg_t := \bg_t-\bmu_t.
    \]
    To apply \lemref{lem: bound_signal}, first note that the centered gradients $(\bar \bg_j)_{j=1}^\infty$ form a sub-Gaussian MDS (Definition \ref{def: subgauss_mds}) with parameters $\lambda_j \leq c_2 \beta K_1$ for some universal constant $c_2 > 0$ (since $|a_j|\leq \beta$ and $\bx_j$ is $K_1$ sub-Gaussian). So it holds with probability at least $1 - \delta' / \abs{\Ncal}$ that for all $t\in[T]$,
        \begin{align}\label{eq: num_bound_helper}
            \frac{\eta}{\norm{\bu}} \abs{\langle (W_t - W_0)^\top \bv, \bu \rangle} 
            = & \frac{\eta}{\norm{\bu}} \abs{\left\langle \sum_{j=1}^t \bg_j, \bu \right\rangle}
            \leq \frac{\eta}{\norm{\bu}} \left(\abs{\sum_{j=1}^t \langle \bmu_j, \bu \rangle} + \abs{\sum_{j=1}^t \langle \bar \bg_j, \bu \rangle}\right) \nonumber \\ 
            \leq & \eta\sum_{j=1}^t\norm{\bmu_j} + \frac{\eta}{\norm{\bu}} \abs{\sum_{j=1}^t \langle \bar \bg_j, \bu \rangle} \nonumber\\ 
            \leq & \eta t \max_{j \leq t}\norm{\bmu_j} + \eta c_3K_1\beta \sqrt{t \log\left(\frac{2T \abs{\Ncal}}{\delta'}\right)} ~. 
        \end{align}

    We will now extend the bound in \eqref{eq: num_bound_helper}, which holds for a fixed $\bv$, to all $\bv\in\Ncal$ using a union bound. Since $\abs{\Ncal} \leq (3/\epsilon)^m$ we obtain that with probability at least $1-\delta'$
    \begin{align*}
        \sup_{\bv \in \Ncal}\frac{\abs{\bu^\top \left(W_t - W_0\right)^\top \bv}}{\norm{\bu}} \leq \eta t \max_{j \leq t}\norm{\nabla_W \Lcal(\theta_{j-1})}_F + \eta c_3K_1\beta\sqrt{t m \log\left(\frac{6T}{\epsilon \delta'}\right)}.
    \end{align*}

    Moreover, by \lemref{lem: w0}, and the union bound for some constant $c_3'>0$, it holds with probability at least $1-\delta'$ that for all $\bv \in \Ncal$,
    \begin{align*}
        \sup_{\bv \in \Ncal} \frac{\abs{\bu^\top W_0^\top \bv}}{\norm{\bu}} \leq c_3' K_3 \sqrt{\frac{m\log\left(\frac{4}{\epsilon\delta'}\right)}{d}}.
    \end{align*}
    
    So by combining the previous equations, we obtain that for a suitable constant $C_3>0$, it holds with probability at least $1-2\delta'$ that for all $t\in[T]$,
    \begin{align}\label{eq: num_bound}
        \sup_{\bv \in \Ncal} \frac{\abs{\bu^\top W_t^\top \bv}}{\norm{\bu}} 
        = & \sup_{\bv \in \Ncal} \frac{\abs{\bu^\top W_0^\top \bv}}{\norm{\bu}} + \sup_{\bv \in \Ncal} \frac{\abs{\bu^\top (W_t - W_0)^\top \bv}}{\norm{\bu}} \nonumber\\ 
        \leq & C_3 \sqrt{m\log\left(\frac{T}{\epsilon \delta'}\right)} \left(\frac{1}{\sqrt{d}} + \eta\sqrt{t}\right) + \eta t \max_{j \leq t}\norm{\nabla_W \Lcal(\theta_{j-1})}_F.
    \end{align}

    \paragraph{Lower bounding $\inf_{\bv \in \Ncal} \norm{W_t^\top \bv}$ .}
    
    We apply \lemref{lem: lower_norm} to $\norm{W_t^\top \bv}$ for every $\bv\in\Ncal$ (in the notation of the lemma, $\bw_0= W_0^\top \bv$, and $a_t$ are given by $a_t(\bv)$) and take a union bound. 
    So for some constants $C_4, c_4, c_4', c_4''>0$ that may depend on $K_1, K_2, K_3, \alpha_2, \beta$, since $d \geq c_4 \kappa^2m^2 \log\left(\frac{T}{\epsilon \delta'}\right)^2$, if $\eta \leq c_4'/\sqrt{d}$ and  $\min_{t\leq T}\E_t[a_t(\bv)^2]\geq \beta^2 / \kappa$ for all $\bv\in\Ncal$, then with probability at least $1-\delta'$ it holds for all $t \in \left[\left\lceil c_4'' \kappa^2 m\log\left(\frac{T}{\epsilon\delta'}\right)\right\rceil ~,~ T\right]$ that 
    \begin{align}\label{eq: den_bound}
            \inf_{\bv \in \Ncal} \norm{W_t^\top \bv} \geq C_4\left(1 + \eta\sqrt{\frac{dt}{\kappa}}\right) - 2\eta t \max_{j \leq t}\norm{\nabla_W \Lcal(\theta_{j-1})}_F.
    \end{align}

    \paragraph{Putting everything together.}
    
    Let $\Ecal$ be the event where all of the above hold (specifically, \eqref{eq: op_wt}, \eqref{eq: num_bound}, \eqref{eq: den_bound}). Taking $\delta'=\delta/5$ suffices to ensure that this occurs with probability at least $1-\delta$. Also, take $\epsilon = \frac{1}{C_1 T \sqrt{d}}$. We split the proof into the early iterations and the later iterations. First, for the later iterations, when $\Ecal$ indeed occurs, then \eqref{eq: op_wt}, \eqref{eq: num_bound}, \eqref{eq: den_bound} imply that for all $t \in \left[\left\lceil c_4'' \kappa^2 \log\left(\frac{C_1T^2 \sqrt{d}}{\delta'}\right) \right\rceil, T\right]$,
    \begin{align}\label{eq: snr_bound}
        \frac{\norm{P_{W_t} \bu}}{\norm{\bu}} 
        \leq & \frac{\sup_{\bv \in \Ncal}\frac{\abs{\bu^\top W_t^\top \bv}}{\norm{\bu}} + \epsilon \norm{W_t}_\op}{\inf_{\bv \in \Ncal} \norm{W_t^\top \bv} - \epsilon\norm{W_t}_\op} \nonumber\\ 
        \leq & \frac{C_3 \sqrt{m\log\left(\frac{T}{\epsilon \delta'}\right)} \left(\frac{1}{\sqrt{d}} + \eta\sqrt{t}\right) + \eta t \max_{j \leq t}\norm{\nabla_W \Lcal(\theta_{j-1})}_F + \epsilon \cdot C_1t}{ C_4\left(1 + \eta\sqrt{\frac{dt}{\kappa}}\right) - 2\eta t \max_{j \leq t}\norm{\nabla_W \Lcal(\theta_{j-1})}_F - \epsilon \cdot C_1t} \nonumber\\ 
        \leq&_{(*)} C \frac{\sqrt{m\log\left(\frac{Td}{\delta}\right)} \left(\frac{1}{\sqrt{d}} + \eta\sqrt{t}\right) + \eta t \max_{j \leq t}\norm{\nabla_W \Lcal(\theta_{j-1})}_F}{\frac{1}{2\sqrt{\kappa}}\left(1 + \eta\sqrt{dt}\right) - \eta t \max_{j \leq t}\norm{\nabla_W \Lcal(\theta_{j-1})}_F} \nonumber\\ 
        =& C \frac{\sqrt{\frac{m\log\left(\frac{Td}{\delta}\right)}{d}} + \frac{\eta t \max_{j \leq t}\norm{\nabla_W \Lcal(\theta_{j-1})}_F}{1 + \eta\sqrt{dt}}}{\frac{1}{2\sqrt{\kappa}} - \frac{\eta t \max_{j \leq t}\norm{\nabla_W \Lcal(\theta_{j-1})}_F}{ 1 + \eta\sqrt{dt}}},
    \end{align} 
    where $C>0$ is a suitable constant, and in (*) we used that $\kappa \geq 1$. We note that \eqref{eq: snr_bound} assumes that $\frac{\eta t \max_{j \leq t}\norm{\nabla_W \Lcal(\theta_{j-1})}_F}{ 1 + \eta\sqrt{dt}} < \frac{1}{2\sqrt{\kappa}}$ (so that the denominator is well defined).
    
    We now prove the desired result for $t \leq \left\lceil c_4'' \kappa^2 \log\left(\frac{C_1T^2 \sqrt{d}}{\delta'}\right) \right\rceil$. Taking $\eta \leq \frac{c'}{\kappa^2\sqrt{md \log\left(Td/\delta'\right)}}$ for a suitable constant $c'>0$ suffices to ensure that for such values of $t$, $\eta \sqrt{dt}\leq 1/8$. Moreover, under $\Ecal$, \eqref{eq: norm_drift}, $\norm{W_t - W_0}_F \leq 1/4$. As such, by \eqref{eq: sp_w0},
    \begin{align*}
        s_m \left(W_t\right) \geq s_m\left(W_0\right) - \norm{W_t - W_0}_F \geq \frac{1}{4} 
        \geq \frac{1}{8}\left(1 + \eta \sqrt{\frac{dt}{\kappa}}\right) - 2\eta t \max_{j \leq t}\norm{\nabla_W \Lcal(\theta_{j-1})}_F.
    \end{align*}

    So using the first inequality in of \lemref{lem: covering}, under $\Ecal$ it holds for all $t \leq \left\lceil c_4'' \kappa^2 \log\left(\frac{T}{\delta'}\right) \right\rceil$ that
    \begin{align*}
        \frac{\norm{P_{W_t} \bu}}{\norm{\bu}} 
        \leq & \frac{\sup_{\bv \in \Ncal}\frac{\abs{\bu^\top W_t^\top \bv}}{\norm{\bu}} + \epsilon \norm{W_t}_\op}{s_m(W_t)} 
        \leq  \frac{\sup_{\bv \in \Ncal}\frac{\abs{\bu^\top W_t^\top \bv}}{\norm{\bu}} + \epsilon \norm{W_t}_\op}{\frac{1}{8}\left(1 + \eta \sqrt{\frac{dt}{\kappa}}\right) - 2\eta t \max_{j \leq t}\norm{\nabla_W \Lcal(\theta_{j-1})}_F}
        \\
        &\stackrel{(*)}{\leq} C \frac{\sqrt{\frac{m\log\left(\frac{Td}{\delta}\right)}{d}} + \frac{\eta t \max_{j \leq t}\norm{\nabla_W \Lcal(\theta_{j-1})}_F}{1 + \eta\sqrt{dt}}}{ \frac{1}{2\sqrt{\kappa}} - \frac{\eta t \max_{j \leq t}\norm{\nabla_W \Lcal(\theta_{j-1})}_F}{1 + \eta\sqrt{dt}}},
    \end{align*}
    where (*) is as in \eqref{eq: snr_bound}, and $C$ is possibly replaced by a larger constant where needed.
\end{proof}

\main*
\begin{proof}
    First, let $U=OSV^\top$ be the SVD decomposition of $U$, where $O$ and $V$ are orthogonal matrices, and $S$ is a $p\times d$ rectangular diagonal matrix. Let $q$ be the rank of $U$ and note that $S^\dagger S$ is a $d\times d$ diagonal matrix whose first $q$ diagonal entries are $1$ and the rest are $0$. Let $\tilde \bu_1,\ldots, \tilde \bu_q$ be the first $q$ columns of the matrix $V$.
    Denoting by $\dagger$ the (Moore-Penrose) pseudoinverse of a matrix and letting $\rho_t := \norm{P_{W_t} P_U}_\op$, it holds that
    \begin{align*}
        \rho_t := & \norm{P_{W_t} P_U}_\op 
        = \norm{P_{W_t} U^\dagger U}_\op 
        = \norm{P_{W_t} V S^\dagger S V^\top}_\op \\ 
        = & \norm{P_{W_t} V S^\dagger S}_\op \leq \sqrt{q} \max_{i \in [q]} \norm{P_{W_t} \tilde \bu_i}
    \end{align*}
    Let $\mu_j:=\norm{\nabla_{W} \Lcal(\theta_{j-1})}_F$, then by applying \propref{prop: snr} (whose assumptions hold under the assumptions of this theorem) to each $\tilde \bu_i$ (who have unit norm) and applying a union bound, then conditioned on $\kappaT \leq \bar\kappa$, there is an event $\Ecal$ that holds with probability at least $1-\delta$, such that for some $C>0$ that may depend on $K_1, K_2, K_3, \alpha_2, G$,
    it holds for all $t\in[T]\cup\{0\}$, that 
    \begin{align}\label{eq: r_t}
        \rho_t \leq \sqrt{p} \cdot \max_{i\in [q]}\frac{\norm{P_{W_t} \tilde \bu_i}}{\norm{\tilde \bu_i}} \leq \frac{1}{8}C \sqrt{p} \frac{\sqrt{\frac{m\log\left(\frac{Tdp}{\delta}\right)}{d}} + \frac{\eta t \max_{j \leq t}\mu_j}{1 + \eta\sqrt{dt}}}{\frac{1}{2\sqrt{\bar\kappa}} - \frac{\eta t \max_{j \leq t}\mu_j}{ 1 + \eta\sqrt{dt}}},
    \end{align}
    as long as the denominator is positive.
    The remainder of this proof assumes that this event $\Ecal$ indeed occurs. 
    Now note that for any $t$ such that 
    $\max_{j \leq t}\mu_j \leq \frac{1 + \eta \sqrt{dt}}{\eta t \sqrt{d}}$ 
    it holds that 
    \begin{align}\label{eq: condition_t}
        \rho_t \leq \frac{1}{8}C \sqrt{p} \frac{\sqrt{\frac{m\log\left(\frac{Tdp}{\delta}\right)}{d}} + \frac{1}{\sqrt{d}}}{\frac{1}{2\sqrt{\bar \kappa}} - \frac{1}{\sqrt{d}}} 
        \leq_{(*)}  \frac{1}{8}C \sqrt{p} \cdot \frac{2\sqrt{\frac{m\log\left(\frac{Tdp}{\delta}\right)}{d}}}{\frac{1}{4\sqrt{\bar \kappa}}}  
        \leq C \sqrt{\frac{\bar\kappa mp\log\left(\frac{Tdp}{\delta}\right)}{d}},
    \end{align}
    where $(*)$ uses that $d \geq 16\bar \kappa$. 
    Furthermore, by assumption we know that $\mu_{t+1} \leq \psi\left(\rho_t\right)$, so in such a case, 
    \[
    \mu_{t+1} \leq \psi\left(C \sqrt{\frac{\bar \kappa mp\log\left(\frac{Tdp}{\delta}\right)}{d}}\right).
    \]
    Now consider $T\in\N$ such that 
    \[
        T \leq \frac{1}{\psi\left(C \sqrt{\frac{\bar\kappa mp \log\left(Tdp/\delta\right)}{d}} \right)^2}~,
    \]
    We now show that this directly implies by induction on $t$ that for all $t\leq T$, 
    \begin{align*}
        \rho_t \leq C \sqrt{\frac{\bar \kappa mp\log\left(\frac{Tdp}{\delta}\right)}{d}}.
    \end{align*}
    \begin{itemize}
        \item The base case is trivial by plugging $t=0$ into \eqref{eq: r_t}.
        \item Induction Step: consider $t \geq 1$ so that by the induction assumption it holds that for all $j\leq t-1, \rho_j \leq C \sqrt{\frac{\bar \kappa mp\log\left(\frac{Tdp}{\delta}\right)}{d}}$. Then
        \begin{align*}
            \max_{j\leq t}\mu_{j} \leq \max_{j\leq t-1}\psi\left(\rho_j\right) \leq \psi\left(C \sqrt{\frac{\bar \kappa mp\log\left(\frac{Tdp}{\delta}\right)}{d}}\right),
        \end{align*}
        where we used here that $\psi$ is increasing.
        By the definition of $T$, and the fact that $t\leq T$, this implies that $\max_{j \leq t}\mu_j \leq \frac{1}{\sqrt{T}}\leq \frac{1+\eta \sqrt{dt}}{\eta t \sqrt{d}}$ and thus by \eqref{eq: condition_t}, $\rho_{t+1} \leq C \sqrt{\frac{\bar \kappa mp\log\left(\frac{Tdp}{\delta}\right)}{d}}$ as required.
    \end{itemize}
\end{proof}

\subsection{Alternative \texorpdfstring{$\Omega(d)$}{Omega(d)} Bound}
The bound provided in \thmref{thm: main} will be useful in cases where the population gradients are very small. Nevertheless, in our setting, one can obtain an $\Omega(d)$ bound which is generally applicable. Specifically, this will be used later in the proof of \thmref{thm: mi_general} to handle situations when $\norm{\nabla_W\Lcal}_F$ is on the order of a constant.

\begin{proposition}\label{prop: d_bound}
    Under Assumptions \ref{ass: inputs_all}-\ref{ass: bounded}, let $\delta > 0$, $\epsilon \in (0, 1/2)$, $\bar\kappa \geq 1$. Then there exist constants $c, c', C_1 > 0$ (that may depend on $K_1, K_2, K_3, \alpha_2, G$), such that if $d\geq c \bar \kappa^2 p^2 m^2 \epsilon^{-2} \log\left(\frac{d^2}{\epsilon^2\delta}\right)^2$, $\eta \leq \frac{c'}{\bar \kappa^2\sqrt{md \log\left(d^2/\epsilon^2\delta\right)}}$ and $T \leq \frac{d\epsilon^2}{p C_1}$, the conditioned on $\kappaT \leq \bar\kappa$ it holds with probability at least $1-\delta$ (under the same event as \thmref{thm: main}), that for all $t\in[T]$
        \begin{align*}
            \norm{P_{W_t} P_U}_\op \leq \epsilon~.
        \end{align*}
\end{proposition}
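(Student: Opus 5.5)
We reuse \propref{prop: snr} exactly as in the proof of \thmref{thm: main}, but with no control on the population gradient beyond a crude uniform bound. Write $U=OSV^\top$ and let $\tilde\bu_1,\ldots,\tilde\bu_q$ ($q\le p$) be the right singular vectors, so that $\rho_t:=\norm{P_{W_t}P_U}_\op\le\sqrt{p}\max_i\norm{P_{W_t}\tilde\bu_i}$. We apply \propref{prop: snr} to each $\tilde\bu_i$ with confidence $\delta/p$ and take a union bound. On the resulting event (the same event as in \thmref{thm: main}), for all $t\le T$,
\[
\rho_t\le C\sqrt{p}\,\frac{\sqrt{\frac{m\log(Tdp/\delta)}{d}}+\frac{\eta t\max_{j\le t}\mu_j}{1+\eta\sqrt{dt}}}{\frac{1}{2\sqrt{\bar\kappa}}-\frac{\eta t\max_{j\le t}\mu_j}{1+\eta\sqrt{dt}}},\qquad \mu_j=\norm{\nabla_W\Lcal(\theta_{j-1})}_F .
\]

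The key step is a $d$- and $t$-independent bound on $\mu_j$. We have $\nabla_W\Lcal(\theta)=\E[\nabla_{W\bx}\ell\,\bx^\top]$, so for unit vectors $\bv\in\Sphere^{m-1}$ and $\bu\in\Sphere^{d-1}$,
\[
\abs{\bv^\top\nabla_W\Lcal\,\bu}\le\E\bigl[\abs{\bv^\top\nabla_{W\bx}\ell}\,\abs{\bu^\top\bx}\bigr]\le G\,\E\abs{\bu^\top\bx}\le c_1GK_1
\]
by Assumptions~\ref{ass: bounded} and~\ref{ass: subgauss_inputs}. This bounds the operator norm, and hence $\mu_j\le\sqrt{m}\,c_1GK_1=:\mu$. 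The term that actually enters the bound, however, is the drift projected onto $\tilde\bu_i$. In \eqref{eq: num_bound_helper} the drift contributes $\eta t\max_j|\langle\bmu_j,\tilde\bu_i\rangle|$ with $\bmu_j=-\nabla_W\Lcal^\top\bv$, and this is at most $\eta t\,c_1GK_1$, with no factor $\sqrt{m}$. Either way we get a constant $\mu'$.

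It remains to control $\frac{\eta t\mu'}{1+\eta\sqrt{dt}}\le \mu'\sqrt{t/d}$, which uses $1+\eta\sqrt{dt}\ge\eta\sqrt{dt}$. For $t\le T\le d\epsilon^2/(pC_1)$ this is at most $\mu'\epsilon/\sqrt{pC_1}$. Choosing $C_1$ large (depending on $\mu'$, hence on $G,K_1$, and on the constant $C$ from the proposition) makes this at most $\min\bigl(\frac{1}{4\sqrt{\bar\kappa}},\ \frac{\epsilon}{8C\sqrt{p}\sqrt{\bar\kappa}}\bigr)$. We must be careful here about the $\bar\kappa$ factor: if $C_1$ may not depend on $\bar\kappa$, we instead need the crude bound $\mu'\sqrt{t/d}\le \mu'\epsilon/\sqrt{pC_1}$ together with $\epsilon<1/2$, and we accept a factor $\sqrt{\bar\kappa}$ in the $\epsilon$-scale, absorbing it into how $\epsilon$ enters the hypothesis on $d$. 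Once this holds, the denominator is at least $\frac{1}{4\sqrt{\bar\kappa}}$, and
\[
\rho_t\le 4C\sqrt{\bar\kappa p}\left(\sqrt{\tfrac{m\log(Tdp/\delta)}{d}}+\mu'\sqrt{t/d}\right).
\]

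The first term is at most $\epsilon/2$ by the hypothesis $d\ge c\bar\kappa^2p^2m^2\epsilon^{-2}\log(\cdot)^2$. Since $T\le d$, we have $\log(Tdp/\delta)\lesssim\log(d^2/\epsilon^2\delta)$, which is why that log appears. The second term is at most $4C\mu'\epsilon\sqrt{\bar\kappa}/\sqrt{C_1}$, and we need this to be $\le\epsilon/2$. The main obstacle is exactly this $\sqrt{\bar\kappa}$ factor, since $C_1$ should depend only on $K_i,\alpha_2,G$. To handle it, I would revisit the proof of \propref{prop: snr} and avoid the $\sqrt{\kappa}$ loss in the drift term: the drift appears in the numerator divided by $\inf_{\bv}\norm{W_t^\top\bv}\gtrsim 1+\eta\sqrt{dt/\kappa}$. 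Alternatively, the denominator can be lower bounded by $s_m(W_0)$-type constants combined with the $\sqrt{dt/\kappa}$ growth, using whichever gives the better bound. If that fails, I would fold $\bar\kappa$ into $C_1$ (the statement's constants arguably tolerate this). No induction on $t$ is needed, unlike in \thmref{thm: main}, because $\mu'$ holds uniformly in $\theta$.
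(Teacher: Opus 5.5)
Your argument is essentially the paper's: it also starts from the union-bounded inequality \eqref{eq: r_t} obtained from \propref{prop: snr}, bounds $\mu_j$ by a constant uniformly in $\theta$, drops the $1$ in $1+\eta\sqrt{dt}$, and plugs in $t\le T$ with no induction. The paper gets the $m$-free Frobenius bound directly by Cauchy--Schwarz, $\norm{\nabla_W\Lcal}_F^2\le \E[\norm{\nabla_{W\bx}\ell}^2]\cdot\sup_{\bv}\E[(\bv^\top\bx)^2]$, so \propref{prop: snr} applies verbatim without your operator-norm detour. The $\sqrt{\bar\kappa}$ obstacle you flag is genuine, and the paper does not remove it either: it takes $T=\lfloor d\epsilon^2/(4p\max(\bar\kappa(C_1K_1G)^2,1))\rfloor$, which is exactly your fallback of letting the constant in the bound on $T$ absorb $\bar\kappa$, so the refinements of \propref{prop: snr} you sketch are not needed to match the paper's proof.
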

\begin{proof}
    Fix $T\in\N$ to be determined later and let $\mu_j:=\norm{\nabla_{W} \Lcal(\theta_{j-1})}_F$. We continue from \eqref{eq: r_t} within the proof of \thmref{thm: main}, which states that conditioned on $\kappaT \leq \bar\kappa$, there is an event $\Ecal$ that holds with probability at least $1-\delta$, such that for some $C>0$ that may depend on $K_1, K_2, K_3, \alpha_2, G$,
    it holds for all $t\in[T]\cup\{0\}$, that 
    \begin{align}\label{eq: r_t2}
        \rho_t := \norm{P_{W_t} P_U}_\op 
        \leq C \sqrt{p} \frac{\sqrt{\frac{m\log\left(\frac{Tdp}{\delta}\right)}{d}} + \frac{\eta t \max_{j \leq t}\mu_j}{1 + \eta\sqrt{dt}}}{\frac{1}{2\sqrt{\bar \kappa}} - \frac{\eta t \max_{j \leq t}\mu_j}{ 1 + \eta\sqrt{dt}}},
    \end{align}
    as long as the denominator is positive.
    The remainder of this proof assumes that this event $\Ecal$ indeed occurs. 
    We now upper bound $\mu_j$. First, observe that 
    \begin{align*}
    \norm{\nabla\L(\theta_j)}_F^2 = & \norm{\E_\bx\left[\nabla_{W_{j-1}\bx} \ell\left(\theta_{j-1} ; \bx\right) \bx^\top\right]}_F^2 = \sum_{i=1}^m \norm{\E_\bx\left[\nabla_{W_{j-1}\bx} \ell\left(\theta_{j-1} ; \bx\right)_i \bx\right]}^2 \\ 
    = & \sum_{i=1}^m \sup_{\bv_i \in \Sphere^{d-1}}\left(\E_\bx\left[\nabla_{W_{j-1}\bx} \ell\left(\theta_{j-1} ; \bx\right)_i \bx^\top \bv_i\right]\right)^2 \\ 
    \leq & \sum_{i=1}^m \sup_{\bv_i \in \Sphere^{d-1}}\E_\bx\left[\left(\nabla_{W_{j-1}\bx} \ell\left(\theta_{j-1} ; \bx\right)_i\right)^2 \right] \E\left[(\bx^\top \bv_i)^2\right] \\
    = & \left(\sum_{i=1}^m \E_\bx\left[\left(\nabla_{W_{j-1}\bx} \ell\left(\theta_{j-1} ; \bx\right)_i\right)^2 \right]\right) \cdot \sup_{\bv \in \Sphere^{d-1}} \E_\bx\left[(\bx^\top \bv)^2\right],
    \end{align*}
    where $(*)$ follows from apply Cauchy-Schwarz to the expected value (not to the sum). Assumption \ref{ass: bounded} implies that the first term is upper bounded by $G$. Moreover, Assumption \ref{ass: init} implies that the second term is upper bounded by $C_1 K_1$ for some absolute constant $C_1>0$ (since the sub-Gaussian norm upper bounds the variance, see \cite{vershynin2025high}). So overall, we obtain
    \begin{align*}
        \mu_j \leq C_1 K_1 G.
    \end{align*}
    As such, \eqref{eq: r_t2} yields
    \begin{align*}
        \rho_t \leq C \sqrt{p} \frac{\sqrt{\frac{m\log\left(\frac{Tdp}{\delta}\right)}{d}} + \frac{C_1 K_1 G\eta t}{1 + \eta\sqrt{dt}}}{\frac{1}{2\sqrt{\bar \kappa}} - \frac{C_1 K_1 G\eta t}{ 1 + \eta\sqrt{dt}}} 
        \leq C \sqrt{p} \frac{\sqrt{\frac{m\log\left(\frac{Tdp}{\delta}\right)}{d}} + \frac{C_1 K_1 G \sqrt{T}}{\sqrt{d}}}{\frac{1}{2\sqrt{\bar \kappa}} - \frac{C_1 K_1 G \sqrt{T}}{\sqrt{d}}},
    \end{align*}
    where the last inequality follows by lower bounding $1+\eta \sqrt{dt}$ with $\eta \sqrt{dt}$, and upper bounding $\sqrt{t}$ with $\sqrt{T}$.
    Now take $T = \left\lfloor \frac{d \epsilon^2}{4p \max\left(\bar \kappa(C_1K_1G)^2, 1\right)} \right\rfloor$, we obtain
    \begin{align*}
        \rho_t \leq C \sqrt{p} \frac{\sqrt{\frac{m\log\left(\frac{d^2}{\epsilon^2\delta}\right)}{d}} + \frac{\epsilon}{2\sqrt{\bar \kappa p}}}{\frac{1}{2\sqrt{\bar \kappa}} - \frac{\epsilon}{2\sqrt{\bar \kappa p}}} 
        = \frac{2C}{\sqrt{p} - \epsilon} \left(\sqrt{\frac{\bar \kappa p^2 m\log\left(\frac{d^2}{\epsilon^2\delta}\right)}{d}} + \frac{\epsilon}{2}\right).
    \end{align*}
    By the assumed lower bound on $d$ in the proposition statement, the square root term is at most $\epsilon/2$. Moreover, since $\epsilon \in (0, 1/2)$, $\sqrt{p} - \epsilon > 1/2$. So the alignment is bounded as $\rho_t \leq 4C \epsilon$. If $4C>1$, we may replace $\epsilon$ by $\epsilon/4C$ (and adjust the absolute constants in the proposition statement) to obtain the desired result. 
\end{proof}

\section{Hermite Polynomials and the Structure of Gaussian Multi-Index Models}\label{sec: hermite}
Throughout this section, we assume that the inputs $\bx$ are standard Gaussians $\Ncal(\zero, I_d)$. Let $\mu(\bx)$ be the PDF of the standard Gaussian in $d$ dimensions, and $H_k(\bx) := (-1)^k \frac{\nabla^k \mu(\bx)}{\sqrt{k!} \mu(\bx)}$ denote the normalized (probabilist's) Hermite polynomials. We will not specify the dimension $d$, as the dimension should be inferred by the dimension of the inputs $\bx$. We refer the reader to \citet{grad1949note} for background on Hermite polynomials in high dimensions.

We let $L_2(\mu)$ denote the space of square integrable functions, with $\norm{f}_{L_2} = \E_{\bx\sim\mu}\left[f(\bx)^2\right]^{1/2}$. We may also extend this to vector valued functions as $\norm{f}_{L_2} = \E_{\bx\sim\mu}\bigl[\norm{f(\bx)}^2\bigr]^{1/2}$. We denote the $L_\infty$ norm by $\norm{f}_{L_\infty} := \sup_{\bx \in \R^d} \abs{f(\bx)}$ (the $\sup$ can be taken with probability $1$). To ease notation in the proofs, we let
\begin{align*}
    \Fcal_d := \{f: \R^d \to \R \mid f\text{ is weakly differentiable and } \E[f(\bx)^2] < \infty ~,~ \E[\norm{\nabla f(\bx)}^2] < \infty\}.
\end{align*}

\paragraph{Tensors and basic operations.}
For $k\geq1$, we identify a (real) \emph{$k$-tensor over $\R^d$} with an array
\[
A = (A_{i_1,\dots,i_k})_{i_1,\dots,i_k\in[d]} \in (\R^d)^{\otimes k}.
\]
We use the Frobenius inner product and norm:
\[
\langle A,B\rangle := \sum_{i_1,\dots,i_k=1}^d A_{i_1,\dots,i_k} B_{i_1,\dots,i_k},
\qquad \|A\|_F := \sqrt{\langle A,A\rangle} ~.
\]
We denote by $\sym(A)$ the symmetrization of a $k-$tensor, defined as $\sym(A):=\frac{1}{k!}\sum_{\pi \in S_k} \pi(A)$ where $S_k$ is the group of all permutations on $[k]$ and $\pi(A)$ denotes the tensor obtained by permuting indices of $A$ according to $\pi$. 

\paragraph{Tensor action of a matrix.}
Let $M\in\R^{m\times d}$ and let $A\in(\R^d)^{\otimes k}$. We define the tensor $M^{\otimes k}A \in (\R^m)^{\otimes k}$ by
\[
(M^{\otimes k}A)_{i_1,\dots,i_k} := 
\sum_{j_1,\dots,j_k=1}^d M_{i_1 j_1}\cdots M_{i_k j_k}A_{j_1,\dots,j_k}.
\]
Intuitively, $M^{\otimes k}$ applies the linear map $\bv\mapsto M\bv$ to each index of the tensor. For $k=1$ this is just matrix-vector multiplication, and for $k=2$ this is $M A M^\top$. This operation is linear in $A$, and satisfies the composition rule
\[
(MN)^{\otimes k}A = M^{\otimes k}(N^{\otimes k}A),
\qquad \forall M\in\R^{m\times p}, N\in\R^{p \times d}.
\]
When $M\in\R^{d\times d}$ is square and symmetric, the operator $M^{\otimes k}$
is self-adjoint with respect to the Frobenius inner product on
$(\R^{d})^{\otimes k}$, meaning that for all $k$-tensors $A,B\in(\R^{d})^{\otimes k}$,
\[
\langle A, M^{\otimes k}B\rangle
=
\langle M^{\otimes k}A, B\rangle.
\]
We also denote by $\norm{M^{\otimes k}}_{\op}$ the induced operator norm with respect to the Frobenius norm. It is well known that $\norm{M^{\otimes k}}_{\op} = \norm{M}_{\op}^k$.

\paragraph{Basic Facts for Hermite Polynomials.}

Some known facts include:
\begin{itemize}
    \item  \textbf{Hermite Expansion}: If $f\in L_2(\mu)$, then
    \begin{align}\label{eq: expansion}
        f(\bx) = \sum_{k=0}^\infty \langle H_k(\bx), C_k \rangle \qquad \text{for} \qquad C_k:=\E[f(\bx)H_k(\bx)],
     \end{align}
     where convergence is in $L_2(\mu)$, and $C_k$ are symmetric.
     
    \item \textbf{Orthonormal Basis}: $H_k$ form a complete and orthonormal basis for $L_2(\mu)$, in the sense that for any $j$-tensor $A$ and $k$-tensor $B$:
    \begin{align}\label{eq: orth}
        \E[\langle H_j(\bx), A \rangle \langle H_k(\bx), 
        B \rangle] = \delta_{jk} \langle \sym(A), \sym(B) \rangle.
    \end{align}

     \item \textbf{Generating Function:} If $f\in L_2(\mu)$ then for any $\bv, \bx \in \R^d$,
     \begin{align}\label{eq: generating}
        \exp\left(\langle \bv,\bx\rangle-\tfrac12\|\bv\|^2\right)
        =
        \sum_{j=0}^\infty \frac{1}{\sqrt{j !}}
        \left\langle H_j(\bx), \bv^{\otimes j}\right\rangle.
     \end{align}
     This is because the left-hand side is equal to $\mu(\bx-\bv)/\mu(\bx)$, so Taylor expanding $\mu(\bx-\bv)$ and using the definition of $H_j$ gives the desired equality.

     \item \textbf{Relation to Dimension one:} Let $\bv \in \R^d$ be a vector with unit norm, then for any $\bx \in \R^d$, $k \geq 0$,
     \begin{align}\label{eq: dim_match}
         H_k\left(\bv^\top \bx\right) = \left \langle H_k(\bx), \bv^{\otimes k} \right \rangle,
     \end{align}
     where on the left-hand side we use the one-dimensional Hermite polynomial.

     \item \textbf{Recurrence Relation:}
      For any index $i \in [d]$:
      \begin{align}\label{eq: recurrence}
          x_i H_k(\bx) - \frac{\partial}{\partial x_i} H_k(\bx) = \sqrt{k+1} \left(H_{k+1}(\bx)\right)_{i},
      \end{align}
      where $(H_{k+1}(\bx))_i$ denotes the slice of the tensor with the first index fixed to $i$.

     \item \textbf{Derivative Formula:} 
     For a function smooth function $f$ that is $k$ times weakly-differentiable and if $\E[\norm{\nabla^k f(\bx)}_F^2] < \infty$ with Hermite coefficient tensors $C_k$, 
     \begin{align}
     \label{eq: coeffs_grad}
         C_k = \frac{1}{\sqrt{k!}}\E[\nabla^k f(\bx)]
     \end{align}
     We note that this property cannot be used for ReLU and Leaky ReLU. Hence, we will avoid using it. Nevertheless, it helps conceptually to understand why some of the lemmas are true.
\end{itemize}

\paragraph{Useful Lemmas.}
We note that while there is extensive literature on the Hermite polynomials in dimension $1$, there are far fewer sources for higher dimensions. It is possible that variants of a couple of the following lemmas appear in the literature (especially \lemref{lem:grad_shifts_hermite}). Nevertheless, for completeness, we prove everything we need. 

We begin with a lemma that relates to the Hermite expansion in $d$ dimensions and one dimension:
\begin{lemma}\label{lem: si_hermite}
    Let $\sigma\in \Fcal_1$, and let $b_k(\norm{\bw}) := \E_{x\sim \Ncal(0, 1)}[\sigma(\norm{\bw}x)H_k(x)]$ be the one dimensional Hermite expansion of $\sigma(\norm{\bw}x)$. Then the $d-$dimensional Hermite expansion of $f_\bw(\bx):=\sigma(\bw^\top \bx)$ is given by 
    \begin{align*}
        f_\bw(\bx) = \sum_{k=0}^\infty \left\langle H_k(\bx), B_k(\bw) \right \rangle , \qquad \text{where} \qquad B_k(\bw) =  b_k(\norm{\bw})\left(\frac{\bw}{\norm{\bw}}\right)^{\otimes k}. 
    \end{align*}
\end{lemma}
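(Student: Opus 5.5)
The plan is to compute the coefficients $B_k(\bw) = \E_{\bx\sim\mu}[\sigma(\bw^\top\bx) H_k(\bx)]$ directly by reducing to one dimension through rotational invariance of $\mu$, and then to invoke the expansion \eqref{eq: expansion}. Convergence in $L_2(\mu)$ is automatic because $f_\bw\in L_2(\mu)$: we have $\E[f_\bw(\bx)^2]=\E_{x}[\sigma(\norm{\bw}x)^2]<\infty$, since $\bw^\top\bx\sim\Ncal(0,\norm{\bw}^2)$. We treat the degenerate case $\bw=\zero$ separately: then $f_\bw$ is the constant $\sigma(0)$, and only $k=0$ contributes (interpreting $(\bw/\norm{\bw})^{\otimes 0}=1$ and $b_k(0)=0$ for $k\ge1$).

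For $\bw\neq\zero$, set $\bv:=\bw/\norm{\bw}$. The main step is to show that for every $k$-tensor $A$,
\[
\langle B_k(\bw), A\rangle = b_k(\norm{\bw})\,\langle \bv^{\otimes k}, A\rangle .
\]
Since $B_k(\bw)$ is symmetric, it suffices to check this for symmetric $A$. We decompose $\bx = (\bv^\top\bx)\bv + \bx_\perp$, with $\bx_\perp$ independent of $x:=\bv^\top\bx\sim\Ncal(0,1)$. The cleanest route uses the generating function \eqref{eq: generating}. Fix $\bu\in\R^d$ and compute
\[
\E\big[\sigma(\norm{\bw}x)\exp(\langle\bu,\bx\rangle-\tfrac12\norm{\bu}^2)\big].
\]
Splitting $\bu = s\bv+\bu_\perp$ factorizes the exponential. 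By independence the $\bu_\perp$ factor has mean one, which leaves $\E_x[\sigma(\norm{\bw}x)\exp(sx-s^2/2)] = \sum_j \frac{s^j}{\sqrt{j!}} b_j(\norm{\bw})$ by the one-dimensional generating function. On the other hand, expanding via \eqref{eq: generating} and exchanging sum and expectation gives $\sum_j\frac{1}{\sqrt{j!}}\langle B_j(\bw),\bu^{\otimes j}\rangle$. Matching homogeneous degree-$k$ parts in $\bu$ then gives $\langle B_k(\bw),\bu^{\otimes k}\rangle = b_k(\norm{\bw})(\bv^\top\bu)^k = \langle b_k(\norm{\bw})\bv^{\otimes k},\bu^{\otimes k}\rangle$ for all $\bu$. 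Because symmetric tensors are determined by their values on $\bu^{\otimes k}$ (polarization), this identifies $B_k(\bw)=b_k(\norm{\bw})\bv^{\otimes k}$.

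The main obstacle is justifying the exchange of sum and expectation and the coefficient matching. We will handle it as follows. The generating series converges in $L_2(\mu)$ for each fixed $\bu$, since by orthonormality \eqref{eq: orth} its squared norm is $\sum_j \norm{\bu}^{2j}/j! = e^{\norm{\bu}^2}$. Pairing this $L_2$-convergent series with $f_\bw\in L_2(\mu)$ by Cauchy--Schwarz justifies termwise integration. Both resulting sides are then entire power series in $\bu$ (the coefficients are bounded by $\norm{f_\bw}_{L_2}$), so their homogeneous components must agree. As an alternative, we could use \eqref{eq: dim_match} together with rotation invariance, choosing an orthonormal basis containing $\bv$ and noting that every mixed term involving $\bx_\perp$ has zero mean. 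We keep the generating-function argument because it avoids any differentiability of $\sigma$, which matters for ReLU-type activations.
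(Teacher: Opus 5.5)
Your proof is correct, but it takes a different route from the paper's. The paper never computes $\E[f_\bw(\bx)H_k(\bx)]$ directly. It substitutes $x=\bv^\top\bx$ into the one-dimensional expansion $\sigma(\norm{\bw}x)=\sum_k b_k(\norm{\bw})H_k(x)$ and rewrites each term with the identity $H_k(\bv^\top\bx)=\langle H_k(\bx),\bv^{\otimes k}\rangle$ from \eqref{eq: dim_match}. It then reads off $B_k(\bw)$ by uniqueness of Hermite coefficients. The substituted series converges in $L_2(\mu)$ because $g\mapsto g(\bv^\top\cdot)$ is an isometry from $L_2(\Ncal(0,1))$ into $L_2(\mu)$, a point the paper leaves implicit.

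You instead pair $f_\bw$ with the generating function, integrate out the component orthogonal to $\bv$ (the same device the paper uses in \lemref{lem: hermite_proj_W}), match homogeneous components, and polarize. This effectively re-derives the integrated form of \eqref{eq: dim_match} rather than citing it. The paper's argument is shorter and gives convergence of the series at no extra cost. Yours is more self-contained. The analytic points you flag are all handled correctly: termwise integration against the $L_2$-convergent generating series, coefficient matching for entire power series, and polarization for symmetric tensors.

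Two minor remarks. First, the advantage you claim for the generating-function route---avoiding differentiability of $\sigma$---does not actually distinguish it. Neither the paper's substitution argument nor your rotation-invariance alternative uses derivatives of $\sigma$; only the derivative formula \eqref{eq: coeffs_grad} would. Second, $\sigma\in\Fcal_1$ alone does not give $\E[\sigma(\norm{\bw}x)^2]<\infty$ when $\norm{\bw}>1$; take $\sigma(x)=e^{x^2/8}$ and $\norm{\bw}=2$. This finiteness is therefore an assumption implicit in the statement itself, needed for $b_k(\norm{\bw})$ to define an expansion at all. The paper's proof relies on it tacitly as well, so this is not a defect of your approach.
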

\begin{proof}
    For any $s>0$, consider the Hermite expansion of $\sigma(s x)$, with coefficients given by $b_k(s):=\E_{x\sim \Ncal(0, 1)}[\sigma(s x)H_k(x)]$. Following \eqref{eq: dim_match}, the one dimensional Hermite polynomials satisfy $H_k\left(\frac{~\bw^\top}{\norm{\bw}} \bx\right) = \left \langle H_k\left(\bx\right), \left(\frac{\bw}{\norm{\bw}}\right)^{\otimes k} \right\rangle$. Using this and the one-dimensional Hermite expansion of the function $\sigma(sx)$,
    \begin{align*}
        f_\bw(\bx) =& \sigma\left(\norm{\bw} \cdot \frac{~\bw^\top}{\norm{\bw}} \bx\right) 
        = \sum_{k=0}^\infty b_k(\norm{\bw}) H_k\left(\frac{~\bw^\top}{\norm{\bw}} \bx\right) \\
        = & \sum_{k=0}^\infty \left \langle H_k\left(\bx\right), b_k(\norm{\bw}) \left(\frac{\bw}{\norm{\bw}}\right)^{\otimes k} \right\rangle
    \end{align*}
    At the same time, $f_\bw(\bx)$ has a Hermite expansion $f_\bw(\bx)=\sum_{k=0}^\infty \left \langle H_k\left(\bx\right), B_k(\bw) \right\rangle$. By the uniqueness of the Hermite expansion (since Hermite polynomials form a complete and orthonormal basis), it follows that $B_k(\bw) = b_k(\norm{\bw}) \left(\frac{\bw}{\norm{\bw}}\right)^{\otimes k}$ which completes the proof.
\end{proof}

We now show that the Hermite coefficients lie in a subspace determined by the span of the rows of $W$.
\begin{lemma}\label{lem: hermite_proj_W}
Let $W\in\R^{m\times d}$ and $f_W(\bx) := \sigma(W\bx)$ where $\sigma:\R^m\to\R$ and $f_W\in\Fcal_d$. Let $A_k(W) := \E[f_W(\bx)H_k(\bx)] \in (\R^d)^{\otimes k}$ be the $k$-th Hermite coefficient tensor of $f_W$, then
    \begin{align*}
        A_k(W) = P_W^{\otimes k} A_k(W).
     \end{align*}
\end{lemma}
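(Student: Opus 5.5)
The plan is to exploit the rotational invariance of the Gaussian measure, which lets us decompose $\bx$ into its component in $\Row(W)$ and its orthogonal complement. Write $P:=P_W$ and $Q:=I-P$. Since $W\bx = W P\bx$ (because $WQ=0$), the function $f_W$ depends on $\bx$ only through $P\bx$. Moreover, $P\bx$ and $Q\bx$ are independent Gaussians. The claim $A_k(W)=P^{\otimes k}A_k(W)$ is equivalent to $\langle A_k(W), B\rangle = \langle A_k(W), P^{\otimes k}B\rangle$ for every $k$-tensor $B$, using self-adjointness of $P^{\otimes k}$. Since $A_k(W)$ is symmetric, it suffices to test against symmetric $B$. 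By polarization, symmetric tensors are spanned by rank-one tensors $\bv^{\otimes k}$, so it is enough to show
\[
\E\bigl[f_W(\bx)\langle H_k(\bx),\bv^{\otimes k}\rangle\bigr]=\E\bigl[f_W(\bx)\langle H_k(\bx),(P\bv)^{\otimes k}\rangle\bigr]\qquad\forall \bv\in\R^d .
\]

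To verify this identity for all $k$ simultaneously, I would use the generating function \eqref{eq: generating}. Multiply it by $f_W(\bx)$ and take expectations:
\[
\E\Bigl[f_W(\bx)\exp\bigl(\langle\bv,\bx\rangle-\tfrac12\|\bv\|^2\bigr)\Bigr]=\sum_{j}\frac{1}{\sqrt{j!}}\bigl\langle A_j(W),\bv^{\otimes j}\bigr\rangle .
\]
Now split $\bv=P\bv+Q\bv$, so that $\langle\bv,\bx\rangle=\langle P\bv,P\bx\rangle+\langle Q\bv,Q\bx\rangle$. By independence and $\E[\exp(\langle Q\bv,Q\bx\rangle-\frac12\|Q\bv\|^2)]=1$, the left side equals the same expression with $\bv$ replaced by $P\bv$. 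Hence, for all $\bv$,
\[
\sum_j\frac{1}{\sqrt{j!}}\langle A_j,\bv^{\otimes j}\rangle=\sum_j\frac{1}{\sqrt{j!}}\langle A_j,(P\bv)^{\otimes j}\rangle .
\]
Replace $\bv$ by $s\bv$ and compare coefficients of $s^k$, which is valid since both sides are power series in $s$. This gives $\langle A_k,\bv^{\otimes k}\rangle=\langle A_k,(P\bv)^{\otimes k}\rangle=\langle P^{\otimes k}A_k,\bv^{\otimes k}\rangle$. Polarization, together with symmetry of both $A_k$ and $P^{\otimes k}A_k$, then yields the lemma.

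The main obstacle is justifying the interchange of expectation and infinite sum, and the analyticity in $s$. Since $f_W\in L_2(\mu)$, the series $\sum_j \frac{s^j}{\sqrt{j!}}\langle H_j,\bv^{\otimes j}\rangle$ converges in $L_2(\mu)$ to the exponential: by \eqref{eq: orth} its squared norm is $\sum_j s^{2j}\|\bv\|^{2j}/j!<\infty$. Pairing with $f_W$ is a continuous linear functional, so the exchange is valid. The resulting coefficients are bounded by $\|f_W\|_{L_2}|s|^j\|\bv\|^j/\sqrt{j!}$, so the series is entire in $s$, which justifies comparing coefficients. As an alternative route, one could prove the identity $\langle A_k,B\rangle=\langle A_k,P^{\otimes k}B\rangle$ by induction using the recurrence \eqref{eq: recurrence} and Gaussian integration by parts. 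The generating-function argument avoids needing weak derivatives of $f_W$ beyond square integrability.
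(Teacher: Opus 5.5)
Your proof is correct and follows essentially the same route as the paper. Both arguments split $\bx$ into the independent components $P\bx$ and $Q\bx$, use the Hermite generating function together with the Gaussian moment generating function to eliminate the $Q$-part, and compare coefficients. The only difference is the order of operations: the paper first averages over the $Q$-component to get the pointwise identity $\E_{\by\sim\Ncal(0,Q)}[H_k(\bz+\by)]=P^{\otimes k}H_k(P\bz)$ and then integrates against $f_W$, whereas you pair with $f_W$ first and show that $\bv\mapsto\E[f_W(\bx)e^{\langle\bv,\bx\rangle-\|\bv\|^2/2}]$ is invariant under $\bv\mapsto P\bv$. That yields $A_k=P^{\otimes k}A_k$ directly, and your version also spells out the $L_2$-convergence and coefficient-comparison justifications that the paper leaves implicit.
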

\begin{proof}
    Let $P:=P_W$ and $Q:=I-P_W$, and let $r:=\rank(W)$. We decompose $\bx$ into $P\bx$ and $Q\bx$, which (because $\bx$ is Gaussian) are independent Gaussians supported on orthogonal subspaces of dimensions $r$ and $d-r$ respectively. Using this and that $f_W(\bx)=f_W(P\bx)$,
    \begin{align*}
        A_k(W)
        = &\ \E_\bx\left[f_W(P\bx)H_k(P\bx + Q\bx)\right] \\ 
        = &\ \E_{P\bx}\left[f_W(P\bx) \cdot \E_{Q\bx}\left[H_k(P\bx + Q\bx)\right]\right].
    \end{align*}
    We now show that for every fixed $\bz\in\R^d$ in the range of $P$ (i.e., $P\bz=\bz$),
    \begin{align*}
        \E_{\by\sim\Ncal(0,Q)}\left[H_k(\bz+\by)\right] = P^{\otimes k}H_k(P\bz).
    \end{align*}
    To this end, fix $\bv\in\R^d$ and use \eqref{eq: generating} so that
    \begin{align}\label{eq: hermite_genfun}
        \exp\left(\langle \bv,\bx\rangle-\tfrac12\|\bv\|^2\right)
        =
        \sum_{j=0}^\infty \frac{1}{\sqrt{j !}}
        \left\langle H_j(\bx), \bv^{\otimes j}\right\rangle,
        \qquad \forall \bx \in\R^d.
    \end{align}
    Applying \eqref{eq: hermite_genfun} with $\bx=\bz+\by$ and taking expectation over $\by\sim\Ncal(0,Q)$ yields
    \begin{align*}
        \sum_{j=0}^\infty \frac{1}{\sqrt{j !}}
        \left\langle \E_{\by}\left[H_j(\bz+\by)\right], \bv^{\otimes j}\right\rangle
        =\ 
        \E_{\by}\left[\exp\left(\langle \bv,\bz+\by\rangle-\tfrac12\|\bv\|^2\right)\right].
    \end{align*}
    Since $\E_{\by}[e^{\langle \bv,\by\rangle}] = \exp\left(\tfrac12 \bv^\top Q \bv\right)$ for $\by\sim\Ncal(0,Q)$, the right-hand side becomes
    \begin{align*}
        \exp\left(\langle \bv,\bz\rangle-\tfrac12\|\bv\|^2\right)\exp\left(\tfrac12 \bv^\top Q \bv \right)
        =
        \exp\left(\langle \bv,\bz\rangle-\tfrac12 \bv^\top P \bv\right)
        =
        \exp\left(\langle P\bv,\bz\rangle-\tfrac12\|P\bv\|^2\right),
    \end{align*}
    where the last equality uses that $P\bz = \bz$. To recap, so far we have shown
    \begin{align}\label{eq: ckpt}
        \exp\left(\langle P\bv,\bz\rangle-\tfrac12\|P\bv\|^2\right) = \sum_{j=0}^\infty \frac{1}{\sqrt{j !}}
        \left\langle \E_{\by}\left[H_j(\bz+\by)\right], \bv^{\otimes j}\right\rangle.
    \end{align}
    Now consider applying \eqref{eq: hermite_genfun} again, but now with $\bx=P\bz$ and $\bv$ replaced by $P\bv$. Then
    \begin{align*}
        \exp\left(\langle P\bv,\bz\rangle-\tfrac12\|P\bv\|^2\right)
        =
        \sum_{j=0}^\infty \frac{1}{\sqrt{j !}}
        \left\langle H_j(P\bz), (P\bv)^{\otimes j}\right\rangle
        =
        \sum_{j=0}^\infty \frac{1}{\sqrt{j !}}
        \left\langle P^{\otimes j}H_j(P\bz), \bv^{\otimes j}\right\rangle.
    \end{align*}
    Comparing this series to the one in \eqref{eq: ckpt} and matching coefficients of $\bv^{\otimes k}$ yields
    \begin{align}\label{eq: cond_expect_proj}
        \E_{\by}\left[H_k(\bz+\by)\right] = P^{\otimes k}H_k(P\bz).
    \end{align}

    Returning to $A_k(W)$ and using \eqref{eq: cond_expect_proj} with $\bz=P\bx$, we obtain
    \begin{align*}
        A_k(W)
        =\ & \E_{P\bx}\left[f_W(P\bx)\cdot P^{\otimes k}H_k(P\bx)\right]
        = P^{\otimes k}\E_{P\bx}\left[f_W(P\bx)H_k(P\bx)\right]
        = P^{\otimes k}A_k(W),
    \end{align*}
    completing the proof.
\end{proof}


\begin{lemma}\label{lem: corr_bound_matrix}
Let $W\in\R^{m\times d}$, $U\in\R^{p\times d}$, and let $f_W(\bx)=\sigma(W\bx), g_U(\bx)=\phi(U\bx)$ where $\sigma:\R^m\to\R$, $\phi:\R^p\to\R$ and $f_W, g_U \in \Fcal_d$. 
Let $\rho := \|P_W P_U\|_{\op}$ and let $B_k(U) := \E[g_U(\bx)H_k(\bx)] \in (\R^d)^{\otimes k}$ the $k$-th Hermite coefficient tensor of $g_U$. Then
\begin{align*}
    \abs{\E[f_W(\bx)g_U(\bx)]} \leq \norm{f_W(\bx)}_{\fsq} \sqrt{\sum_{k=0}^\infty \norm{B_k(U)}_F^2 \cdot  \rho^{2k}}.
\end{align*}
\end{lemma}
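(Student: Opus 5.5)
The plan is to expand both functions in the Hermite basis and use orthonormality. Write $A_k(W) := \E[f_W(\bx)H_k(\bx)]$ for the Hermite coefficients of $f_W$. By Parseval, which follows from \eqref{eq: orth} and the symmetry of the coefficient tensors,
\[
\E[f_W(\bx)g_U(\bx)] = \sum_{k=0}^\infty \langle A_k(W), B_k(U)\rangle,
\qquad
\sum_{k=0}^\infty \norm{A_k(W)}_F^2 = \norm{f_W}_{\fsq}^2 .
\]
The series converges absolutely because both functions lie in $L_2(\mu)$.

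The key step is to insert projections. By \lemref{lem: hermite_proj_W} applied to $f_W$, we have $A_k(W) = P_W^{\otimes k}A_k(W)$. The same lemma applied to $g_U$ (with $\phi$ in place of $\sigma$) gives $B_k(U) = P_U^{\otimes k}B_k(U)$. Since $P_W$ is symmetric, $P_W^{\otimes k}$ is self-adjoint, and the composition rule gives $P_W^{\otimes k}P_U^{\otimes k} = (P_WP_U)^{\otimes k}$. Therefore
\[
\langle A_k(W), B_k(U)\rangle = \langle P_W^{\otimes k}A_k(W), P_U^{\otimes k}B_k(U)\rangle = \langle A_k(W), (P_WP_U)^{\otimes k}B_k(U)\rangle .
\]
Using $\norm{M^{\otimes k}}_\op = \norm{M}_\op^k$, this is bounded in absolute value by $\norm{A_k(W)}_F\,\rho^k\norm{B_k(U)}_F$.

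Summing over $k$ and applying Cauchy--Schwarz gives
\[
\abs{\E[f_Wg_U]} \le \sqrt{\textstyle\sum_k \norm{A_k(W)}_F^2}\;\sqrt{\textstyle\sum_k \norm{B_k(U)}_F^2\rho^{2k}} = \norm{f_W}_{\fsq}\sqrt{\textstyle\sum_k \norm{B_k(U)}_F^2\rho^{2k}} .
\]
This is exactly the claimed bound.

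The only mildly delicate point is the justification of the Parseval identity and the interchange of sum and expectation for $L_2$ functions. This follows from the completeness of the Hermite basis together with continuity of the inner product under $L_2$ convergence. The identity $\norm{M^{\otimes k}}_\op=\norm{M}_\op^k$ for nonsymmetric $M=P_WP_U$ is quoted in the paper, so nothing else is needed.
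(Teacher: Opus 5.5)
Your proposal is correct and follows the paper's proof essentially step for step. Like the paper, you expand both functions via Parseval, insert $P_W^{\otimes k}$ and $P_U^{\otimes k}$ using \lemref{lem: hermite_proj_W}, bound each term by $\norm{A_k(W)}_F\,\rho^k\,\norm{B_k(U)}_F$ via $\norm{M^{\otimes k}}_\op=\norm{M}_\op^k$, and finish with Cauchy--Schwarz over $k$. Your justification of absolute convergence and of the $L_2$ Parseval identity is a small addition that the paper leaves implicit.
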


\begin{proof}
Let $A_k := A_k(W) := \E[f_W(\bx)H_k(\bx)] \in (\R^d)^{\otimes k}$ be the $k$-th Hermite coefficient tensor of $f_W$ and $B_k := B_k(U)$ the $k$-th Hermite coefficient tensor of $g_U$. By the Hermite expansion and the fact that $A_k, B_k$ are symmetric,
\begin{align*}
    \E[f_W(\bx)g_U(\bx)] = \sum_{k=0}^\infty \langle A_k , B_k\rangle.
\end{align*}

By Lemma \ref{lem: hermite_proj_W}, $A_k = P_W^{\otimes k} A_k$ and similarly $B_k = P_U^{\otimes k} B_k$. Therefore,
\begin{align*}
    \E[f_W(\bx)g_U(\bx)] 
    = & \sum_{k=0}^\infty \langle P_W^{\otimes k} A_k , P_U^{\otimes k} B_k\rangle 
    = \sum_{k=0}^\infty \langle A_k , P_W^{\otimes k} P_U^{\otimes k} B_k\rangle \\ 
    = & \sum_{k=0}^\infty \langle A_k , (P_W P_U)^{\otimes k} B_k\rangle.
\end{align*}

Applying Cauchy-Schwarz,
\begin{align*}
    \abs{\langle A_k , (P_W P_U)^{\otimes k} B_k\rangle} 
    \leq & \norm{A_k}_F \norm{(P_W P_U)^{\otimes k} B_k}_F 
    \leq \norm{A_k}_F \norm{(P_W P_U)^{\otimes k}}_{\op} \norm{B_k}_F \\ 
    =&_{(*)} \norm{A_k}_F \rho^k \norm{B_k}_F,
\end{align*}
where (*) uses that $\norm{M^{\otimes k}}_\op = \norm{M}_{\op}^k$ for any matrix $M$.

Summing over $k$ and applying Cauchy-Schwarz again yields
\begin{align*}
    \abs{\E[f_W(\bx)g_U(\bx)]} \leq \sum_{k=0}^\infty \norm{A_k}_F \norm{(P_W P_U)}_{\op}^k \norm{B_k}_F \leq \sqrt{\sum_{k=0}^\infty \norm{A_k}_F^2} \sqrt{\sum_{k=0}^\infty \norm{B_k}_F^2 \rho^{2k}}.
\end{align*}
Finally, using that $\sum_k\|A_k\|_F^2=\|f_W(\bx)\|_2^2$ completes the proof.
\end{proof}

\begin{lemma}[Gradient shifts Hermite coefficients]\label{lem:grad_shifts_hermite}
Let $g \in \Fcal_d$ with Hermite coefficient tensors $B_k \in (\R^d)^{\otimes k}$. Let $D_k := \E[\nabla g(\bx) \otimes H_k(\bx)] \in \R^d \otimes (\R^d)^{\otimes k}$ be the $k$-th Hermite coefficient tensor of $\nabla g(\bx)$. Then:
\[
\|D_k\|_F^2 = (k+1)\|B_{k+1}\|_F^2.
\]
\end{lemma}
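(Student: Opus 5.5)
We prove the identity by expanding each coordinate of $\nabla g$ in the Hermite basis, using Gaussian integration by parts together with the recurrence relation \eqref{eq: recurrence}. Fix $i\in[d]$ and write $(D_k)_i := \E[\partial_i g(\bx) H_k(\bx)]$, the slice of $D_k$ with first index $i$. The plan is to show that
\[
(D_k)_i = \sqrt{k+1}\,(B_{k+1})_i ,
\]
where $(B_{k+1})_i$ is the slice of $B_{k+1}$ with first index fixed to $i$. Summing the squared Frobenius norms of these slices over $i$ then gives $\norm{D_k}_F^2 = (k+1)\sum_i \norm{(B_{k+1})_i}_F^2 = (k+1)\norm{B_{k+1}}_F^2$, as required.

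The main step is Stein's lemma, i.e.\ Gaussian integration by parts. For weakly differentiable $g$ and a polynomial $q$, we have $\E[\partial_i g(\bx)\, q(\bx)] = \E[g(\bx)\,(x_i q(\bx) - \partial_i q(\bx))]$, which follows from $\partial_i \mu = -x_i\mu$. We apply this entrywise with $q$ ranging over the entries of $H_k$ and obtain $(D_k)_i = \E[g(\bx)(x_i H_k(\bx) - \partial_i H_k(\bx))]$. By \eqref{eq: recurrence}, the bracket equals $\sqrt{k+1}\,(H_{k+1}(\bx))_i$, so $(D_k)_i = \sqrt{k+1}\,\E[g(\bx)(H_{k+1}(\bx))_i] = \sqrt{k+1}\,(B_{k+1})_i$.

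The expected obstacle is justifying the integration by parts for a general $g\in\Fcal_d$, which is only weakly differentiable and square integrable, with no assumed decay at infinity. We would first prove the identity for $g\in C_c^\infty$, where ordinary integration by parts applies with no boundary terms. We then extend by density: $C_c^\infty$ is dense in the Gaussian Sobolev space $\Fcal_d$ under the norm $\norm{g}_{L_2}+\norm{\nabla g}_{L_2}$. Both sides of the identity are continuous in this norm, because $H_k$ and $x_iH_k$ are polynomials and hence lie in $L_2(\mu)$, so Cauchy--Schwarz controls each side. This yields the identity for all $g\in\Fcal_d$.

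As a sanity check, the identity is consistent with \eqref{eq: coeffs_grad}, since $D_k=\E[\nabla^{k+1}g]/\sqrt{k!}=\sqrt{k+1}\,B_{k+1}$ in the smooth case. It is also consistent with Parseval: summing over $k$ gives $\norm{\nabla g}_{L_2}^2=\sum_k k\norm{B_k}_F^2$. Note that $B_{k+1}$ is symmetric, so the choice of which index we slice on does not matter.
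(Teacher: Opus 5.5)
Your proposal is correct and follows the paper's proof: Stein's identity applied slice-by-slice, then the recurrence \eqref{eq: recurrence} to get $(D_k)_i=\sqrt{k+1}\,(B_{k+1})_i$, then summing the squared slice norms over $i$. The density argument you add to justify the integration by parts for a general $g\in\Fcal_d$ is rigor that the paper leaves implicit.
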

\begin{proof}
    We compute the $i$-th slice of this tensor, denoted $(D_k)_{i}$, which corresponds to the Hermite coefficients of the partial derivative $\frac{\partial}{\partial x_i} g(\bx)$. Applying Stein's identity (Gaussian integration by parts), we obtain:
    \[
        (D_k)_{i} = \E\left[\frac{\partial}{\partial x_i} g(\bx) \cdot H_k(\bx)\right] = \E\left[ g(\bx) \left( x_i H_k(\bx) - \frac{\partial}{\partial x_i} H_k(\bx) \right) \right].
    \]
    Using the recurrence relation \eqref{eq: recurrence}:
    \[
        (D_k)_{i} = \sqrt{k+1} \E\left[ g(\bx) (H_{k+1}(\bx))_i \right] = \sqrt{k+1} (B_{k+1})_i.
    \]
    Squaring the Frobenius norm and summing over $i$ yields the result.
\end{proof}

\begin{lemma}\label{lem: grad_bound}
    Let $W\in\R^{m\times d}$, $U\in\R^{p\times d}$, and let $f_W(\bx)=\sigma(W\bx), g_U(\bx)=\phi(U\bx)$ where $\sigma:\R^m\to\R$, $\phi:\R^p\to\R$ and $f_W, g_U \in \Fcal_d$. 
    Let $\rho := \|P_W P_U\|_{\op}$ and let $B_k(U) := \E[g_U(\bx)H_k(\bx)] \in (\R^d)^{\otimes k}$ the $k$-th Hermite coefficient tensor of $g_U$. Then
    \begin{align*}
    \Big\|\E\left[\nabla_W f_W(\bx) g_U(\bx)\right]\Big\|_F
    \leq
    & \sqrt{\sum_{i=1}^m \sum_{j=1}^d \norm{\left(\nabla\sigma(W\bx)\right)_i \left(P_W\bx\right)_j}_\fsq^2} \cdot \sqrt{\sum_{k=0}^\infty \norm{B_k(U)}_F^2 \cdot  \rho^{2k}}
     \\
    & \quad + \sqrt{\sum_{i=1}^m \norm{\left(\nabla\sigma(W\bx)\right)_i}_\fsq^2} \sqrt{ \sum_{k=0}^\infty (k+1)\norm{B_{k+1}(U)}_F^2 \rho^{2k}}.
    \end{align*}
\end{lemma}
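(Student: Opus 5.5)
The plan is to compute the gradient explicitly, split it into two terms, and bound each with the tools already developed. Since $f_W(\bx)=\sigma(W\bx)$, the chain rule gives $\nabla_W f_W(\bx) = \nabla\sigma(W\bx)\,\bx^\top$. I will decompose $\bx = P_W\bx + (I-P_W)\bx$. This gives
\[
\E\left[\nabla_W f_W(\bx) g_U(\bx)\right] = \E\left[\nabla\sigma(W\bx)(P_W\bx)^\top g_U(\bx)\right] + \E\left[\nabla\sigma(W\bx)((I-P_W)\bx)^\top g_U(\bx)\right],
\]
and I will bound the Frobenius norm of each term separately. These bounds should produce the two summands of the claimed inequality.

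\emph{First term.} Each entry $(i,j)$ of the first term is $\E[F_{ij}(\bx) g_U(\bx)]$, where $F_{ij}(\bx) := (\nabla\sigma(W\bx))_i (P_W\bx)_j$. Since $P_W\bx = W^\dagger W\bx$, each $F_{ij}$ is a function of $W\bx$ alone. \lemref{lem: corr_bound_matrix} therefore applies to each entry and gives
\[
\abs{\E[F_{ij} g_U]} \le \norm{F_{ij}}_{\fsq}\, S_0, \qquad S_0 := \sqrt{\textstyle\sum_k \norm{B_k(U)}_F^2\rho^{2k}}.
\]
Squaring and summing over $i,j$ yields exactly the first summand.

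\emph{Second term.} Here I will use Stein's identity in the direction orthogonal to $\Row(W)$. Let $Q:=I-P_W$. Because $\nabla\sigma(W\bx)$ depends only on $P_W\bx$, which is independent of $Q\bx$, integration by parts gives
\[
\E\left[(\nabla\sigma(W\bx))_i\, (Q\bx)\, g_U(\bx)\right] = \E\left[(\nabla\sigma(W\bx))_i\, Q\nabla g_U(\bx)\right].
\]
To justify this, I will condition on $P_W\bx$ and apply Stein's identity to the Gaussian $Q\bx \sim \Ncal(0,Q)$. The derivative falls only on $g_U$, since $\sigma(W\bx)$ is constant in the $Q$ directions. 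The $i$-th row is then $\E[h_i(\bx)\, Q\nabla g_U(\bx)]$ with $h_i := (\nabla\sigma(W\bx))_i$, again a function of $W\bx$. I will bound it by $\norm{\E[h_i\nabla g_U]}$; the projection $Q$ has operator norm at most $1$, so dropping it only loosens the bound. Each coordinate $\partial_l g_U$ is not itself of the form $\phi(U\bx)$, but $\nabla g_U(\bx) = U^\top\nabla\phi(U\bx)$ is a vector of functions of $U\bx$. I will therefore rerun the argument of \lemref{lem: corr_bound_matrix} on the vector-valued function, using the $D_k$ from \lemref{lem:grad_shifts_hermite}.

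The key step in that argument is to check that $D_k = (I\otimes P_U^{\otimes k})D_k$. This holds because each coordinate is a function of $U\bx$, so \lemref{lem: hermite_proj_W} applies coordinatewise. With this projection property, Cauchy--Schwarz gives
\[
\norm{\E[h_i\nabla g_U]} \le \norm{h_i}_{\fsq}\sqrt{\textstyle\sum_k \norm{D_k}_F^2\rho^{2k}} = \norm{h_i}_{\fsq}\sqrt{\textstyle\sum_k (k+1)\norm{B_{k+1}}_F^2\rho^{2k}},
\]
where the equality is \lemref{lem:grad_shifts_hermite}. Summing the squares over $i$ yields the second summand.

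The main obstacle is the Stein step, which requires $g_U$ to be weakly differentiable with $\nabla g_U\in L_2$ (guaranteed by $g_U\in\Fcal_d$) and the products to be integrable. A secondary point is that in the vector-valued version of \lemref{lem: corr_bound_matrix}, the inner products must be handled one output coordinate at a time; this is what makes the $(P_WP_U)^{\otimes k}$ contraction act only on the Hermite indices.
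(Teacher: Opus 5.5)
Your proposal is correct and follows the paper's proof essentially step by step. Both use the split $\bx = P_W\bx + (I-P_W)\bx$ and apply \lemref{lem: corr_bound_matrix} entrywise to $(\nabla\sigma(W\bx))_i(P_W\bx)_j$. For the second term, both apply Stein's identity, drop $I-P_W$ via its operator norm, and bound the correlation with $\nabla g_U$ coordinatewise together with \lemref{lem:grad_shifts_hermite}. Your justification of the Stein step is slightly cleaner than the paper's: you condition on $P_W\bx$ and integrate by parts only in the $(I-P_W)\bx$ directions, whereas the paper formally differentiates $(\nabla\sigma(W\bx))_i$ and so implicitly needs second derivatives of $\sigma$.
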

\begin{proof}
    By the chain rule,
    \[
    \E\left[\nabla_W f_W(\bx) g_U(\bx)\right]
    = \E\left[g_U(\bx)\nabla\sigma(W\bx)\bx^\top\right].
    \]
    
    We decompose $\bx$ into its projection onto $W$ and its orthogonal complement, so that:
    \begin{align}\label{eq: m_bounds}
        \E\left[\nabla_W f_W(\bx) g_U(\bx)\right]
        = & \underset{=:M_1}{\underbrace{
        \E\left[g_U(\bx)\nabla\sigma(W\bx) \bx^\top P_W\right]}}
        + \underset{=:M_2}{\underbrace{\E\left[g_U(\bx)\nabla\sigma(W\bx) \bx^\top (I-P_W)\right]}}.
    \end{align}
    To bound $M_1$, first note that
    \begin{align*}
        \norm{M_1}_F = & \sqrt{\sum_{i=1}^m \sum_{j=1}^d \left(\E\left[g_U(\bx)\left(\nabla\sigma(W\bx)\right)_i \left(P_W\bx\right)_j \right]\right)^2}.
    \end{align*}
    Applying Lemma \ref{lem: corr_bound_matrix} to the functions $\left(\nabla \sigma(W\bx) \right)_i \left(P_W\bx\right)_j$ and $g_U(\bx)$, it holds that 
    \begin{align*}
        \norm{M_1}_F \leq & \sqrt{\sum_{i=1}^m \sum_{j=1}^d \norm{\left(\nabla\sigma(W\bx)\right)_i \left(P_W\bx\right)_j}_\fsq^2} \cdot \sqrt{\sum_{k=0}^\infty \norm{B_k(U)}_F^2 \cdot  \rho^{2k}}.
    \end{align*}
    
    To bound $M_2$, first we will use Stein's lemma, which states that for any function $\psi$ and matrix $A$, $\E[\psi(\bx)\bx^\top A] = \E\left[\nabla_\bx \psi(\bx)^\top \right] A$. Applying this to $g_U(\bx)\left(\nabla\sigma(W\bx)\right)_i$ and using that $\nabla_\bx \left(\nabla\sigma(W\bx)\right)_i (I-P_W)=0$, we obtain
    \begin{align*}
        \norm{M_2}_F^2 = & \sum_{i=1}^m\norm{\E\left[g_U(\bx)\left(\nabla\sigma(W\bx)\right)_i \bx^\top (I-P_W)\right]}^2 \nonumber\\
        = & \sum_{i=1}^m\norm{\E\left[\left(\nabla\sigma(W\bx)\right)_i \nabla g_U(\bx)^\top\right](I-P_W) }^2 \nonumber\\ 
        \leq & \sum_{i=1}^m \sum_{j=1}^d \left(\E\left[\left(\nabla\sigma(W\bx)\right)_i \left(\nabla g_U(\bx)\right)_j\right]\right)^2 \norm{(I-P_W)}_\op^2.
    \end{align*}
    Define the (vector-valued) Hermite coefficient tensors for $\nabla \phi(U\bx)$ by
    \begin{align*}
        D_k(U) := \E[(\nabla g_U(\bx)) \otimes H_k(\bx)] \in \R^d \otimes (\R^d)^{\otimes k}.
    \end{align*}
    Note that by \lemref{lem:grad_shifts_hermite}, $\norm{D_k(U)}_F = \sqrt{k+1}\norm{B_{k+1}(U)}_F$. 
    Applying Lemma \ref{lem: corr_bound_matrix} for the correlation between $\left(\nabla\sigma(W\bx)\right)_i$ and $\left(\nabla g_U(\bx)\right)_j$ yields
    \begin{align*}
        \norm{M_2} 
        \leq & \norm{\left(I-P_W\right)}_\op \sqrt{\sum_{i=1}^m \norm{\left(\nabla\sigma(W\bx)\right)_i}_\fsq^2} \sqrt{\sum_{k=0}^\infty \norm{D_k(U)}_F^2 \rho^{2k}} \\
        \leq & \sqrt{\sum_{i=1}^m \norm{\left(\nabla\sigma(W\bx)\right)_i}_\fsq^2} \sqrt{\sum_{k=0}^\infty (k+1) \norm{B_{k+1}(U)}_F^2 \rho^{2k}}.
    \end{align*}
    Plugging our bounds on $M_1$ and $M_2$ back into \eqref{eq: m_bounds} completes the proof.
\end{proof}

So far, we have not used much information on $f_W$. We now show that if $\sigma$ has bounded gradients (analogously to Assumption \ref{ass: bounded}), then the bounds from the previous lemma simplify as follows:
\begin{lemma}\label{lem: grad_bound2}
    Under the setting and notation of \lemref{lem: grad_bound}, assume further that there exists some $G_1 > 0$ such that for a.s. any $\bx$, $\norm{\nabla \sigma(W\bx)} \leq G_1$.
    Then
    \begin{align*}
        \Big\|\E\left[\nabla_W f_W(\bx) g_U(\bx)\right]\Big\|_F
        \leq & G_1 \sqrt{m\sum_{k=0}^\infty \norm{B_k(U)}_F^2 \rho^{2k}} +  G_1 \sqrt{\sum_{k=0}^\infty (k+1)\norm{B_{k+1}(U)}_F^2 \rho^{2k}}.
    \end{align*}
\end{lemma}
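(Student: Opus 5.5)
This is a direct specialization of \lemref{lem: grad_bound}: I will start from its two-term bound and simplify each of the two $L_2$ prefactors using the pointwise gradient bound $\norm{\nabla\sigma(W\bx)}\leq G_1$. The Hermite sums involving $B_k(U)$ and $\rho$ are carried over unchanged, so everything reduces to bounding two expectations.

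The second prefactor is immediate:
\[
\sum_{i=1}^m \norm{(\nabla\sigma(W\bx))_i}_\fsq^2 = \E\left[\norm{\nabla\sigma(W\bx)}^2\right]\leq G_1^2 .
\]
This gives the term $G_1\sqrt{\sum_k (k+1)\norm{B_{k+1}(U)}_F^2\rho^{2k}}$.

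For the first prefactor, I sum over $j$ first:
\[
\sum_{i,j}\norm{(\nabla\sigma(W\bx))_i(P_W\bx)_j}_\fsq^2=\E\left[\norm{\nabla\sigma(W\bx)}^2\norm{P_W\bx}^2\right]\leq G_1^2\,\E\left[\norm{P_W\bx}^2\right].
\]
Since $\bx\sim\Ncal(0,I_d)$, we have $\E\norm{P_W\bx}^2=\operatorname{tr}(P_W)=\rank(W)\leq m$. The first prefactor is therefore at most $G_1\sqrt{m}$, which matches the claimed $G_1\sqrt{m\sum_k\norm{B_k(U)}_F^2\rho^{2k}}$.

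There is no real obstacle here. The one point needing care is the trace identity $\E\norm{P\bx}^2=\operatorname{tr}(P)$ for an orthogonal projector under standard Gaussian inputs, together with $\rank(W)\leq m$. Pulling out the almost-sure bound $G_1^2$ is legitimate because it holds pointwise. Substituting both bounds into \lemref{lem: grad_bound} completes the proof.
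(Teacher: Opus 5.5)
Your proposal is correct and matches the paper's proof essentially step for step. You start from \lemref{lem: grad_bound}, use the pointwise bound to get the first prefactor at most $G_1\sqrt{\E\norm{P_W\bx}^2} = G_1\sqrt{\rank(W)}\leq G_1\sqrt{m}$ and the second at most $G_1$, and then substitute back, exactly as the paper does.
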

\begin{proof}
    Starting from \lemref{lem: grad_bound}, it holds that
    \begin{align}\label{eq: grad_raw}
        \Big\|\E\left[\nabla_W f_W(\bx) g_U(\bx)\right]\Big\|_F
        \leq
        & \sqrt{\sum_{i=1}^m \sum_{j=1}^d \norm{\left(\nabla\sigma(W\bx)\right)_i \left(P_W\bx\right)_j}_\fsq^2} \cdot \sqrt{\sum_{k=0}^\infty \norm{B_k(U)}_F^2 \cdot  \rho^{2k}}
         \nonumber \\
        & \quad + \sqrt{\sum_{i=1}^m \norm{\left(\nabla\sigma(W\bx)\right)_i}_\fsq^2} \sqrt{ \sum_{k=0}^\infty (k+1)\norm{B_{k+1}(U)}_F^2 \rho^{2k}}.
    \end{align}
    For the first term, the boundedness assumption of $\nabla \sigma$ implies that
    \begin{align*}
        \sqrt{\sum_{i=1}^m \sum_{j=1}^d \norm{\left(\nabla\sigma(W\bx)\right)_i \left(P_W\bx\right)_j}_\fsq^2} 
        = & \sqrt{\int_{\bx} \left(\sum_{i=1}^m \nabla\sigma(W\bx)_i^2\right) \left(\sum_{j=1}^d (P_W \bx)_j^2\right) ~ \mu(\bx)d\bx} \\
        \leq & G_1 \sqrt{\E\left[\norm{P_W\bx}^2\right]} = G_1 \sqrt{\text{rank}(W)} \leq G_1 \sqrt{m}.
    \end{align*}
    For the second summand in \eqref{eq: grad_raw}, the boundedness assumption implies that
    \begin{align*}
        \sqrt{\sum_{i=1}^m \norm{\left(\nabla\sigma(W\bx)\right)_i}_\fsq^2} \leq G_1.
    \end{align*}
    Substituting these bounds back into \eqref{eq: grad_raw} yields the desired bound.
\end{proof}

The following lower bound on the variance will be useful for lower-bounding the gradient condition number.
\begin{lemma}\label{lem:var_bound}
    Let $\sigma, \phi \in \Fcal_1$, $\bw, \bu \in \R^d$, let $f(\bx):= \sigma(\bw^\top \bx)$, $g(\bx):= \phi(\bu^\top \bx)$ and assume that $\norm{f}_{L_\infty} < G_1, \norm{g}_{L_\infty} < G_2$ for $G_1,G_2>0$. Then for $\rho:= \norm{P_{\bw^\top} P_{\bu^\top}}_\op$,
    \begin{align*}
        \abs{\E\left[f(\bx)^2 g(\bx)^2\right]} \geq \norm{f}_{\fsq}^2\norm{g}_\fsq^2 - 4G_1^2 G_2^2 \rho.
    \end{align*}
\end{lemma}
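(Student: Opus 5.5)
The plan is to expand both $f^2$ and $g^2$ in Hermite polynomials and to show that their correlation differs from the product of the means by at most $4G_1^2G_2^2\rho$. Write $F:=f^2$ and $Q:=g^2$. These are again ridge functions, namely $F(\bx)=\sigma^2(\bw^\top\bx)$ and $Q(\bx)=\phi^2(\bu^\top\bx)$, and they satisfy $\norm{F}_{L_\infty}\le G_1^2$ and $\norm{Q}_{L_\infty}\le G_2^2$. Their zeroth Hermite coefficients are $\E[F]=\norm{f}_\fsq^2$ and $\E[Q]=\norm{g}_\fsq^2$. Since $F$ and $Q$ are bounded, they lie in $L_2(\mu)$, and the orthonormal expansion \eqref{eq: orth} gives
\begin{align*}
\E[FQ]=\E[F]\E[Q]+\sum_{k\ge1}\langle A_k,B_k\rangle,
\end{align*}
where $A_k$ and $B_k$ denote the Hermite coefficient tensors of $F$ and $Q$.

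The key step is to control the tail sum $\sum_{k\ge1}\langle A_k,B_k\rangle$, and for this I would repeat the argument of \lemref{lem: corr_bound_matrix}. That lemma only needs $L_2$ membership, not differentiability, because \lemref{lem: hermite_proj_W} uses only the Gaussian decomposition and the generating function. Applying it to the centred functions $F-\E F$ and $Q-\E Q$, whose $k=0$ coefficients vanish, gives
\begin{align*}
\Bigl|\sum_{k\ge1}\langle A_k,B_k\rangle\Bigr|\le\sum_{k\ge1}\rho^k\norm{A_k}_F\norm{B_k}_F\le\rho\,\norm{F-\E F}_\fsq\norm{Q-\E Q}_\fsq ,
\end{align*}
where the second inequality uses $\rho\le1$, so that $\rho^k\le\rho$ for every $k\ge1$, followed by Cauchy--Schwarz. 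In the rank-one case here, $P_{\bw^\top}P_{\bu^\top}$ has operator norm $|\langle\bw,\bu\rangle|/(\norm{\bw}\norm{\bu})$, so this $\rho$ is exactly the one in the statement. If either vector is zero then $\rho=0$ and the claim is trivial.

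It remains to bound the centred norms. Since $0\le F\le G_1^2$, we have $\norm{F-\E F}_\fsq\le\norm{F}_\fsq\le G_1^2$, and in the same way $\norm{Q-\E Q}_\fsq\le G_2^2$. This yields
\begin{align*}
\E[f^2g^2]\ge\norm{f}_\fsq^2\norm{g}_\fsq^2-G_1^2G_2^2\rho ,
\end{align*}
which is stronger than the claimed bound with constant $4$, so the stated inequality follows, absolute value included, since the left side is nonnegative. The constant $4$ probably reflects a cruder route in the original proof, for instance expanding $f$ and $g$ themselves and bounding cross terms, but the centred argument above suffices.

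The only real obstacle is justifying \lemref{lem: corr_bound_matrix} for $F$ and $Q$, because $\sigma^2$ and $\phi^2$ need not be in $\Fcal_1$. I would point out that its proof, together with that of \lemref{lem: hermite_proj_W}, uses only square integrability and the ridge structure. If needed, one can instead note that $\sigma,\phi\in\Fcal_1$ together with boundedness imply that $\sigma^2,\phi^2$ are weakly differentiable with square-integrable derivative $2\sigma\sigma'$, so they lie in $\Fcal_1$ anyway.
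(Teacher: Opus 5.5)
Your proof is correct and follows the paper's argument. Like the paper, you center $f^2$ and $g^2$, apply \lemref{lem: corr_bound_matrix} to the centered functions, use the vanishing zeroth coefficient and $\rho\le 1$ to extract a single factor of $\rho$, and finish with the boundedness of $f$ and $g$. The paper's constant $4$ comes not from a different route but from the cruder triangle-inequality bound $\norm{f^2-\E[f^2]}_{\fsq}\le 2G_1^2$ (and likewise for $g$); your variance bound $\norm{f^2-\E[f^2]}_{\fsq}\le G_1^2$ sharpens the constant to $1$.
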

\begin{proof}
    First, note that
    \begin{align*}
        \E\left[f(\bx)^2 g(\bx)^2\right]  
        = & \E\left[f(\bx)^2\right] \E\left[g(\bx)^2\right] + \left(\E\left[f(\bx)^2g(\bx)^2\right] -  \E\left[f(\bx)^2\right] \E\left[g(\bx)^2\right]\right) 
        \\ 
        = & \norm{f}_{\fsq}^2\norm{g}_\fsq^2 + \E\left[\left(f(\bx)^2 - \E[f(\bx)^2]\right) \left(g(\bx)^2 - \E[g(\bx)^2]\right) \right]
    \end{align*}
    Now consider the functions $\hat f(\bx):=f(\bx)^2 - \E[f(\bx)^2]$ and $\hat g(\bx):=g(\bx)^2 - \E[g(\bx)^2]$, then
    \begin{align*}
        \abs{\E\left[f(\bx)^2 g(\bx)^2\right]} \geq \norm{f}_{\fsq}^2\norm{g}_\fsq^2 - \abs{\E[\hat f(\bx)\hat g(\bx)]}.
    \end{align*}
    Let $b_k$ denote the Hermite coefficients of $\hat g$. \lemref{lem: corr_bound_matrix} implies that 
    \begin{align*}
        \abs{\E\left[f(\bx)^2 g(\bx)^2\right]} 
        \geq & \norm{f}_{\fsq}^2\norm{g}_\fsq^2 - \norm{\hat f}_\fsq \sqrt{\sum_{k=0}^\infty b_k^2 \rho^{2k}} \\
        \geq & \norm{f}_{\fsq}^2\norm{g}_\fsq^2 - \norm{\hat f}_\fsq \sqrt{\sum_{k=1}^\infty b_k^2 } \cdot \rho \\
        = & \norm{f}_{\fsq}^2\norm{g}_\fsq^2 - \norm{\hat f}_\fsq \norm{\hat g}_\fsq \rho,
    \end{align*}
    where we used that $b_0=0$ (since $\E[\hat g(\bx)]=0$), that $\rho \in [0,1]$ and that $\norm{\hat g}_\fsq^2 = \sum_{k=1}^\infty b_k^2$. Using that $f(\bx)^2 \leq G_1^2$ and $g(\bx)^2\leq G_2^2$ completes the proof.
\end{proof}

\section{Applications: Proofs for Section \ref{sec:applications}}
\subsection{Remark on Assumption \ref{ass: non_lin}} \label{app: non_lin}
As discussed in the paper, many natural non-linearities satisfy Assumption \ref{ass: non_lin}. The only non trivial property in the assumption is that $\E_{x\sim \Ncal(0, s^2)}[\sigma'(x)^2] \geq G_1^2/\kappa$ for some $\kappa > 0$ and any $s>0$ (where $G_1=\norm{\sigma'}_{L_\infty}$. Let us demonstrate this for a few different choices of $\sigma$:
\begin{itemize}
    \item \emph{ReLU / Leaky ReLU ($\sigma(x) = \max(x, \alpha x)$ for $\alpha \in [0, 1)$).} Then $G_1=1$ and $\sigma'(x)$ is $1$ for $x> 0$ and $\alpha$ for $x\leq 0$. So $\E_{x\sim \Ncal(0, s^2)}[\sigma'(x)^2] = \frac{1 + \alpha^2}{2}$. 

    \item \emph{Softplus ($\sigma(x) = \ln(1 + e^x)$).} The derivative is the sigmoid function $\sigma'(x) = \frac{1}{1 + e^{-x}}$. Here, $\sigma'(x) \in (0, 1)$ and specifically $\sigma'(x) \geq 1/2$ for all $x \geq 0$. Thus, $\E_{x\sim \Ncal(0, s^2)}[\sigma'(x)^2] \geq \Pr(x \geq 0) \cdot (1/2)^2 = 1/8$, satisfying the assumption with $\kappa = 8$.

    \item \emph{GeLU ($\sigma(x) = x \Phi(x)$ where $\Phi$ is the standard Gaussian CDF).} The derivative is $\sigma'(x) = \Phi(x) + x \phi(x)$, where $\phi$ is the standard Gaussian PDF. It holds that $\sigma'(x) > 1/2$ for any $x\geq 0$ and $\abs{\sigma'(x)}\leq 3/2$. Similar to the sigmoid case, one can take $\kappa = 12$.

    \item \emph{$\sin(x)$. } The derivative is $\cos(x)$ and by \cite{song2021cryptographic}[Lemma I.10], 
    \begin{align*}
        \E_{x\sim \Ncal(0, s^2)}[\cos(x)^2] = \frac{1}{2}\left(1 + \exp\left(-2s^2 \right)\right) \geq \frac{1}{2},
    \end{align*}
\end{itemize}

\subsection{Additional Lemmas for Single Index Models}
We provide a ``bootstrapping'' lemma that ensures that $\kappaT$ are indeed well bounded. We will use the fact that the gradient condition is small when the predictor is relatively uncorrelated with the target function. We will prove this in the case of $p=m=1$.

\begin{lemma}\label{lem: bootstrap}
    Let $m=p=1$, $\ell$ be the correlation loss and $f_{\theta_t}(\bx)=\sigma(\langle \bw_{t} \,, \,\bx \rangle)$ for $\sigma$ (sub-)differentiable. Let $f^\star(\bx):= \phi(\bu^\top \bx)$ for $\bu \in \R^d$, where $\phi' \in L_2(\mu)$. Under Assumptions \ref{ass: inputs_all}-\ref{ass: bounded}, let $\delta > 0$, $T\in \N$, and let $\psi:[0, 1]\to [0,\infty)$ be an increasing function  such that $\norm{\nabla_{W} \Lcal(\theta)}_F \leq \psi\left(\norm{P_W P_U}_\op\right)$ for all $\theta$.Fix $R \in (0, 1]$ and for any $r\geq 0$ let 
    \[
    \bar \kappa(r) := \frac{G^2}{\inf_{\bw \,:\, \norm{P_{\bw^\top} P_{\bu^\top}}_\op \leq r}
    ~\E_\bx\left[\sigma'(\bw^\top \bx)^2 f^\star(\bx)^2\right]} ~.
    \]
    There exist constants $C, c, c' > 0$ (that may depend on $K_1, K_2, K_3, \alpha_2, G$), such that if $d \geq c  \frac{\bar \kappa(R)^2}{R^2}\log\left(\frac{Td}{\delta}\right)^2$, $\eta \leq \frac{c'}{\bar \kappa(R)^2\sqrt{d \log\left(Td/\delta\right)}}$ and $T \leq \frac{1}{\psi\left(C \sqrt{\frac{\bar \kappa(R)^{} \log\left(Td/\delta\right)}{d}} \right)^2}~$,
    it holds with probability at least $1-\delta$ that for all $t\in[T]$
    \[
        \kappaT \leq \bar\kappa(R)~.
    \]
\end{lemma}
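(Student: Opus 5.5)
We prove the lemma by a bootstrapping (stopping-time) argument: run Theorem~\ref{thm: main} with $\bar\kappa := \bar\kappa(R)$, and use the alignment bound it produces to certify, step by step, that the conditioning event $\kappaT\le\bar\kappa(R)$ actually holds.

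\emph{Reduction to an alignment condition.} For the correlation loss with $m=p=1$, the gradient with respect to $\bw^\top\bx$ is
\[
\nabla_{\bw\bx}\ell(\theta;\bx) = -\sigma'(\bw^\top\bx)\, f^\star(\bx),
\]
so
\[
\E_t\Bigl[\bigl(\nabla_{\bw_{t-1}\bx_t}\ell(\theta_{t-1};\bx_t)\bigr)^2\Bigr] = \E_\bx\bigl[\sigma'(\bw_{t-1}^\top\bx)^2 f^\star(\bx)^2\bigr].
\]
By the definition of $\bar\kappa(r)$, whenever $\rho_{t-1}:=\norm{P_{\bw_{t-1}^\top}P_{\bu^\top}}_\op\le R$, this conditional second moment is at least $G^2/\bar\kappa(R)$. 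Hence $\kappaT\le\bar\kappa(R)$ follows once $\rho_{t-1}\le R$ for all $t\le T$. The task therefore reduces to showing the alignment stays below $R$.

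\emph{Stopped process.} Conditioning on $\kappaT\le\bar\kappa$ is circular, so we avoid it. Define the stopping time $\tau:=\min\{t:\rho_t>R\}$, and consider the modified process whose per-step conditional variance lower bound holds up to time $\tau$. Concretely, we revisit the proof of Proposition~\ref{prop: snr} and its ingredient Lemma~\ref{lem: lower_norm}. The hypothesis $\min_j\E_j[a_j^2]\ge\beta^2/\kappa$ enters only through the martingale estimate $\sum_{j\le t}a_j^2\ge\sum_{j\le t}\E_j[a_j^2]-(\text{deviation})$ in Lemma~\ref{lem: norm_lower}. That estimate is a statement about the process itself, so it remains valid for all $t\le\tau\wedge T$: up to that time, each summand $\E_j[a_j^2]$ is indeed at least $\beta^2/\bar\kappa(R)$. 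Similarly, the $\mathcal F_{j-1}$-measurable indicator $\mathbf 1\{j\le\tau\}$ can be inserted into the martingale sums without changing the sub-Gaussian parameters. Consequently, on an event of probability at least $1-\delta$ (no conditioning needed), the conclusion of Theorem~\ref{thm: main} holds for all $t\le \tau\wedge T$:
\[
\rho_t\le C\sqrt{\bar\kappa(R)\log(Td/\delta)/d}.
\]

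\emph{Closing the induction.} By the assumption $d\ge c\,\bar\kappa(R)^2R^{-2}\log(Td/\delta)^2$, and using $\bar\kappa(R)\ge1$ (which gives $\sqrt{\bar\kappa(R)}\le\bar\kappa(R)$), the bound above is at most $R$ once $c$ is chosen large relative to $C$. If $\tau\le T$, then $\rho_\tau\le R$, contradicting the definition of $\tau$. Hence $\tau>T$ on this event, so $\rho_{t-1}\le R$ for every $t\le T$, and the reduction in the first step gives $\kappaT\le\bar\kappa(R)$. The hypotheses on $\eta$ and on $T$ match those of Theorem~\ref{thm: main} with $m=p=1$.

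The main obstacle is the stopped-process step. We must check that every high-probability estimate in Proposition~\ref{prop: snr} survives replacing the global conditioning by a stopping time, in particular the lower bound on $\norm{W_t^\top\bv}$ and its early-iteration case. Our approach is to define auxiliary variables $\tilde a_j$ equal to $a_j$ for $j\le\tau$ and to an independent bounded variable with second moment $G^2$ afterwards. This auxiliary process satisfies the hypotheses unconditionally and coincides with SGD up to time $\tau$.
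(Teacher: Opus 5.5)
Your proposal is correct and is essentially the paper's bootstrapping argument: you reduce $\kappa_T\le\bar\kappa(R)$ to $\rho_{t-1}\le R$ via the definition of $\bar\kappa(r)$, then use Theorem~\ref{thm: main} with $\bar\kappa=\bar\kappa(R)$ and the lower bound on $d$ to propagate $\rho_t\le R$. The differences are bookkeeping only: the paper applies Theorem~\ref{thm: main} at every horizon $\tau\in[T]$ with failure probability $\delta/2T$, takes a union bound and inducts, getting $\rho_0\le R$ separately from Lemma~\ref{lem: w0}, whereas your single stopping-time event makes explicit the pathwise reading of ``conditioned on $\kappa_T\le\bar\kappa$'' that the paper uses implicitly (just note that the case $\tau=0$ needs the bound at $t=0$, which comes from the base case of Theorem~\ref{thm: main} or Lemma~\ref{lem: w0}).
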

\begin{proof}
    For all $t$ let $\rho_t:=\norm{P_{\bw^\top_t} P_{\bu^\top}}_\op$. 
    First, by \lemref{lem: w0}, for some absolute constant $c_1\geq 0$ it holds with probability at least $1-\delta/2$ that
    \begin{align}\label{eq: rho_0}
        \rho_0 \leq \frac{c_1K_3 \sqrt{\frac{\log\left(\frac{8}{\delta}\right)}{d}}}{1 - c_1K_3^2 \sqrt{\frac{\log\left(\frac{8}{\delta}\right)}{d}}} \leq R,
    \end{align}
    where the last inequality used the lower bound on $d$ from the lemma statement (up to a choice of constants). 
    Since $\ell$ is the correlation loss and $m=1$, by definition (and using that $\bx_t$ is independent of the prior history), for any $\tau \in [T]$, 
    \[
    \kappa_\tau = \frac{G^2}{\min_{t\leq\tau}\E_\bx\left[\sigma'(\bw^\top_{t-1} \bx)^2 f^\star(\bx)^2\right]},
    \]
    In particular, by the definition of $\bar \kappa$, under the above event, it holds that $\kappa_1 \leq \bar \kappa(R)$. Now we apply \thmref{thm: main} to every $\tau \in [T]$ (meaning that the value of $T$ in \thmref{thm: main} is taken to be $\tau$), with failure probability $\delta / 2T$ and with $\bar\kappa$ taken to be $\bar \kappa(R)$. Let $\Ecal$ be the event that \eqref{eq: rho_0} holds as well as all instantiations of \thmref{thm: main}. $\Ecal$ holds with probability at least $1-\delta$ by the union bound, which we will assume is the case in the remainder of the proof. Note that for every $\tau$ such that $\kappa_{\tau} \leq \bar \kappa(R)$, \thmref{thm: main} implies that for some $C>0$ (that may depend on $K_1,K_2,K_3,\alpha_2,G$),
    \[
    \rho_{\tau + 1} \leq C\sqrt{\frac{\bar \kappa(R)\log(2T^2d/\delta)}{d}} \leq R,
    \]
    where the last inequality uses the lower bound on $d$ (and possibly an adjustment of $c$). This in turn implies $\kappa_{t+1} \leq \bar \kappa(R)$ and the proof follows from induction. 
\end{proof}

We now provide an upper bound for the condition number when learning with two-layer periodic functions. 
\begin{lemma}\label{lem: kappa_bound}
    Let $W_0 \sim \Ncal\left(0, \frac{1}{d}I_{md}\right)$, $\bx\sim \Ncal(0, I_d)$, $m=p=1$, $\ell$ be the correlation loss and $f_{\theta_t}(\bx)=\sigma(\langle \bw_{t} \,, \,\bx \rangle)$ for $\sigma$ satisfying Assumption \ref{ass: non_lin}. 
    Let $f^\star(\bx):= \phi(\bu^\top \bx)$ for $\bu \in \R^d$, where $\norm{f^\star}_{L_\infty} \leq G_2$, $\phi' \in L_2(\mu)$. 
    Under Assumptions \ref{ass: inputs_all}-\ref{ass: init}, let $\delta > 0$, $T\in \N$, and let $\psi:[0, 1]\to [0,\infty)$ be an increasing function  such that $\norm{\nabla_{W} \Lcal(\theta)}_F \leq \psi\left(\norm{P_W P_U}_\op\right)$ for all $\theta$.
    There exist constants $C, c, c' > 0$ (that may depend on $G_1, G_2, \kappa, \norm{f^\star}_\fsq)$, such that if $d \geq c  \log\left(\frac{Td}{\delta}\right)^2$, $\eta \leq \frac{c'}{\sqrt{d \log\left(Td/\delta\right)}}$ and $T \leq \frac{1}{\psi\left(C \sqrt{\frac{ \log\left(Td/\delta\right)}{d}} \right)^2}~$,
    it holds with probability at least $1-\delta$ that
    \[
    \kappaT \leq \frac{2K_\sigma G_2^2}{\norm{f^\star}_\fsq^2},
    \]   
    where $\kappa$ is the value in Assumption \ref{ass: non_lin}.
\end{lemma}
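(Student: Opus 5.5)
The plan is to reduce the statement to \lemref{lem: bootstrap} and then lower bound the denominator of $\bar\kappa(R)$ for a suitable constant $R$ using \lemref{lem:var_bound} together with Assumption \ref{ass: non_lin}.

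For the correlation loss $\ell(\theta;\bx) = -f_\theta(\bx) f^\star(\bx)$ with $m=1$, we have $\nabla_{\bw^\top\bx}\ell = -\sigma'(\bw^\top\bx) f^\star(\bx)$. Hence Assumption \ref{ass: bounded} holds with $G = G_1 G_2$. The quantity inside the infimum of $\bar\kappa(r)$ is exactly $\E_\bx[\sigma'(\bw^\top\bx)^2 \phi(\bu^\top\bx)^2]$.

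To bound this quantity from below, apply \lemref{lem:var_bound} with $f := \sigma'(\bw^\top\cdot)$ and $g := \phi(\bu^\top\cdot)$. These satisfy $\norm{f}_{L_\infty}\le G_1$ and $\norm{g}_{L_\infty}\le G_2$. A minor technicality is that the lemma assumes $f,g\in\Fcal$. The proof of \lemref{lem:var_bound} only uses $L_2$ Hermite expansions through \lemref{lem: corr_bound_matrix}, whose proof needs only square integrability, so this can be relaxed (or handled by a smoothing/approximation argument). The lemma gives
\[
\E[\sigma'(\bw^\top\bx)^2 f^\star(\bx)^2] \;\geq\; \norm{\sigma'(\bw^\top\bx)}_{\fsq}^2\norm{f^\star}_{\fsq}^2 - 4G_1^2G_2^2\rho .
\]
Since $\bw^\top\bx\sim\Ncal(0,\norm{\bw}^2)$, Assumption \ref{ass: non_lin} gives $\norm{\sigma'(\bw^\top\bx)}_{\fsq}^2\ge G_1^2/K_\sigma$, uniformly in the norm of $\bw$. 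This uniformity is the key point, because standard SGD does not control $\norm{\bw_t}$.

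Now choose $R := \min\bigl(1, \norm{f^\star}_\fsq^2/(8K_\sigma G_2^2)\bigr)$. Then for every $\bw$ with $\rho\le R$ the lower bound is at least $G_1^2\norm{f^\star}_\fsq^2/(2K_\sigma)$. Therefore
\[
\bar\kappa(R) \;\le\; \frac{G_1^2G_2^2 \cdot 2K_\sigma}{G_1^2\norm{f^\star}_\fsq^2} \;=\; \frac{2K_\sigma G_2^2}{\norm{f^\star}_\fsq^2}.
\]
This is a constant depending only on $G_2$, $K_\sigma$ and $\norm{f^\star}_\fsq$.

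Finally, invoke \lemref{lem: bootstrap} with this $R$. Its hypotheses on $d$, $\eta$ and $T$ become $d\ge c\log(Td/\delta)^2$, $\eta\le c'/\sqrt{d\log(Td/\delta)}$ and $T\le \psi\bigl(C\sqrt{\log(Td/\delta)/d}\bigr)^{-2}$, after absorbing $\bar\kappa(R)$ and $1/R$ into the constants. The conclusion is $\kappaT\le\bar\kappa(R)$ with probability $1-\delta$. Gaussian inputs and Gaussian initialization satisfy Assumptions \ref{ass: inputs_all}--\ref{ass: init} with absolute constants.

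The main obstacle is the lower bound step: it must be uniform over all $\bw$ regardless of norm, and it requires checking that the regularity needed by \lemref{lem:var_bound} holds for possibly non-smooth $\sigma'$. Everything else is bookkeeping of constants.
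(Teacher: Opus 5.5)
Your proposal is correct and follows essentially the same route as the paper: verify Assumption~\ref{ass: bounded} with $G=G_1G_2$, lower-bound $\E[\sigma'(\bw^\top\bx)^2 f^\star(\bx)^2]$ via \lemref{lem:var_bound} and Assumption~\ref{ass: non_lin}, and take $R=\norm{f^\star}_{\fsq}^2/(8K_\sigma G_2^2)$ to get $\bar\kappa(R)\le 2K_\sigma G_2^2/\norm{f^\star}_{\fsq}^2$. You then conclude with \lemref{lem: bootstrap}, as the paper does; your extra $\min$ with $1$ is harmless, since $R\le 1/8$ already, and your remark that \lemref{lem:var_bound} is applied to $\sigma'$, which need not lie in $\Fcal_1$, while only square integrability is actually used, addresses a point the paper passes over silently.
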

\begin{proof}
    First, note that the inputs and initialization satisfy Assumptions \ref{ass: subgauss_inputs}, \ref{ass: norm_conc}, \ref{ass: init} with constants $K_1, K_2, K_3, \alpha_2 = \Theta(1)$. Moreover, by the choice of the correlation loss, by Assumption \ref{ass: non_lin} and by the assumption that $\norm{f^\star}_{L_\infty} \leq G_2$, it follows that Assumption \ref{ass: bounded} is satisfied with $G=G_1G_2$. For any $r\geq 0$ let 
    \[
    \bar \kappa(r) := \frac{G_1^2G_2^2}{\inf_{\{\theta ~\mid~ \norm{P_{\bw^\top} P_{\bu^\top}}_\op \leq r\}}
    \E_\bx\left[\sigma'(\bw^\top \bx)^2 f^\star(\bx)^2\right]} ~.
    \]    
    By applying \lemref{lem:var_bound} (to the functions $\sigma'$ and $\phi$) whose conditions are satisfied by the conditions of this lemma, it holds for any $\bw$ that 
    \begin{align*}
        \E_\bx\left[\sigma'(\bw^\top \bx)^2 f^\star(\bx)^2\right] 
        \geq &\norm{\sigma'(\bw^\top \bx)}_\fsq^2 \norm{f^\star}_\fsq^2 - 4G_1^2 G_2^2 \norm{P_{\bw^\top} P_{\bu^\top}}_\op \\ 
        \geq & \frac{G_1^2}{K_\sigma} \norm{f^\star}_\fsq^2 - 4G_1^2 G_2^2 \norm{P_{\bw^\top} P_{\bu^\top}}_\op,
    \end{align*}
    where the last inequality follows from Assumption \ref{ass: non_lin}. Using this and the definition of $\bar\kappa$, it follows that for $R:= \frac{\norm{f^\star}_\fsq^2}{8K_\sigma G_2^2}$,
    \begin{align*}
        \bar \kappa(R) \leq \frac{G_1^2G_2^2}{G_1^2G_2^2 \left(\frac{\norm{f^\star}_\fsq^2}{K_\sigma G_2^2} - 4R\right)} = \frac{1}{\frac{\norm{f^\star}_\fsq^2}{2K_\sigma G_2^2}} = \frac{2K_\sigma G_2^2}{\norm{f^\star}_\fsq^2}.
    \end{align*}
    Applying \lemref{lem: bootstrap} completes the proof.
\end{proof}

\subsection{Periodic Functions: Proof of \thmref{thm: periodic}}\label{app: periodic}
We now prove our bound on the correlation loss and the gradient with sinusoidal target functions.
\begin{lemma}\label{lem: periodic}
    Let $\sigma \in \Fcal_1$ be differentiable and satisfy $\abs{\sigma'(x)}\leq G_1$ for $G_1>0$ and a.s. any $x$. Let $f_\bw(\bx):= \sigma(\bw^\top \bx)$, then for any $\bw\in \R^d$ and $\rho:= \norm{P_{\bw^\top} P_{\bu^\top}}_\op$,
    \begin{align*}
        \abs{\E\left[f_\bw(\bx)\sin\left(\bu^\top \bx\right)\right]} 
        \leq & \norm{f_\bw(\bx)}_{\fsq}\exp\left(-\frac{\norm{\bu}^2}{2} \left(1 - \rho^2 \right)\right). 
    \end{align*}
    and 
    \begin{align*}
        \norm{\E\left[\nabla_{\bw}f_\bw(\bx) \sin\left(\bu^\top \bx\right)\right]} 
        \leq & G_1\left(1 + \norm{\bu}\right)\exp\left(-\frac{\norm{\bu}^2}{2} \left(1 - \rho^{2} \right)\right). 
    \end{align*}
\end{lemma}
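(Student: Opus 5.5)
The plan is to compute the Hermite coefficients of the target $g(\bx):=\sin(\bu^\top\bx)$ explicitly, and then feed them into \lemref{lem: corr_bound_matrix} and \lemref{lem: grad_bound2} with $m=p=1$.

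\textbf{Step 1: Hermite coefficients of $g$.} Write $\bu=\norm{\bu}\hat\bu$. By \eqref{eq: dim_match} and \lemref{lem: si_hermite} (with $\sigma$ replaced by $\sin$), the coefficient tensors are
\[
B_k=b_k\,\hat\bu^{\otimes k},\qquad b_k=\E_{z\sim\Ncal(0,1)}[\sin(\norm{\bu}z)H_k(z)].
\]
To compute $b_k$, take the generating function \eqref{eq: generating} in one dimension with the imaginary argument $v=i\norm{\bu}$:
\[
e^{i\norm{\bu}z+\norm{\bu}^2/2}=\sum_j \frac{(i\norm{\bu})^j}{\sqrt{j!}}H_j(z).
\]
This gives $\E[e^{i\norm{\bu}z}H_k(z)]=e^{-\norm{\bu}^2/2}(i\norm{\bu})^k/\sqrt{k!}$. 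Taking imaginary parts, $b_k^2=e^{-\norm{\bu}^2}\norm{\bu}^{2k}/k!$ for odd $k$ and $b_k=0$ otherwise. Justifying the complex argument needs a short remark, or alternatively a direct check via Stein's identity, $b_k=\norm{\bu}\,\E[\cos(\norm{\bu}z)H_{k-1}(z)]/\sqrt{k}$ recursively. I expect this step to be the main (mild) obstacle.

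\textbf{Step 2: the correlation bound.} Apply \lemref{lem: corr_bound_matrix}. The sum is
\[
\sum_k b_k^2\rho^{2k}\le e^{-\norm{\bu}^2}\sum_k \frac{(\norm{\bu}\rho)^{2k}}{k!}=e^{-\norm{\bu}^2(1-\rho^2)}.
\]
Taking the square root yields the first claim exactly.

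\textbf{Step 3: the gradient bound.} Apply \lemref{lem: grad_bound2} with $m=1$ and $\norm{\nabla\sigma}\le G_1$. The first term is at most $G_1e^{-\norm{\bu}^2(1-\rho^2)/2}$ by Step 2. For the second term,
\[
\sum_k (k+1)b_{k+1}^2\rho^{2k}\le e^{-\norm{\bu}^2}\norm{\bu}^2\sum_k\frac{(\norm{\bu}\rho)^{2k}}{k!}=\norm{\bu}^2e^{-\norm{\bu}^2(1-\rho^2)}.
\]
Its square root gives $G_1\norm{\bu}e^{-\norm{\bu}^2(1-\rho^2)/2}$. Summing the two terms gives $G_1(1+\norm{\bu})e^{-\norm{\bu}^2(1-\rho^2)/2}$, as claimed.

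The only regularity to check is that $f_\bw,g\in\Fcal_d$, which follows from the boundedness of $\sigma'$ and $\sin$.
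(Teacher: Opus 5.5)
Your proof is correct and follows the paper's argument. You compute $b_k^2=e^{-\norm{\bu}^2}\norm{\bu}^{2k}/k!$ for odd $k$ (and $b_k=0$ for even $k$), then plug these into \lemref{lem: corr_bound_matrix} and \lemref{lem: grad_bound2} with $m=1$. The differences are cosmetic: the paper gets $b_k$ from the known cosine coefficients of \citet{song2021cryptographic} together with \lemref{lem:grad_shifts_hermite}, which is exactly your Stein-recursion alternative, and it writes the odd and even sums as $\sinh$ and $\cosh$ before bounding them by $\exp$, whereas you bound by the full exponential series directly.
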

\begin{proof}
    Let $g(\bx):= \sin(\bu^\top \bx)$ with corresponding Hermite coefficients $B_k$, then \lemref{lem: corr_bound_matrix} and \lemref{lem: grad_bound2} imply that
    \begin{align}\label{eq: per_loss}
        \abs{\E[f_\bw(\bx)g(\bx)]} \leq \norm{f_\bw(\bx)}_{\fsq} \sqrt{\sum_{k=0}^\infty \norm{B_k}_F^2 \cdot  \rho^{2k}},
    \end{align}
    and
    \begin{align}\label{eq: per_grad}
        \Big\|\E\left[\nabla_\bw f_\bw(\bx) g(\bx)\right]\Big\|_F
        \leq & G_1 \sqrt{\sum_{k=0}^\infty \norm{B_k}_F^2 \rho^{2k}} +  G_1 \sqrt{\sum_{k=0}^\infty (k+1)\norm{B_{k+1}}_F^2 \rho^{2k}}.
    \end{align}
    By \lemref{lem: si_hermite}, $B_k = b_k\left(\frac{\bu}{\norm{\bu}}\right)^{\otimes k}$ with $b_k:=\E_{x\sim \Ncal(0,1)}\left[\sin(\norm{\bu} x) H_k(x)\right]$. \citet[Lemma I.10]{song2021cryptographic} provide the Hermite coefficients for cosines, which together with \lemref{lem:grad_shifts_hermite} (that relates the expansion of a function to that of its derivative) implies,
    \begin{align*}
        b_k^2 = 
        \begin{cases}
            0 & k \text{ even} \\
            \frac{\norm{\bu}^{2k}}{k!} \exp\left(-\norm{\bu}^2\right) & k \text{ odd}.
        \end{cases}
    \end{align*}
    It follows that 
    \begin{align*}
        \sum_{k=0}^\infty \norm{B_k}_F^2 \cdot  \rho^{2k} 
        = &\sum_{k \text{ odd}} \frac{\norm{\bu}^{2k}}{k!} \exp\left(-\norm{\bu}^2\right) \rho^{2k} 
        = \sinh \left(\norm{\bu}^2\rho^2\right) \exp\left(-\norm{\bu}^2\right) \\ 
        \leq & \exp\left(-\norm{\bu}^2 \left(1 - \rho^{2} \right)\right), 
    \end{align*}
    where we used that $\sinh(x)\leq \exp(x)$. Likewise, 
    \begin{align*}
        \sum_{k=0}^\infty (k+1)\norm{B_{k+1}}_F^2 \cdot  \rho^{2k} 
        = & \sum_{k+1 \text{ odd}} \frac{\norm{\bu}^{2(k+1)}}{k!} \exp\left(-\norm{\bu}^2\right) \rho^{2k} \\
        = & \norm{\bu}^2 \sum_{k \text{ even}} \frac{\norm{\bu}^{2k}}{k!} \exp\left(-\norm{\bu}^2\right) \rho^{2k} \\
        = & \norm{\bu}^2\cosh \left(\norm{\bu}^2\rho^2\right) \exp\left(-\norm{\bu}^2\right) \\
        \leq & \norm{\bu}^2\exp\left(-\norm{\bu}^2 \left(1 - \rho^{2} \right)\right), 
    \end{align*}
    Plugging these back into \eqref{eq: per_loss} and \eqref{eq: per_grad} (and taking square roots) completes the proof.
\end{proof}

\periodic*
\begin{proof}
    First, notice that the correlation loss decomposes into individual neurons as
    \begin{align*}
        \ell(\theta_{t-1} \,;\, \bx_t) = \sum_{i=1}^m \Bigl[ - \sigma(\bw_{t,i}^\top ~ \bx_t) f^\star(\bx_t) \Bigr] = \sum_{i=1}^m \ell\left(\bw_{t,i} \,;\, \bx_t\right),
    \end{align*}
    where $ \ell\left(\bw_{t,i} \,;\, \bx_t\right)$ denotes the loss with respect to the predictor $\sigma(\bw_{t,i}^\top ~ \bx_t)$.
    As such, consider some arbitrary $i\in[m]$ and let $\bw_t:= \bw_{t,i}$ for all $t$. Naturally, we let $\Lcal(\bw_t):= \E[\ell\left(\bw_t \,;\, \bx_t\right)]$. We will prove that with probability at least $1-2\exp\left(-d^{1/3}\right)$,
    \begin{align*}
        \forall ~ t \leq \exp\left(d^{1/3}\right) ~,\qquad 
        \abs{\Lcal(\bw_t) - \Lcal(\bw_0)}
        \leq \left(\norm{\sigma(\bw_{t,i}^\top ~ \bx_t)}_{\fsq} + \norm{\sigma(\bw_{0,i}^\top ~ \bx_t)}_{\fsq}\right)\exp\left(-C d\right).
    \end{align*}
    The theorem follows by taking a union bound over all $i \in [m]$.
    
    We now show that the conditions of Theorem \ref{thm: main} are satisfied.
    First, note that the inputs and initialization satisfy Assumptions \ref{ass: subgauss_inputs}, \ref{ass: norm_conc}, \ref{ass: init} with $K_1, K_2, K_3, \alpha_2 = \Theta(1)$. 
    Next, $\abs{\sin(\bu^\top \bx)} \leq 1$, so \ref{ass: non_lin} implies that Assumption \ref{ass: bounded} is also satisfied with $G=G_1$. Furthermore,
    \[
    \norm{f^\star}_\fsq = \E[\sin(\bu^\top \bx)^2] = \frac{1}{2}\left(1-\exp(-2\norm{\bu}^2)\right) > \frac{1}{4},
    \]
    where the last inequality uses that $\norm{\bu}^2 =d \geq 1$.
    Next, let 
    \[
    \rho_t := \norm{P_{\bw^\top_t}P_{\bu^\top}}.
    \]
    Lemma \ref{lem: periodic} together with the choice of $\norm{\bu} = \sqrt{d}$ imply that
    \begin{align*}
        \norm{\nabla_{\bw} \Lcal(\bw_t)} \leq \psi\left(\rho_t\right) \qquad \text{where} \qquad \psi(\rho) := 2G_1\sqrt{d}\exp\left(-\frac{d}{2}\left(1-\rho^2\right)\right).
    \end{align*}
    Fix $0 < \alpha < 1$ to be decided later. Let $T := \exp\left(d^\alpha\right)$ and $\delta := \exp\left(-d^{\alpha}\right)$.
    For some constants $c, c' > 0$ that will be determined later (that may depend on $K_\sigma$, $G_1$), in order to apply \lemref{lem: kappa_bound} and \thmref{thm: main}, we need
    $d \geq c \log\left(\frac{Td}{\delta}\right)^2 = c\left(\log(d) + 2d^{\alpha}\right)^2$ and $\eta \leq \frac{c'}{\sqrt{d \log\left(T/\delta\right)}} = \frac{c'}{2d^{\alpha + 1/2}}$.
    So as a minimal requirement, we will pick $\alpha$ to be smaller than $1/2$. We also need the following inequality to hold for some constant $C>0$ that will be determined later (that may depend on $K_\sigma$, $G_1$; where we plugged in that in this particular problem setup $m=p=1$),
    \begin{align}\label{eq: t_condition}
        T \leq \frac{1}{\psi\left(C \sqrt{\frac{\log\left(\frac{Td}{\delta}\right)}{d}} \right)^2} ~.
    \end{align}
    With our choice of $T$, $\delta$ and $\psi$
    \begin{align*}
        (\star) := \frac{1}{T} \left \lfloor \left(\frac{1}{\psi\left(C \sqrt{\frac{\log\left(\frac{Td}{\delta}\right)}{d}} \right)}\right)^2 \right \rfloor
        = \frac{1}{\exp(d^\alpha)} \cdot \frac{\exp\left(\frac{d}{2}\left(1 - C^2 \frac{\log(d) + 2d^{\alpha}}{d}\right)\right)}{4G_1^2d} ~.
    \end{align*}
    When $\alpha < 1$, $\lim_{d\to\infty} \left(1 - C^2 \frac{\log(d) + 2d^{\alpha}}{d}\right) = 1$, and thus for sufficiently large $d$ it holds that the value in $(\star)$ is at least $1$ (meaning that \eqref{eq: t_condition} is satisfied, as needed). So take $\alpha = 1/3$.
    We have shown that the conditions of \lemref{lem: kappa_bound} are satisfied (for suitable constants), and with probability at least $1-\delta$ it follows that $\kappaT \leq 8K_\sigma$ (where we used the bounds on the norms of $\sin(\bu^\top \bx)$). 
    As such, the conditions of \thmref{thm: main} are also satisfied (again, up to possibly adjusting the constants), and thus \thmref{thm: main} implies that with probability at least $1-\exp(-d^{\alpha})$, for all $t\leq T$
    \begin{align*}
        \rho_t \leq C \sqrt{\frac{\log(Td/\delta)}{d}} \leq  C \sqrt{\frac{\log(d) + 2d^{\alpha}}{d}}.
    \end{align*}
    Note that for $\alpha<1$ this vanishes for large $d$. As such, by \lemref{lem: periodic} (also slightly adjusting the constant $C$ and using that $\norm{\sin(\bu^\top \bx)}_\fsq \leq 1$), this implies the desired bound on $\Lcal(\bw_t)$.
\end{proof}

\subsection{Multi-Index Bounds via the Information Exponent: Proof of \thmref{thm: mi_general}}\label{app: inf_exp}
\begin{lemma}\label{lem: mi_bounds_complete}
    Let $W\in\R^{m\times d}$, $U\in\R^{p\times d}$, and let $f_W(\bx)=\sigma(W\bx), g_U(\bx)=\phi(U\bx)$ where $f_W, g_U \in \Fcal_d$, $\sigma:\R^m\to\R$, $\phi:\R^p\to\R$, $g_U$ has information exponent $k_\star$, and $\norm{\nabla\sigma(W\bx)}_{L_\infty} \leq G_1$. Let $\rho := \|P_W P_U\|_{\op}$, then
    \begin{align*}
        \abs{\E\left[f_W(\bx)g_U(\bx)\right]} \leq \norm{f_W(\bx)}_{\fsq} \norm{g_U(\bx)}_{\fsq}\rho^{k_\star}.
    \end{align*}
    and 
    \begin{align*}
        \Big\|\E\left[\nabla_W f_W(\bx) g_U(\bx)\right]\Big\|_F
        \leq & G_1 \norm{\nabla g_U(\bx)}_{\fsq}\left(\sqrt{m}\rho + 1\right) \rho^{k_\star - 1}.
    \end{align*}
\end{lemma}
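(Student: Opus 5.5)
Both bounds follow by specializing \lemref{lem: corr_bound_matrix} and \lemref{lem: grad_bound2} to a target whose low-order Hermite coefficients vanish. Write $B_k := B_k(U)$ for the Hermite coefficient tensors of $g_U$. By the definition of the information exponent, $B_k = 0$ for all $1\le k<k_\star$. The coefficient $B_0=\E[g_U(\bx)]$ need not vanish, however, and I treat it first.

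\textbf{Handling the constant term.} For the first inequality, I expect the statement to be intended with $g_U$ centered, i.e.\ $B_0=0$, as holds in the applications (the correlation loss with a zero-mean target). If not, one can reduce to the centered case: replacing $g_U$ by $g_U-\E g_U$ changes $\E[f_W g_U]$ only by $\E f_W\,\E g_U$, and this term does not appear in the gradient difference that is actually used. For the gradient bound, the $k=0$ term is harmless. In \lemref{lem: grad_bound} the $M_1$ term at $k=0$ is $\E[g_U]\,\E[\nabla\sigma(W\bx)\bx^\top P_W]$. Since $B_0$ is multiplied by $\rho^0$, this is the one genuine subtlety, and I would either assume $\E g_U=0$ or note that the constant part of $g_U$ contributes only a $\theta$-independent shift to $\Lcal$. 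I will assume $B_0=0$, which matches the intended usage.

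\textbf{First bound.} With $B_k=0$ for all $k<k_\star$ and $\rho\le1$, \lemref{lem: corr_bound_matrix} gives
\[
\sum_k \norm{B_k}_F^2\rho^{2k} \le \rho^{2k_\star}\sum_k\norm{B_k}_F^2 = \rho^{2k_\star}\norm{g_U}_{\fsq}^2 .
\]
Taking square roots yields the first claim.

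\textbf{Gradient bound.} Start from \lemref{lem: grad_bound2}. The second term is $G_1\sqrt{\sum_k (k+1)\norm{B_{k+1}}_F^2\rho^{2k}}$. Its nonzero terms have $k+1\ge k_\star$, i.e.\ $k\ge k_\star-1$, so
\[
\sum_k (k+1)\norm{B_{k+1}}_F^2\rho^{2k} \le \rho^{2(k_\star-1)}\sum_k (k+1)\norm{B_{k+1}}_F^2 = \rho^{2(k_\star-1)}\norm{\nabla g_U}_{\fsq}^2 ,
\]
where the equality is \lemref{lem:grad_shifts_hermite} summed over $k$ together with Parseval for the vector-valued function $\nabla g_U$. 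The first term is $G_1\sqrt{m\sum_k\norm{B_k}_F^2\rho^{2k}}\le G_1\sqrt m\,\rho^{k_\star}\big(\sum_{k\ge k_\star}\norm{B_k}_F^2\big)^{1/2}$. To express this through $\norm{\nabla g_U}_{\fsq}$, use $\norm{B_k}_F^2\le k\norm{B_k}_F^2$ for $k\ge1$. This gives $\sum_{k\ge1}\norm{B_k}_F^2\le\sum_{k\ge0}(k+1)\norm{B_{k+1}}_F^2=\norm{\nabla g_U}_{\fsq}^2$, so the first term is at most $G_1\sqrt m\,\rho\cdot\rho^{k_\star-1}\norm{\nabla g_U}_{\fsq}$.

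Adding the two terms gives $G_1\norm{\nabla g_U}_{\fsq}(\sqrt m\rho+1)\rho^{k_\star-1}$, which is the claimed bound. The main obstacle is the constant Hermite term $B_0$ discussed above; everything else is bookkeeping with Hermite sums.
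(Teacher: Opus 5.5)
Your proof is correct and is essentially the paper's own argument. Like the paper, you specialize \lemref{lem: corr_bound_matrix} and \lemref{lem: grad_bound2}, drop the vanishing low-order Hermite terms using $\rho\le 1$, and pass to $\norm{\nabla g_U}_{\fsq}$ via \lemref{lem:grad_shifts_hermite} together with $\norm{B_k}_F^2\le k\norm{B_k}_F^2$.

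Your caution about $B_0$ is justified. The paper's proof tacitly sets $B_0=\zero$ by asserting $B_k=\zero$ for all $k<k_\star$, and the lemma genuinely fails otherwise (take $f_W\equiv 1$, $\E g_U\neq 0$ and $\Row(W)\perp\Row(U)$, so that $\rho=0$). However, drop your proposed reductions to the centered case. The constant $c=\E g_U$ contributes $c\,\E[f_W(\bx)]$, which depends on $W$, so it is not a $\theta$-independent shift. Its gradient $c\,\E[\nabla\sigma(W\bx)\bx^\top]$ is exactly the uncontrolled $k=0$ term. So $\E g_U=0$ has to be stated as an explicit hypothesis.
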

\begin{proof}
    First, by the definition of $k_\star$, for all $k<k_\star$ it holds that $B_k(U)=\zero$. So by \lemref{lem: corr_bound_matrix} and using that $\rho \in [0 , 1]$,
    \begin{align}
        \abs{\E[f_\bw(\bx)g(\bx)]} \leq \norm{f_\bw(\bx)}_{\fsq} \sqrt{\sum_{k=0}^\infty \norm{B_k}_F^2 \cdot  \rho^{2k}} \leq \norm{f_W(\bx)}_{\fsq} \norm{g_U(\bx)}_{\fsq}\rho^{k_\star}.
    \end{align}
    For the second part of the theorem, by \lemref{lem: grad_bound2} and the fact that $\rho \in [0, 1]$,
    \begin{align*}
        \Big\|\E\left[\nabla_W f_W(\bx) g_U(\bx)\right]\Big\|_F
        \leq &  G_1\sqrt{m\sum_{k=0}^\infty \norm{B_k(U)}_F^2 \rho^{2k}} +  G_1 \sqrt{\sum_{k=0}^\infty (k+1)\norm{B_{k+1}(U)}_F^2 \rho^{2k}} \\ 
        = & G_1\sqrt{m\sum_{k=k_\star}^\infty \norm{B_k(U)}_F^2 \rho^{2k}} +  G_1 \sqrt{\sum_{k=k_\star - 1}^\infty (k+1)\norm{B_{k+1}(U)}_F^2 \rho^{2k}} \\ 
        \leq & G_1\sqrt{m\sum_{k=k_\star}^\infty \norm{B_k(U)}_F^2} \cdot \rho^{k_\star} +  G_1 \sqrt{\sum_{k=k_\star}^\infty k\norm{B_{k}(U)}_F^2} \cdot \rho^{k_\star - 1} \\
        \leq & G_1 \left(\sqrt{m}\rho + 1\right) \sqrt{\sum_{k=k_\star}^\infty k\norm{B_{k}(U)}_F^2} \cdot \rho^{k_\star - 1}.
    \end{align*}
    Using that 
    by \lemref{lem:grad_shifts_hermite} $\sum_{k=1}^\infty k\|B_{k}(U)\|_F^2 = \|\nabla g_U(\bx)\|_{L_2}^2$, we conclude
    \[
        \Big\|\E\left[\nabla_W f_W(\bx) g_U(\bx)\right]\Big\|_F
        \leq G_1\left(\sqrt{m}\rho + 1\right)\|\nabla g_U(\bx)\|_{L_2}\,\rho^{k_\star-1}.
    \]
\end{proof}

\migeneral*
\begin{proof}
    First, we verify the assumptions needed for \thmref{thm: main}. Assumptions \ref{ass: inputs_all} and \ref{ass: init} for the assumed Gaussian init and inputs with constants $K_1, K_2, K_3, \alpha_2 = \Theta(1)$. For Assumption \ref{ass: bounded}, we note that $\nabla_{W\bx} \ell = -\nabla_{W\bx} h(W\bx ; \bar\theta) \cdot f^\star(\bx)$. Thus $\norm{\nabla_{W\bx} \ell}_2 \leq G_1 G_2$. We set $G := G_1 G_2$.

    Next, we apply Lemma \ref{lem: mi_bounds_complete}. It implies that for any $\theta$,
    \begin{align}\label{eq: psi_bound}
        \norm{\nabla_{W} \Lcal(\theta)}_F \leq \psi\left(\norm{P_W P_U}_\op\right)
        \qquad \text{where} \qquad
        \psi(r) := G_1 \norm{\nabla f^\star}_{L_2}\left(\sqrt{m}r + 1\right) r^{k_\star - 1}.
    \end{align}
    We divide the analysis into two cases.

    \paragraph{Case 1: $k_\star > 1$.} Let 
    \begin{align*}
        d_0 := & C_1 \bar \kappa m^2 p^2 k_\star^2\epsilon^{-2/k_\star} \log\left(\frac{C_1 \bar \kappa mpk_\star}{\epsilon\delta}\right)^2 ~, \\ 
        T := & \frac{c_1}{1 + G_1^2\norm{\nabla f^\star(\bx)}_{\fsq}^2}\left(\frac{d}{C \bar\kappa mp k_\star\log\left(\frac{dp}{\delta}\right)}\right)^{k_\star - 1}, \\ 
        \eta_0 = & \frac{c_2}{\bar \kappa^2 \sqrt{mdk_\star \log\left(\frac{C_1 dp}{\delta}\right)}} ~,
    \end{align*}
    where $0 < c_1, c_2 < 1$ are small constants and $C, C_1 > 1$ are large constants that will be determined throughout the proof, all of which may depend on $G$. Note that 
    \begin{align}\label{eq: log_t_bound}
        \log(T) \leq \log(d^{k_\star - 1}) \leq k_\star \log(d). 
    \end{align}
    
    To apply, \thmref{thm: main} we need to ensure for some constants $C',c'>0$ that may depend on $G$, that $d \geq C' \bar\kappa^2 m^2 \log\left(\frac{Tdp}{\delta}\right)^2$ and $\eta \leq \frac{c'}{\bar\kappa^2\sqrt{md \log\left(Tdp/\delta\right)}}$. 
    Using \eqref{eq: log_t_bound} and the choice of $d_0$ and $\eta_0$, these are satisfied for suitable $c_1,c_2,C,C_1$ (again, these only depend on $G$). Another condition of \thmref{thm: main} is that for some $C_2>0$ that may depend on $G$,
    \begin{align}\label{eq: mi_T1}
        1 \leq & \frac{1}{T}\cdot \frac{1}{\psi\left(C_2 \sqrt{\frac{\bar \kappa mp \log\left(\frac{Tdp}{\delta}\right)}{d}} \right)^2} \\ 
        = & \frac{1}{T}\cdot \frac{d^{k_\star - 1}}{G_1^2 \norm{\nabla f^\star(\bx)}_{\fsq}^2 \left(1 + C_2\sqrt{\frac{\bar \kappa m^2p \log\left(\frac{Tdp}{\delta}\right)}{d}}\right)^2 \left(C_2^2 \bar \kappa mp \log\left(\frac{Tdp}{\delta}\right)\right)^{k_\star - 1}} \nonumber.
    \end{align}
    By the choice of $d_0$ it holds that $\sqrt{\frac{\bar \kappa m^2p \log\left(\frac{Tdp}{\delta}\right)}{d}} \leq 1$. Using this, \eqref{eq: log_t_bound} and our choice of $T$, 
    \begin{align*}
        \frac{1}{T}\cdot \frac{1}{\psi\left(C_2 \sqrt{\frac{\bar \kappa mp \log\left(\frac{Tdp}{\delta}\right)}{d}} \right)^2} 
        \geq & \frac{1}{G_1^2 \norm{\nabla f^\star(\bx)}_{\fsq}^2 \left(1+C_2\right)^2} \cdot \frac{1}{T} \left(\frac{d}{2C_2^2\bar \kappa mpk_\star \log\left(\frac{dp}{\delta}\right)}\right)^{k_\star - 1} \\
        \geq&_{(*)} \frac{C^{k_\star - 1}}{c_1 \left(1+C_2\right)^2 (2C_2)^{k_\star - 1}} \geq_{(**)} 1,
    \end{align*}
    For $(*)$, $T$, and for $(**)$, we took $c_1$ sufficiently small and $C$ sufficiently large, and we note that they only need to depend on $C_2$ for this to hold (and thus on $G$).
    
    We have shown that all the conditions for \thmref{thm: main} hold, and as such, it implies that for some constant $C_3>0$, conditioned on $\kappaT \leq \bar \kappa$, with probability at least $1-\delta$ it holds that
    \begin{align*}
        \forall t\leq T , \qquad , \quad \norm{P_{W_t} P_U}_\op \leq C_3 \sqrt{\frac{\bar \kappa mp \log\left(dp\right)}{d}} \leq \epsilon^{1/k_{\star}}~,
    \end{align*}
    where the last inequality follows from the choice of $d_0$. 
    Plugging this into \lemref{lem: mi_bounds_complete} implies the desired bound on $\Lcal(\theta_t)$. 

    \paragraph{Case 2: $k_\star = 1$.}
    Note that the choice of $T$ is suboptimal when $k_\star = 1$. If $k_\star=1$, we instead use \propref{prop: d_bound}, so that under the same event and probability as \thmref{thm: main}, and under the same choice of $d_0$, $\eta_0$ for $T' = \left\lfloor \frac{c_1 d\epsilon^2}{p}\right \rfloor$ where $c_1>0$ is sufficiently small and may depend on $G$, it holds that
    \begin{align*}
        \forall t\leq T' , \qquad , \quad \norm{P_{W_t} P_U}_\op \leq C \sqrt{\frac{\bar \kappa mp \log\left(dp\right)}{d}} \leq \epsilon = \epsilon^{1 / k_{\star}}~.
    \end{align*}
    Once again, the theorem is complete by applying \lemref{lem: mi_bounds_complete}.
\end{proof}

\paragraph{Example: Product of Functions}\label{app: prod}
Let $\phi$ be a smooth, bounded and mean-zero function (e.g. $\tanh$) and $\{\bu_i\}_{i=1}^p$ be orthonormal vectors. Consider the function $f^\star(\bx) := \prod_{i=1}^p \phi(\bu_i^\top \bx)$. Then using the fact that $\bu_i^\top \bx$ are i.i.d. Gaussian, it holds for any $k \leq p$
    \begin{align*}
        & \E\left[\nabla^k f^\star(\bx)\right] \\
        = & \E\left[\sum_{\{j_1,\ldots, j_k\} \subseteq [p]} \left(\prod_{i \notin \{j_1,\ldots, j_k \}} \phi(\bu_i^\top \bx)\right) \left(\prod_{i \in \{j_1,\ldots, j_k \}} \phi'(\bu_i^\top \bx)\right) \bu_{j_1} \otimes \ldots \otimes \bu_{j_k}\right] \\
        = & \sum_{\{j_1,\ldots, j_k\} \subseteq [p]} \left(\prod_{i \notin \{j_1,\ldots, j_k \}}\E\left[ \phi(\bu_i^\top \bx)\right]\right) 
        \left(\prod_{i \in \{j_1,\ldots, j_k \}}\E\left[ \phi'(\bu_i^\top \bx)\right]\right) 
        \bu_{j_1} \otimes \ldots \otimes \bu_{j_k} \\
        = & \begin{cases}
            0 & k \neq p \\
            \left(\E_{G\sim \Ncal(0,1)}[\phi'(G)]\right)^p \sum_{\pi \in S_p} \bu_{\pi(1)} \otimes \ldots \otimes \bu_{\pi(p)} & k=p
        \end{cases}.
    \end{align*}
    So the first $p-1$ Hermite tensors vanish.

\subsection{Single-Index Bounds via the Information Exponent: Proof of \thmref{thm: si}}\label{app: si_inf_exp}
\singleidx*

\begin{proof}
    As in the proof of \thmref{thm: periodic}, notice that the correlation loss decomposes into individual neurons as
    \begin{align*}
        \ell(\theta_{t-1} \,;\, \bx_t) = \sum_{i=1}^m \Bigl[ - \sigma(\bw_{t,i}^\top ~ \bx_t) f^\star(\bx_t) \Bigr] = \sum_{i=1}^m \ell\left(\bw_{t,i} \,;\, \bx_t\right),
    \end{align*}
    where $ \ell\left(\bw_{t,i} \,;\, \bx_t\right)$ denotes the loss with respect to the predictor $\sigma(\bw_{t,i}^\top ~ \bx_t)$.
    As such, consider for now some arbitrary $i\in[m]$ and let $\bw_t:= \bw_{t,i}$ for all $t$. Naturally, we let $\Lcal(\bw_t):= \E[\ell\left(\bw_t \,;\, \bx_t\right)]$. We will now apply \thmref{thm: mi_general} to the predictor $\sigma(\bw_{t}^\top ~ \bx)$, and afterwards take a union bound over all $m$. Following \thmref{thm: mi_general}, the $k^\star = 1$ case is trivial, so it suffices to consider $k^\star > 1$.

    Throughout the proof, $\Theta$ hides constants related to $G_1, G_2$, $\norm{f^\star}_\fsq$ and $\norm{\nabla f^\star}_{L_2}$. Unlike in the theorem statement, we will make here explicit the role of $K_\sigma$.
    Let $\bar \kappa := \frac{2K_\sigma G_2^2}{\norm{f^\star}_{\fsq}^2}$ and consider the values of $d_0, \eta_0$ and $T$ as in \thmref{thm: mi_general}, where $d_0=\tilde{\Theta}\left(\bar \kappa^2 k_\star^2 \epsilon^{-2/k_\star}\right)$, $\eta_0 = \tilde{\Theta}\left(\bar \kappa^{-2}(dk_\star)^{-1/2}\right)$ and 
    \[
    T = \left(\frac{d}{\tilde \Theta\left(\bar\kappa k_\star\right)}\right)^{k_\star - 1} + \Theta\left(d\epsilon^2\right).
    \]
    Up to possibly increasing the constants, under these choices, the conditions of \lemref{lem: kappa_bound} are satisfied, implying that with probability at least $1-\delta/2$, $\kappaT \leq \bar \kappa$. As such, by \thmref{thm: mi_general} (with $p=1$, the value of $m$ in that theorem being $1$), for any $d\geq d_0' $ and $\eta \leq \eta_0'$ it holds with probability at least $1-\delta/2$ that
    \begin{align}\label{eq: si_bound_gen}
        \forall \,t \, \leq T
        ~,\qquad 
        \abs{\Lcal(\bw_t) - \Lcal(\bw_0)} 
        \leq \norm{f^\star}_{\fsq}\left(\norm{\sigma(\langle \bw_{t, i} \,, \,\bx \rangle)}_{\fsq} + \norm{\sigma(\langle \bw_{0, i} \,, \,\bx \rangle)}_{\fsq}\right) \cdot\epsilon.
    \end{align}
    Since
    \[
    \abs{\Lcal(\theta_t) - \Lcal(\theta_0)} \leq \sum_{i=1}^m \abs{\Lcal(\bw_{t, i}) - \Lcal(\bw_{0, i})},
    \]
    applying \eqref{eq: si_bound_gen} to every $i\in[m]$ and taking a union bound completes the proof (where the polylogarithmic dependence on $1/\delta$ in \eqref{eq: si_bound_gen} implies a polylogarithmic dependence on $m$).
\end{proof}
\end{document}